%% file: main.tex
\documentclass{article}

 \usepackage[preprint]{neurips_2026}

\usepackage[utf8]{inputenc} % allow utf-8 input
\usepackage[T1]{fontenc}    % use 8-bit T1 fonts
\usepackage{hyperref}       % hyperlinks
\usepackage{url}            % simple URL typesetting
\usepackage{booktabs}       % professional-quality tables
\usepackage{amsfonts}       % blackboard math symbols
\usepackage{nicefrac}       % compact symbols for 1/2, etc.
\usepackage{microtype}      % microtypography
\usepackage{xcolor}         % colors

\input{math.tex}

\usepackage{wrapfig}
\usepackage{graphicx}
\usepackage{bbm}
\usepackage{float}
\usepackage{algorithm}
\usepackage{algpseudocode}
\setcitestyle{notesep={; }, round}

\usepackage{amsmath}
\usepackage{xcolor}
\usepackage{listings}

\definecolor{codebg}{RGB}{240, 240, 240}
\definecolor{keywordcolor}{RGB}{0, 102, 204}
\definecolor{commentcolor}{RGB}{128, 128, 128}
\definecolor{stringcolor}{RGB}{0, 128, 0}

\usepackage{amsmath}
\usepackage{amssymb}
\usepackage{mathtools}
\usepackage{amsthm}

\usepackage{amssymb}
\usepackage{mathrsfs}
\usepackage{comment}
\usepackage[mathscr]{eucal}

\usepackage{booktabs}
\usepackage{xcolor}
\usepackage{colortbl}
\usepackage{array}
\usepackage{subcaption}
\usepackage{caption}

\usepackage{tikz}

\DeclareRobustCommand{\circled}[1]{%
  \tikz[baseline=(char.base)]{
    \node[shape=circle,draw,inner sep=.5pt] (char) {#1};
  }%
}

\definecolor{neelam}{HTML}{001473}

\usepackage[capitalize,noabbrev]{cleveref}

\theoremstyle{plain}
\newtheorem{theorem}{Theorem}[section]

\newtheorem{lemma}[theorem]{Lemma}

\theoremstyle{definition}
\newtheorem{definition}[theorem]{Definition}

\theoremstyle{remark}

\usepackage[textsize=tiny]{todonotes}

\usepackage{xcolor}    % For coloring text
\usepackage{ifthen}    % For conditional commands
\usepackage{textcomp}  % Optional, for nice symbols in comments
\usepackage{colortbl}

\usepackage{booktabs}
\usepackage{multirow}

\usepackage{pdflscape}
\usepackage{booktabs}
\usepackage{siunitx}
\usepackage{colortbl}
\usepackage[table]{xcolor}

\newboolean{showcomments}
\setboolean{showcomments}{true}  % set to false to hide comments

\newcommand{\hmmm}[3][violet]{%
  \ifthenelse{\boolean{showcomments}}
    {%
      \textcolor{#1}{#2}% colored text
      {   \small \textit{\textcolor{brown}{[#3]}}}% comment in brackets
    }
    {#2}% only text if comments are off
}

\newcommand{\domainavg}[1]{\textsc{#1} \textit{(avg)}}
\definecolor{lightgray}{gray}{0.95}  % adjust 0.95 for lighter/darker

\newcommand{\ValueApproxName}{Value Approximation Error Bound}

\title{The Laplacian Keyboard: Beyond the Linear Span}

\author{%
  Siddarth Chandrasekar \hspace{10pt} Marlos C. Machado\textsuperscript{*} \\
  Department of Computing Science, University of Alberta, Canada\\
  Alberta Machine Intelligence Institute (Amii)\\
  \textsuperscript{*}Canada CIFAR AI Chair \\
  \texttt{siddarthc2000@gmail.com, machado@ualberta.com} \\
}

\begin{document}

\maketitle

\begin{abstract}

Across scientific disciplines, Laplacian eigenvectors serve as a fundamental basis for simplifying complex systems, from signal processing to quantum mechanics. In reinforcement learning (RL), they similarly form a basis over the state space, enabling reward functions to be approximated by projection onto a small set of eigenvectors. This projection makes zero-shot control possible, but it also imposes a fundamental limitation: the induced policies are only as expressive as the linear span of the chosen eigenvectors. We introduce the Laplacian Keyboard (LK), a hierarchical framework that goes beyond this linear span. LK constructs a task-agnostic library of behaviors from these eigenvectors, forming a behavior basis guaranteed to contain the optimal policy for any reward within the linear span. A meta-policy learns to stitch these behaviors dynamically, enabling efficient learning of policies outside the original linear constraints. We establish theoretical bounds on zero-shot approximation error and demonstrate empirically that LK improves over the zero-shot solution while achieving better sample efficiency compared to standard RL methods. \looseness=-1
 
\looseness=-1
\end{abstract}

\section{Introduction}

\textit{Primary causes are unknown to us; but are subject to simple and constant laws, which may be discovered by observation, the study of them being the object of natural philosophy. \quad --- J Fourier}

This observation from \citet{fourier1878analytical} highlights a recurring objective in science: \textit{finding simple laws that explain complex phenomena}. Across disciplines, this often involves decomposing systems into fundamental components that reveal underlying structure. Fourier's analysis of heat diffusion demonstrated that temperature evolution can be expressed as independent modes, each an eigenfunction of the Laplacian operator with its own decay rate. Similarly, the Fourier transform decomposes signals into sinusoidal components---also Laplacian eigenvectors---enabling efficient filtering and compression. These examples share a common thread: the eigenvectors of the Laplacian provide a natural basis that simplifies analysis and computation. In this work, we explore this principle in RL, \textit{showing that the eigenvectors of the graph Laplacian form a principled basis for RL---and that hierarchically composing them enables an agent to go beyond the linear span of the basis.}

Prior work has explored bases that can be linearly composed to efficiently solve new tasks, including bases over value functions \citep{mahadevan2005proto, bellemare2019geometric, farebrotherproto}, policies \citep{agarwal2025proto}, options \citep{barreto2019option, alegre2025constructing} and state visitation distributions \citep{blier2021learning, touati2022does}. These methods derive their bases from two seemingly distinct perspectives: \textit{representations} and \textit{behaviors}. \looseness=-1

\begin{figure*}[t]
\centerline{\includegraphics[width=\textwidth]{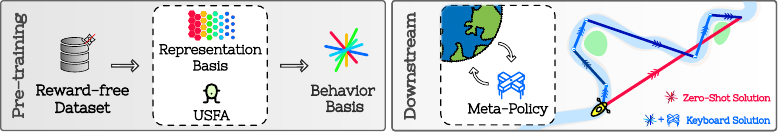}}
\caption{
Overview of the Laplacian Keyboard.
(\emph{Left}) A reward-free dataset is used to learn a Laplacian representation basis and a USFA, inducing a behavior basis---depicted here as straight-line trajectories in the environment.
(\emph{Right}) In a downstream navigation task, a zero-shot policy selects a single behavior and executes it throughout the episode, yielding a potentially suboptimal trajectory (red). The LK meta-policy instead sequences over behaviors, composing a piecewise trajectory (blue) that better matches the task.
\looseness=-1}
\vspace{-15pt}
\label{fig:lk_dumb}
\end{figure*}

% A reward-free dataset is used to learn a representation basis and an USFA, inducing a behavior basis for downstream control (illustrated here as straight-line behaviors). Given a downstream task, a meta-policy is trained to select and stitch these behaviors.
% While the zero-shot policy is restricted to a single linear combination of the basis and may be suboptimal, the Laplacian Keyboard composes multiple basis elements sequentially, producing a piecewise trajectory that more closely approximates the optimal behavior.

Representation-based methods learn features to approximate task-relevant quantities. Proto-Value Functions \citep[PVF;][]{mahadevan2005proto}, an early RL example of this principle, employs graph Laplacian eigenvectors as a basis for value function approximation. Given a new task, the agent learns the linear weights that combine these eigenvectors to approximate the optimal value function. More recently, Forward-Backward representations \citep[FB;][]{touati2022does} learn a low-rank representation of the successor measure \citep{blier2021learning} of a policy class, which is then used to approximate reward functions. For any new task, the agent infers the weights that best approximate the corresponding reward function for this basis. If the learned representation spans the reward function, FB produces the optimal policy with no further learning involved. This enables FB to produce \textit{zero-shot} solutions. However, this zero-shot solution is constrained by the linear span of the learned features---the policy can be suboptimal when the task lies outside this span. \looseness = -1

The second category learns a basis of behaviors \citep[or options;][]{sutton1999between} rather than features. Given a new task, these methods use generalized policy improvement \citep[GPI;][]{barreto2017successor} by evaluating each option in the basis and combining them using linear weights, guaranteeing performance at least as good as that of the best individual option, and often better. The Option Keyboard framework~\citep{barreto2019option} extends this idea by learning state-dependent weights, enabling the agent to dynamically \textit{stitch} together options across states. This allows the agent to solve a broader class of tasks than is possible with a single fixed combination of the behavior basis. However, a central limitation is that the choice of the option basis is not well defined and typically relies on manual specification of either the options or their associated reward functions. \looseness = -1

In this work, we bridge representation- and behavior-based approaches through the Laplacian Keyboard (LK), a framework that exploits the coupling between graph Laplacian eigenvectors and option discovery. While prior work has established that these eigenvectors induce task-agnostic behaviors \citep{machado2017laplacian, machado2023temporal}, we show how they also form a principled basis for reward approximation. LK combines eigenvectors learned from an offline, reward-free dataset with Universal Successor Feature Approximators \citep[USFA;][]{borsauniversal}, guaranteeing optimal policies for any task whose reward lies within the span of the learned Laplacian eigenvectors. \looseness=-1

Many downstream tasks, however, fall outside this regime. To address this limitation, LK introduces efficient adaptation via sequential composition (see Figure~\ref{fig:lk_dumb}). Rather than relying on a single linear combination of basis elements, LK decomposes complex tasks into simpler subproblems. Each USFA-induced policy becomes a reusable option in a library of task-agnostic behaviors, and a meta-policy learns to select and stitch these options efficiently. This hierarchical structure enables LK to solve tasks whose reward functions cannot be represented within the basis span. By unifying zero-shot optimality within the basis with efficient composition beyond it, LK overcomes key limitations of both representation- and behavior-based methods.

In summary, this work makes two contributions. (1) Our primary contribution is the \emph{Laplacian Keyboard}, a hierarchical framework that composes Laplacian-induced behaviors via a learned meta-policy. This allows the agent to go beyond the linear span of the representation, yielding improvements over zero-shot baselines while maintaining strong sample efficiency relative to standard RL agents. (2) We complement this with an analysis of Laplacian eigenvectors as a reward basis, providing empirical validation and theoretical value-function approximation error bounds that characterize the expressivity of a finite Laplacian basis.

\section{Background}

We first formalize the problem addressed by LK and then review related approaches that tackle the same problem by learning a basis of representations or behaviors. We use calligraphic capitals for sets (e.g., $\mathscr{S}, \mathscr{A}$), capitals for random variables (e.g., $S_t, A_t$), bold capitals for matrices (e.g., $\mathbf{L}, \pp$), non-bold lowercase for functions (e.g., $\pi, r$), and bold lowercase for vectors (e.g., $\mathbf{w}, \ei$). Functions over the state or action space are often represented in vectorized form by indexing their values, denoted using bold notation (e.g., $\mathbf{r}$). \looseness=-2

\subsection{Problem Definition}

In RL, the agent-environment interaction is modeled as a Markov Decision Process (MDP) $M = (\mathscr{S}, \mathscr{A}, p, r, \gamma)$, with state space $\mathscr{S}$, action space $\mathscr{A}$, transition dynamics $p(\cdot\mid s,a)$, and discount factor $\gamma$. In this work, we define the reward $r$ as a function of the next state. Specifically, for any transition $(s, a, s')$, the state-based reward is given by $r(s') = \sum_{s,a} p(s' \mid s, a)\, r(s, a, s')$. The goal is to learn a policy $\pi:\mathscr{S} \to \Delta(\mathscr{A})$ that maximizes discounted expected return, expressed via the action-value function $q_\pi(s,a) = \mathbb{E}_\pi\Big[\sum_{t=0}^\infty \gamma^t r(S_{t+1}) \mid S_0=s, A_0=a \Big]$, where $S_{t+1} \sim p(\cdot\mid S_t, \pi(S_t))$. \looseness=-1

In this work, we aim to learn a basis of behaviors that an agent can employ to output near-optimal policies for any given reward function in a sample-efficient manner. Similar to \citet{touati2022does} and \citet{agarwal2025proto}, we formulate this objective as a two-phase process, commonly referred to as \textit{Unsupervised RL}. In the pre-training phase, the agent interacts with the environment for $M$ steps to learn the basis without access to any reward function. In the subsequent downstream phase, tasks are specified through reward functions, and the agent must leverage its prior knowledge to solve these tasks within $N$ steps, where $N \ll M$. \looseness = -1

\subsection{Solution Methods}

We provide a brief overview of related unsupervised RL methods, organized by whether they use \textit{representations} or \textit{options} as the underlying basis. A more detailed review is given in Appendix~\ref{sec:extended-related-work}.

\textbf{Representation-based methods} learn a state representation $\phi:\mathscr{S} \rightarrow \mathbb{R}^d$ of dimension $d$ during pre-training, defining a representation basis to approximate downstream rewards as:
\[
r(s) \approx \mathbf{w}^\top \bp(s), \quad \forall s \in \mathscr{S}.
\]
Simultaneously, a USFA is used to train a policy $\pi(a \mid s, \mathbf{w})$ \citep{borsauniversal}. In the downstream phase, once a task is specified, a weight vector $\mathbf{w}$ is estimated from $N$ samples via simple linear regression to approximate the reward function, which parameterizes the USFA. When the reward function lies exactly within the span of $\bm{\phi}$, the USFA recovers the optimal policy, making the choice of $\bm{\phi}$ and the USFA training procedure critical (see \citet{touati2022does, borsauniversal, ollivier2025features} for further discussion). This quick adaptation using only a few samples via linear regression is referred to as \textit{zero-shot} learning by \citet{touati2022does}. \looseness=-1

Methods like Forward-Backward Representations \citep{touati2022does} and Proto Successor Measure \citep{agarwal2025proto} model $\bm{\phi}$ as a low-rank representation of the successor measure of a class of policies. These approaches have been shown to scale to large state and action spaces with minimal human prior \citep{tirinzoni2025zero}. The primary limitation of these approaches is expressivity: the linear span of $\bm{\phi}$ must be rich enough to approximate all potential downstream rewards, often requiring a high-dimensional $d$ to cover a diverse set of complex tasks.\looseness=-1

\textbf{Option-based methods} take a complementary approach by learning a reusable option basis that can be linearly combined to solve new tasks. Most methods assume access to reward features $\phi:\mathscr{S} \rightarrow \mathbb{R}^d$, which define this basis: each option is trained to be optimal for a reward induced by these features. For downstream rewards of the form $r(s)=\mathbf{w}^\top \bm{\phi}(s)$, options can be combined via General Policy Evaluation and Improvement \citep[GPE \& GPI;][]{barreto2020fast}, yielding a policy at least as good as any individual option. We refer to \citet{borsauniversal} for details on GPE \& GPI and its connection to the USFA framework. \looseness=-1

The Option Keyboard (OK) framework \citep{barreto2019option} generalizes this approach by learning a meta-policy $\pi_{\text{OK}}$ that composes options using state-dependent weights at execution time, rather than a single fixed combination, enabling solutions to a broader class of tasks \citep{chandrasekar2025towards}. However, OK has two key limitations: (1) it relies on manually specified reward features $\bm{\phi}$, and (2) lacks a principled procedure for constructing the option basis. While \citet{alegre2025constructing} partially addresses the latter, dependence on handcrafted features remains a central limitation when such features are difficult to specify. We discuss this work further in Section~\ref{sec:exp_alegre}.

\section{The Laplacian Basis}

\begin{figure}[t]
% \begin{center}
\centerline{\includegraphics[width=.7\columnwidth]{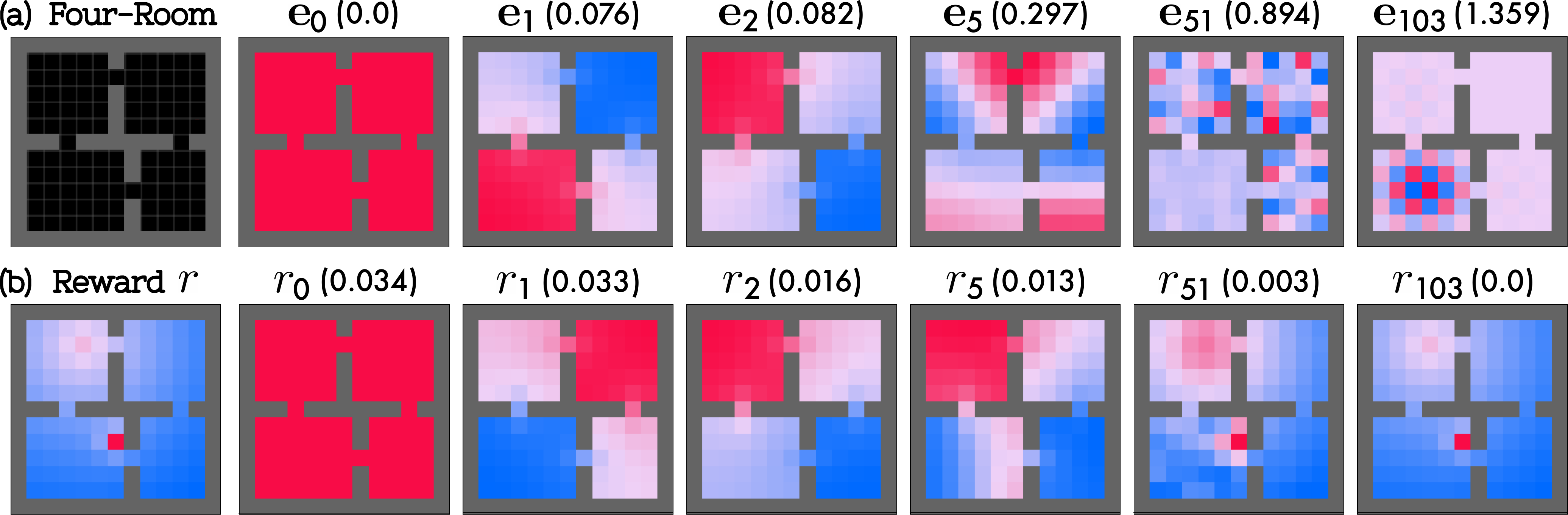}}
\caption{Illustration of the Laplacian basis in the \texttt{Four-Room} environment \citep{sutton1999between}. Color intensity reflects the magnitude of each value; scales are normalized per subplot. \textbf{(a)} Selected graph Laplacian eigenvectors under a uniform random policy; parenthetical values indicate the eigenvector graph norm (Eqn.~\ref{eqn:grap-norm}). \textbf{(b)} A sample reward function and its reconstruction using the first $k$ eigenvectors; parenthetical values denote mean squared reconstruction error.}
\label{fig:eigvecs}
\end{figure}

We motivate the use of graph Laplacian eigenvectors as a basis for approximating reward functions. Any MDP induces an undirected graph with states as vertices and transitions as edges (see Appendix~\ref{apdx:gft_bg} for an in-depth background). In the discrete case, the graph Laplacian is defined as $\mat{L} \, \dot{=} \, \mat{I} - f_{\text{sym}}(\pp)$, where $\pp(s, s') = \sum_{a \in \mathscr{A}} \pi(a|s) p(s'|s, a)$ denotes the transition probability under policy $\pi$. Function $f_{\text{sym}}:\mathbb{R}^{|\mathscr{S}| \times |\mathscr{S}|} \rightarrow \text{Sym}_{|\mathscr{S}|}(\mathbb{R})$ symmetrizes $\pp$ to yield an undirected graph, and ensures the resulting matrix is doubly stochastic. Let $\ei_i$ denote the $i$-th eigenvector of $\mat{L}$, ordered by eigenvalue. These eigenvectors encode state-space structure naturally: smaller eigenvalues correspond to smoother eigenvectors, ranging from the constant $\ei_0$ to progressively more oscillatory patterns (see Figure~\ref{fig:eigvecs}a).

To formalize this notion of smoothness, we rely on the \textit{graph norm} \citep{zhu2012approximating}; for any function $f: \mathscr{S} \to \mathbb{R}$ defined over the state space:
\begin{equation}
    \|f\|_G \, \dot{=} \, \left[ \tfrac{1}{2} \sum\nolimits_{(i,j) \in \mathscr{S} \times \mathscr{S}} \pp(i, j) (f(i) - f(j))^2\right]^{\frac{1}{2}}.
    \label{eqn:grap-norm}
\end{equation}
The graph norm captures the local variations weighted by the transition probabilities, smaller values indicating that the signals vary smoothly across connected states (see Figure~\textcolor{neelam}{\ref{fig:eigvecs}a}). \looseness = -1

The Laplacian representation is the mapping $\phi: \mathscr{S} \rightarrow \mathbb{R}^k$ ($0 < k < |\mathscr{S}|$), defined by $\bp(s) = [\ei_0[s], \ei_1[s], \dots, \ei_k[s]]$, where $\ei_i[s]$ denotes the $s$-th entry of eigenvector $\ei_i$. The eigenvectors of $\mat{L}$ form an orthonormal basis for the state-space function class. Stacking representations across all states yields the \textit{Laplacian basis} $\mathbf{\Phi} \in \mathbb{R}^{|\mathscr{S}|\times k}$. With the complete basis ($k = |\mathscr{S}|-1$), any reward function ${r}: \mathscr{S} \to \mathbb{R}$ admits an exact decomposition: $\mathbf{\hat{r}}_k \, \dot{=} \, \mathbf{\Phi} \, (\mathbf{\Phi}^\top \mathbf{r})$ (the least-squares solution). However, computing and storing the full basis is infeasible for large state spaces.

This constraint requires restricting the representation to the first $k \ll |\mathscr{S}|$ eigenvectors, imposing an inductive bias. These eigenvectors, corresponding to the smallest eigenvalues, capture the lowest-frequency components of functions defined over the state space. As a result, any reward function with significant high-frequency structure lies outside the span of this truncated basis and cannot be reconstructed exactly (Figure~\textcolor{neelam}{\ref{fig:eigvecs}b}). This mismatch propagates to the prediction problem: the optimal value function computed from the approximated reward need not coincide with the true optimal value function. Theorem \ref{theorem:the-one-main-paper} bounds this mismatch.

\begin{theorem}[\ValueApproxName]
Let $r: \mathscr{S} \to \mathbb{R}$ be a reward function with bounded variation\footnote{A formal definition of \textit{bounded variation} is provided in Eqn.~\ref{eqn:bounded_variation} in Appendix~\ref{sec:app_background}. Intuitively, it requires the reward function to vary smoothly over the state space.} and $\mathbf{\hat{r}}_k$ its reconstruction using the first $k$ eigenvectors. Let $\mathbf{v}^*$ and $\mathbf{v}^*_k$ be their corresponding optimal value functions. Then,
\begin{equation}
    \|\mathbf{v}^* - \mathbf{v}^*_k\|_\infty \le \frac{\|r\|_G}{(1-\gamma)\sqrt{\lambda_k}},
\end{equation}
where $\lambda_k$ is the largest eigenvalue included in the truncated basis.
\label{theorem:the-one-main-paper}
\end{theorem}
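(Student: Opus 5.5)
The plan has two parts: bound how far the optimal values can move when the reward changes, then bound the size of that reward change using the spectral structure of the Laplacian.

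\textbf{Step 1: reduce to a reward error.} Both $\mathbf{v}^*$ and $\mathbf{v}^*_k$ are optimal value functions of MDPs with identical dynamics. Their rewards are $r$ and $\hat r_k$. Standard contraction arguments then give
\[
\|\mathbf{v}^*-\mathbf{v}^*_k\|_\infty \le \tfrac{1}{1-\gamma}\|\mathbf{r}-\mathbf{\hat r}_k\|_\infty .
\]
To see this, note that for every policy $\pi$, $\|\mathbf{v}_\pi^{r}-\mathbf{v}_\pi^{\hat r_k}\|_\infty\le \|\mathbf{r}-\mathbf{\hat r}_k\|_\infty/(1-\gamma)$, because $\mathbf{v}_\pi=(\mathbf{I}-\gamma\mathbf{P}_\pi)^{-1}\mathbf{P}_\pi\mathbf{r}$ and this operator has $\infty$-norm at most $1/(1-\gamma)$. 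Taking the supremum over policies on each side gives $|\max_\pi a_\pi-\max_\pi b_\pi|\le\max_\pi|a_\pi-b_\pi|$ pointwise.

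\textbf{Step 2: bound the reward error spectrally.} Expand $\mathbf{r}=\sum_i c_i\ei_i$ with $c_i=\ei_i^\top\mathbf{r}$. Since $\mathbf{L}$ is symmetric with orthonormal eigenvectors, the residual is $\mathbf{r}-\mathbf{\hat r}_k=\sum_{i>k}c_i\ei_i$. We then use the $\infty\le 2$ norm inequality, $\|\mathbf{r}-\mathbf{\hat r}_k\|_\infty\le\|\mathbf{r}-\mathbf{\hat r}_k\|_2=(\sum_{i>k}c_i^2)^{1/2}$.

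Next we identify the graph norm as a Laplacian quadratic form. Take $\mathbf{P}$ in Eqn.~\ref{eqn:grap-norm} to be the symmetrized doubly stochastic matrix $f_{\text{sym}}(\mathbf{P}_\pi)$. Expanding the square and using that row and column sums equal one gives
\[
\tfrac12\sum_{i,j}\mathbf{P}(i,j)(f(i)-f(j))^2=\mathbf{f}^\top(\mathbf{I}-\mathbf{P})\mathbf{f}=\mathbf{f}^\top\mathbf{L}\mathbf{f}.
\]
Hence $\|r\|_G^2=\sum_i\lambda_i c_i^2\ge\sum_{i>k}\lambda_i c_i^2\ge\lambda_k\sum_{i>k}c_i^2$, where every discarded eigenvalue is at least $\lambda_k$ because of the ordering and all eigenvalues are nonnegative. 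Therefore
\[
\|\mathbf{r}-\mathbf{\hat r}_k\|_2\le \frac{\|r\|_G}{\sqrt{\lambda_k}} .
\]
The bounded-variation assumption guarantees $\|r\|_G<\infty$, which matters for the infinite or continuous case. Combining this with Step 1 yields the theorem.

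\textbf{Main obstacle.} The hardest part is the spectral inequality, and specifically matching the paper's conventions. First, the graph norm must be taken with respect to the symmetrized $\mathbf{P}$, and not the raw $\mathbf{P}_\pi$, for the identity with $\mathbf{L}$ to hold. Second, the statement calls $\lambda_k$ the largest \emph{included} eigenvalue, while the clean argument gives the smallest \emph{excluded} eigenvalue. These are reconciled by indexing: with $\bm\phi=[\ei_0,\dots,\ei_k]$ the residual runs over $i>k$, and $\lambda_i\ge\lambda_k$ still holds there, so the bound goes through, possibly loosely. I would also check that $\lambda_k>0$, which requires the graph to be connected, or else state the bound for $k$ large enough that $\lambda_k>0$. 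In continuous state spaces, the step $\|\cdot\|_\infty\le\|\cdot\|_2$ fails and would need a Sobolev-type embedding. I would restrict the proof to the finite case, which matches the matrix setup the paper uses.
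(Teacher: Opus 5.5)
Your proposal is correct and follows the paper's argument: both use the chain $\|\mathbf{v}^*-\mathbf{v}^*_k\|_\infty \le \|\mathbf{r}-\mathbf{\hat{r}}_k\|_\infty/(1-\gamma) \le \|\mathbf{r}-\mathbf{\hat{r}}_k\|_2/(1-\gamma) \le \|r\|_G/((1-\gamma)\sqrt{\lambda_k})$, except that you prove the spectral step from $\|r\|_G^2=\mathbf{r}^\top\mathbf{L}\mathbf{r}$ where the paper cites the reconstruction lemma, and you propagate the reward error through per-policy evaluation and a supremum over policies where the paper uses the Bellman optimality equation (your route gives directly the two-sided bound that the paper leaves to symmetry). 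One remark in your obstacle paragraph is off: with the appendix's $f_{\text{sym}}(\mathbf{P})=(\mathbf{P}+\mathbf{P}^\top)/2$, the graph norm in Eqn.~\ref{eqn:grap-norm} is the same whether computed with $\mathbf{P}_\pi$ or with $f_{\text{sym}}(\mathbf{P}_\pi)$, since $(f(i)-f(j))^2$ is symmetric in $(i,j)$; what the identity $\|f\|_G^2=\mathbf{f}^\top\mathbf{L}\mathbf{f}$ actually needs is the unit-row-sum (doubly stochastic) property of $f_{\text{sym}}(\mathbf{P}_\pi)$ that the paper assumes.
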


The proof is provided in Appendix \ref{sec:proof}. It proceeds by first establishing an upper bound on the reconstruction error of the reward function when using the first $k$ eigenvectors of the graph Laplacian, then using the Bellman contraction property to show that this error amplifies by no more than $(1-\gamma)^{-1}$, yielding the stated bound on the value function. 

The bound identifies two factors that jointly govern approximation error: (1) the smoothness of the reward, captured by $\|r\|_G$, and (2) the size of the basis $k$, which controls $\lambda_k \in [0,2]$. Smaller $\|r\|_G$ and larger $\lambda_k$ both tighten the bound, showing how signal smoothness and basis dimensionality together determine the quality of the induced value function. This result serves as both a justification and a limitation of zero-shot Laplacian control: smooth rewards are well-approximated by a small basis, while rewards with substantial high-frequency content demand larger ones---and no fixed basis suffices for all reward functions. This motivates the hierarchical composition introduced next, which treats the Laplacian basis not as a complete solution, but as a starting point for tackling tasks beyond the span of the representation.

\section{The Laplacian Keyboard}

\begin{wrapfigure}{r}{0.6\textwidth}
\vspace{-25pt}
\centerline{\includegraphics[width=.55\columnwidth]{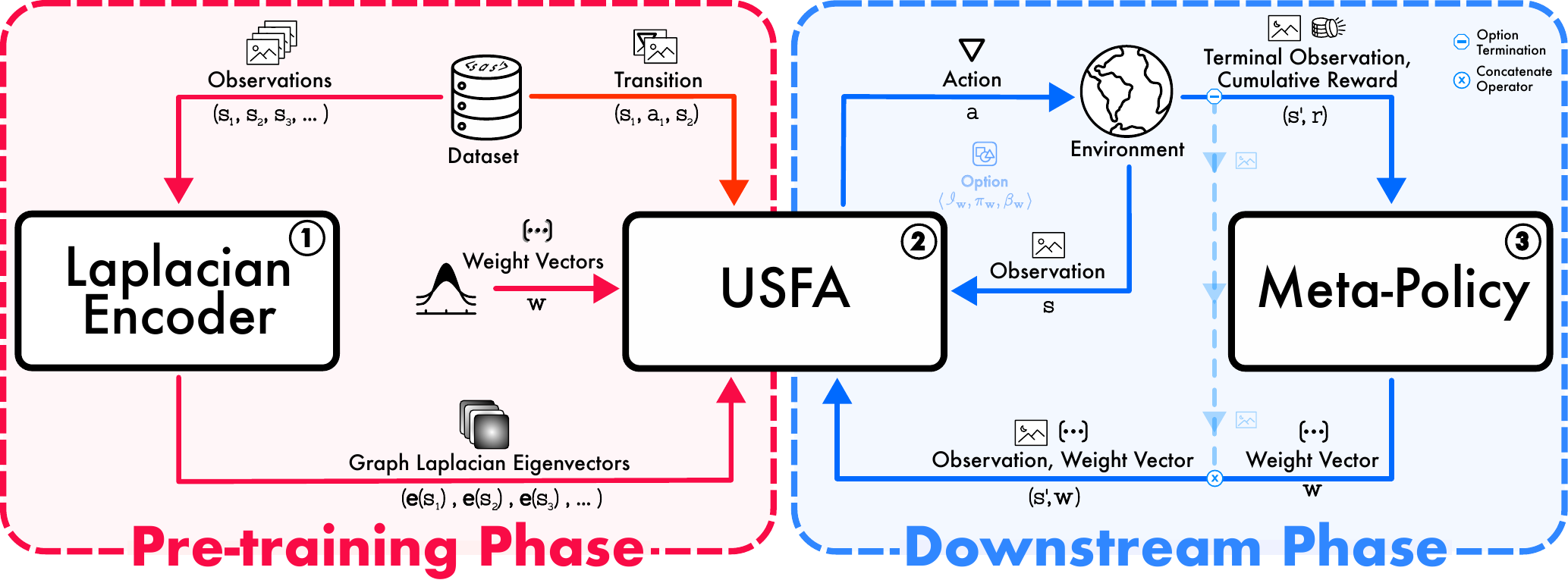}}
\caption{The Laplacian Keyboard framework.}
\label{fig:lk_detail_main}
\vspace{-30pt}
\end{wrapfigure}

Having established the Laplacian representation as an effective reward basis, we now describe how the Laplacian Keyboard is constructed and used for downstream control (see Appendix~\ref{sec:lk-arch_algo_appdx} for the pseudocode).

\subsection{Pre-training Phase}

The goal of the pre-training phase is to learn a representation from reward-free data such that, when presented with a new task, the agent can solve it with only a handful of rewarded transitions. We achieve this by learning the Laplacian basis and training a USFA over it, as described below.

We assume access to a reward-free dataset, $\mathscr{D}$, that induces a transition kernel, $\pp$, and its associated graph Laplacian, $\mathbf{L}$. The first $k$ eigenvectors of $\mathbf{L}$ are approximated via the Augmented Laplacian Objective \citep{gomezproper}, a contrastive objective comprising an alignment term that minimizes the distance between representations of transitionally adjacent states, and a repulsion term that enforces representational orthonormality---jointly recovering the eigenvectors of $\mathbf{L}$ (see Appendix~\ref{sec:training-details-appdx}).

The learned representation $\bp(s) \in \mathbb{R}^k$, where each dimension corresponds to a Laplacian eigenvector,\footnote{In practice, the first eigenvector $\ei_0$, a constant vector, is discarded as it carries no useful structure in our framework.} is used to parameterize a family of reward functions of the form $r_{\mathbf{w}}(s) = \mathbf{w}^\top \bp(s)$. To model policies and value functions for this reward family, we train a Universal Successor Feature Approximator \citep[USFA;][]{borsauniversal}, which generalizes successor representations \citep{dayan1993improving} by conditioning on the reward parameter $\mathbf{w}$. The USFA consists of a policy $\pi$ and a corresponding successor feature estimator $\bm{\psi}$. The successor features \citep{barreto2017successor} encode the expected discounted accumulation of the state features under the conditioned policy:
\begin{equation}
    \bm{\psi}(s,a,\mathbf{w}) =
    \mathbb{E}_{\pi_{\mathbf{w}}}\!\left[
    \sum\nolimits_{t=0}^{\infty} \gamma^t \bp(S_{t+1})
    \,\middle|\,
    S_0 = s,\, A_0 = a
    \right].
\end{equation}

Training of the USFA is performed off-policy on transitions $(s,a,s')$ from $\mathscr{D}$, with weight vectors $\mathbf{w}$ sampled to jointly condition the policy and successor feature estimator. For a given transition $(s,a,s')$, the successor features are trained via Bellman consistency,
\begin{equation}
    \bm{\psi}(s,a,\mathbf{w}) = \bp(s') + \gamma\, \mathbb{E}_{a' \sim \pi_{\mathbf{w}}(\cdot \mid s')} \bm{\psi}(s',a',\mathbf{w}),
\end{equation}
yielding optimal action-values $q^*_{\mathbf{w}}(s,a) = \mathbf{w}^\top \bm{\psi}(s,a,\mathbf{w})$ for any reward in the span of the Laplacian eigenvectors \citep{borsauniversal}. In continuous action spaces, $\pi$ is optimized via policy gradients; in discrete spaces, actions are selected greedily as $\arg\max_a\, \mathbf{w}^\top \bm{\psi}(s,a,\mathbf{w})$ \citep{borsauniversal}.

Following \citet{touati2022does}, we sample weight vectors $\mathbf{w}$ from a broad distribution during USFA training to promote wide coverage of the reward-function space. For any reward function in the eigenvector span, the USFA outputs the optimal policy; for reward functions outside this span, Theorem~\ref{theorem:the-one-main-paper} provides a bound on the sub-optimality of the resulting policies. This bound highlights a fundamental limitation: for any fixed truncated basis, some reward functions remain outside the span. \looseness=-1

\subsection{Downstream Phase}
\label{sec:downstream}

\begin{figure}
\centerline{\includegraphics[width=.75\columnwidth]{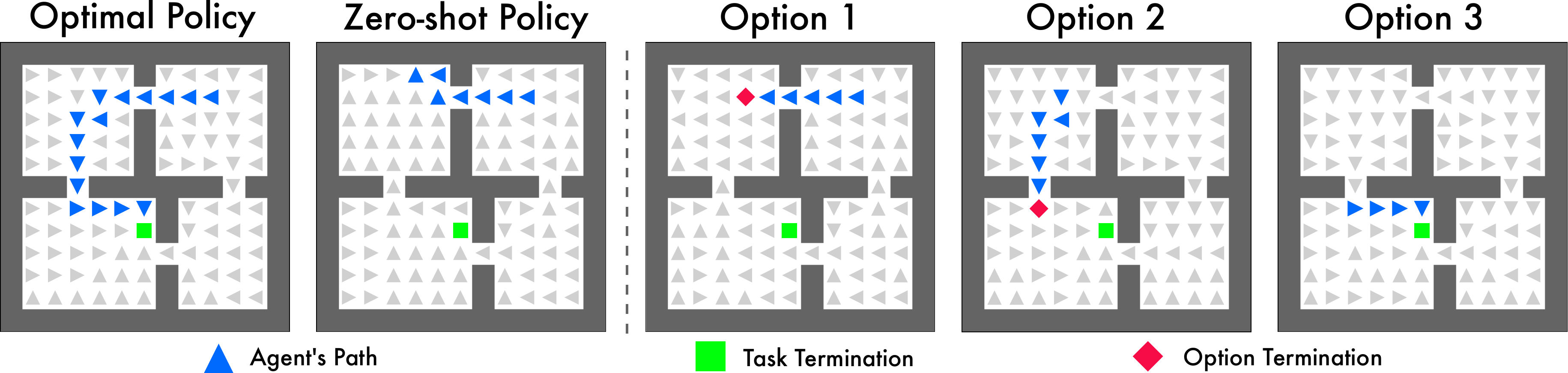}}
\caption{%This example shows how 
The LK stitches behaviors from the learned basis to approximate the optimal policy when the reward function lies outside the span of the learned representation. We use the same reward function $r$ from Figure~\ref{fig:eigvecs}. The \textit{first} panel contains the optimal policy of the original reward function to reach the goal state. The \textit{second} panel displays the optimal policy of the reconstructed reward function with $k=5$ eigenvectors (${r}_5$). This policy fails to reach the goal state. The \textit{last three} panels show how LK stitches three individual options, each terminated after a fixed horizon $t_{\text{term}} = 6$. Together, these options form a near-optimal policy that replicates the optimal behavior. \looseness=-1}
\label{fig:lk_example}
\vspace{-15pt}
\end{figure}

The Laplacian Keyboard is designed precisely to overcome the limitation of a fixed linear span. Rather than committing to a single fixed behavior at execution time, LK trains a meta-policy that adapts the behavior basis to the demands of the downstream tasks. Prior option-based methods that pursue a similar goal typically rely on manually specified reward features or predefined training tasks---limiting their generality to settings where such supervision is available. LK avoids these requirements entirely: the behavior basis is learned directly from reward-free offline data, scales with its coverage and the chosen value of $k$, and is not restricted to a finite set of options.

Concretely, the USFA provides a continuous library of option policies parameterized by $\mathbf{w} \in \mathbb{R}^k$, and a meta-policy $\pi_{\text{LK}}: \mathscr{S} \to \mathbb{R}^k$ is trained to sequence these options to solve arbitrary downstream tasks. The hierarchical control mechanism is illustrated in Figure~\ref{fig:lk_detail_main}: $\pi_{\text{LK}}$ outputs a weight vector $\mathbf{w}$ that instantiates an option via the USFA; this option executes for a fixed horizon $t_{\text{term}}$; control then returns to $\pi_{\text{LK}}$, which selects a new weight vector based on the updated state. This process repeats until the task is solved.

When the downstream reward function lies within the span of the learned representation, the meta-policy recovers the optimal weight vector $\mathbf{w}$ satisfying $r(s) = \mathbf{w}^\top \bm{\phi}(s)$, achieving zero-shot transfer as in other representation-based methods. For rewards outside this span, the meta-policy composes multiple options---each optimal for its respective weight vector---into a near-optimal global policy, as illustrated in Figure~\ref{fig:lk_example}. This compositional structure is precisely what enables LK to address reward functions that lie beyond the linear span of the learned representation. \looseness=-1

We also expect LK to improve sample efficiency over flat agents---those acting directly in the primitive action space---for two reasons. (1) By learning over options rather than primitive actions, LK propagates value information across multiple timesteps, which can accelerate credit assignment \citep{sutton1999between}. (2) Exploration in this structured space corresponds to exploring over coherent, temporally-extended behaviors \citep{machado2017laplacian, machado2023temporal, jinnai2019discovering}, which can reduce variance and enable faster downstream learning. Crucially, the action space of $\pi_{\text{LK}}$ is the continuous space of weight vectors $\mathbf{w}$---a smooth control space in which small changes in the policy output lead to similar behaviors, rather than qualitatively different action sequences. \looseness=-1

\section{Empirical Validation on DMC}

We empirically show that the Laplacian representations enable zero-shot RL and that LK’s hierarchical structure enables sample efficient improvements beyond the limits of linear combinations.

\paragraph{Environment:} We evaluate the LK on three DMC domains \citep{tassa2018deepmind}: \textsc{Cheetah} (\texttt{Run, Run-B, Walk, Walk-B}), \textsc{Quadruped} (\texttt{Jump, Run, Stand, Walk}), and \textsc{Walker} (\texttt{Flip, Run, Stand, Walk}), where \texttt{-B} denotes backward variants. Each domain provides 4 tasks and offline datasets collected under three exploration policies---\texttt{APS}, \texttt{Proto}, and \texttt{RND} \citep{yarats2022exorl}---yielding 36 task-dataset combinations across diverse data-collection strategies.

\paragraph{Implementation:} For each domain--dataset pair and basis size $k$, we train a Laplacian encoder and a corresponding USFA, yielding 9 pre-trained agents per basis size used for zero-shot evaluation. A meta-policy is then trained for each downstream task, giving 36 meta-policies per basis size (9 domain--dataset pairs $\times$ 4 tasks). Across all basis sizes $k \in \{1, 2, 3, 5, 10, 20, 50\}$, this amounts to $252$ settings in total. Both the USFA and meta-policies are trained with TD3 \citep{fujimoto2018addressing}. Following \cite{touati2022does}, zero-shot performance is evaluated using 10K labeled samples; each meta-policy is trained for 200K environment interactions, evaluated every 10K steps over 10 episodes. Options are terminated after $t_{\text{term}}=5$ steps (see Appendix~\ref{sec:termination-appx} for an ablation). All results are averaged over 30 independent runs.
\looseness=-1

\begin{table}[t]
  \centering
  \begin{minipage}{0.5\textwidth}
        \centering
        \caption{Zero-shot performance of FB versus Laplacian Basis for $k=50$ (mean $\pm$ standard error, $n=30$) on the APS dataset.}
        \vspace{5pt}
        \setlength{\tabcolsep}{12pt}
        \renewcommand{\arraystretch}{1.3}
        \footnotesize
        \label{tab:zero-shot-comp}
        \begin{tabular}{lrr}
        \toprule
        \textbf{Task} & \textbf{FB} & \textbf{Laplacian}\\
        \midrule
        
        \rowcolor{gray!15}
        \textsc{Cheetah} \textit{(avg)} & 552 & 450 \\
        \hspace{.5em} \texttt{Run} & 248 ${\scriptstyle \pm 7}$ & 196 ${\scriptstyle \pm 9}$\\
        \hspace{.5em} \texttt{Run-B} & 229 ${\scriptstyle \pm 4}$ & 188 ${\scriptstyle \pm 6}$\\
        \hspace{.5em} \texttt{Walk} & 819 ${\scriptstyle \pm 15}$ & 709 ${\scriptstyle \pm 22}$\\
        \hspace{.5em} \texttt{Walk-B} & 912 ${\scriptstyle \pm 10}$ & 706 ${\scriptstyle \pm 14}$\\
        \addlinespace
        \rowcolor{gray!15}
        \textsc{Quadruped} \textit{(avg)} & 470 & 509 \\
        \hspace{.5em} \texttt{Jump} & 483 ${\scriptstyle \pm 7}$ & 554 ${\scriptstyle \pm 10}$\\
        \hspace{.5em} \texttt{Run} & 317 ${\scriptstyle \pm 6}$ & 366 ${\scriptstyle \pm 5}$\\
        \hspace{.5em} \texttt{Stand} & 617 ${\scriptstyle \pm 12}$ & 705 ${\scriptstyle \pm 14}$\\
        \hspace{.5em} \texttt{Walk} & 461 ${\scriptstyle \pm 9}$ & 410 ${\scriptstyle \pm 11}$\\
        \addlinespace
        \rowcolor{gray!15}
        \textsc{Walker} \textit{(avg)} & 584 & 582\\
        \hspace{.5em} \texttt{Flip} & 412 ${\scriptstyle \pm 13}$ & 507 ${\scriptstyle \pm 15}$ \\
        \hspace{.5em} \texttt{Run} & 356 ${\scriptstyle \pm 7}$ & 294 ${\scriptstyle \pm 9}$\\
        \hspace{.5em} \texttt{Stand} & 754 ${\scriptstyle \pm 13}$ & 635 ${\scriptstyle \pm 23}$\\
        \hspace{.5em} \texttt{Walk} & 816 ${\scriptstyle \pm 8}$ & 890 ${\scriptstyle \pm 11}$\\
        \addlinespace
        
        \bottomrule
    \end{tabular}
  \end{minipage}
  \quad \quad
  \begin{minipage}{0.4\textwidth}
    \centerline{\includegraphics[width=.9\columnwidth]{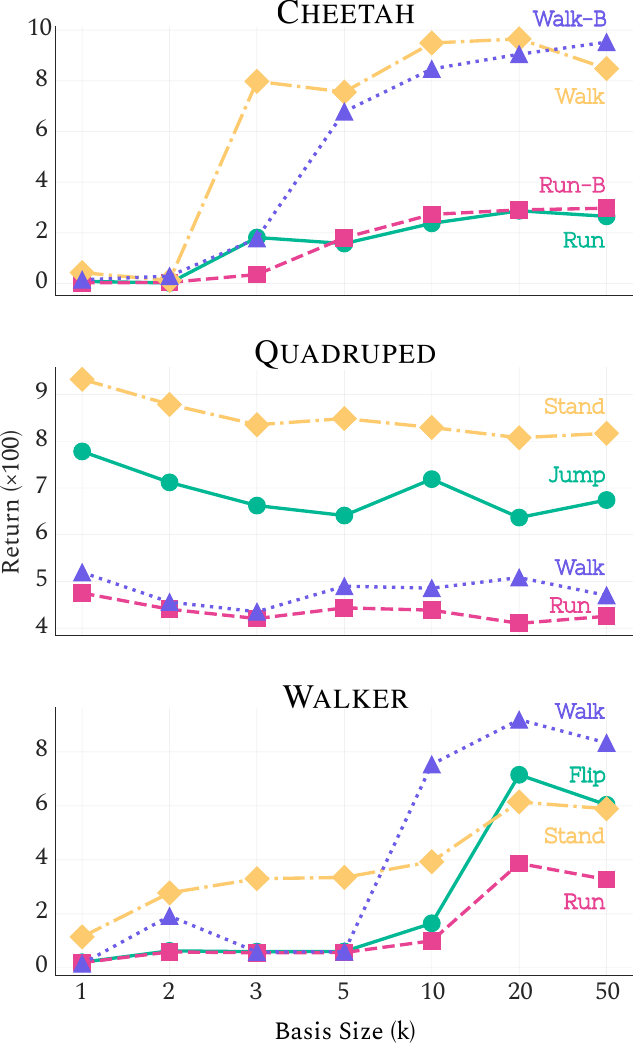}}
    \vspace{5pt}
    \captionof{figure}{Zero-shot performance of the Laplacian Basis for varying basis sizes.}
    \label{fig:lk-lineplot-main}
  \end{minipage}
  \vspace{-10pt}
\end{table}

\subsection{Linear Span of the Laplacian Basis}
\label{sec:linear-span-main}

We first establish the \textit{zero-shot performance} of the Laplacian basis as a foundation for the hierarchical experiments that follow. To calibrate its strength, we compare against Forward-Backward (FB) representations---a strong baseline \citep{touati2022does} with theoretical guarantees \citep{blier2021learning}---not to claim state-of-the-art zero-shot performance, but to confirm that the Laplacian basis is a credible starting point. Table~\ref{tab:zero-shot-comp} reports results at $k=50$ using the \texttt{APS} dataset (additional results in Appendix~\ref{sec:generality-appx}); we focus on this basis size following \citet{touati2022does}, where FB performs best and performance is shown to plateau beyond it. A sign test finds no significant difference between the two methods, confirming the Laplacian basis matches an established alternative in zero-shot performance. \looseness=-1

Yet this strong performance does not simply scale with basis size---and this limitation is not unique to the Laplacian basis; FB exhibits the same saturation behavior \citep{touati2022does}. Figure~\ref{fig:lk-lineplot-main} shows that while \textsc{Cheetah} and \textsc{Walker} benefit modestly from larger $k$, gains saturate somewhere between $k=20$ and $k=50$. More strikingly, \textsc{Quadruped} achieves its best performance at $k=1$: the leading eigenvector, which captures torso-height regulation central to all tasks (see Appendix~\ref{sec:quadruped}), already contains most task-relevant structure, and adding components only makes the representation harder to learn and the downstream USFA harder to optimize. Importantly, this saturation does not reflect an optimality ceiling---a task-specific agent trained from scratch on individual tasks still achieves substantially higher performance, indicating that headroom remains. \textit{Taken together, the Laplacian basis is a strong zero-shot foundation, yet simply scaling the basis size yields no consistent gains.} The next section shows how LK overcomes this ceiling through hierarchical composition.

\begin{table}[t]
  \centering
    \caption{Relative improvement of LK over the zero-shot performance (\textit{left}) and ratio of area under the evaluation curve (AUC) over the first 200k environment steps (\textit{right}), across basis sizes $k$. Results for the \texttt{APS} dataset averaged over 30 runs.}
  \setlength{\tabcolsep}{5pt}
  \scriptsize
  \label{tab:combined_results}
  \vspace{7pt}

\begin{tabular}{lrrrrrrr@{\hspace{3.5em}}rrrrrrr}
\toprule
\textbf{Task} & \multicolumn{7}{c}{\textbf{Improv.\ over zero-shot (\%)}} & \multicolumn{7}{c}{\textbf{Downstream AUC ratio}}\\
\cmidrule(lr){2-8}\cmidrule(l){9-15}
\textbf{} & 1 & 2 & 3 & 5 & 10 & 20 & 50 & 1 & 2 & 3 & 5 & 10 & 20 & 50\\
\midrule

\rowcolor{gray!15}\textsc{Cheetah} \textit{(avg)} & $14701$ & $11496$ & $219$ & $98$ & $15$ & $-3$ & $0$ & $0.6$ & $0.5$ & $1.2$ & $0.9$ & $1.3$ & $1.4$ & $1.3$\\
\hspace{.5em}\texttt{Run} & \cellcolor[HTML]{9DD9CB}$9143$ & \cellcolor[HTML]{A3DCCF}$5474$ & \cellcolor[HTML]{DAF1F1}$34$ & \cellcolor[HTML]{FCCDB9}$-21$ & \cellcolor[HTML]{DFF2F4}$17$ & \cellcolor[HTML]{E0F3F5}$15$ & \cellcolor[HTML]{DEF2F4}$20$ & \cellcolor[HTML]{FDDDCE}$0.5$ & \cellcolor[HTML]{FEE2D5}$0.6$ & \cellcolor[HTML]{DDF2F3}$1.6$ & \cellcolor[HTML]{FEEBE1}$0.8$ & \cellcolor[HTML]{DBF1F1}$1.7$ & \cellcolor[HTML]{D5EFED}$2.0$ & \cellcolor[HTML]{D6EFEE}$2.0$\\
\hspace{.5em}\texttt{Run-B} & \cellcolor[HTML]{8ED3C1}$25129$ & \cellcolor[HTML]{92D4C3}$21084$ & \cellcolor[HTML]{C5E9E2}$497$ & \cellcolor[HTML]{E0F3F5}$16$ & \cellcolor[HTML]{DBF1F1}$31$ & \cellcolor[HTML]{FCCCB7}$-22$ & \cellcolor[HTML]{FEE7DC}$-6$ & \cellcolor[HTML]{FDD7C7}$0.4$ & \cellcolor[HTML]{FCC9B4}$0.3$ & \cellcolor[HTML]{FDDED0}$0.5$ & \cellcolor[HTML]{FEE5D9}$0.7$ & \cellcolor[HTML]{FEE8DE}$0.7$ & \cellcolor[HTML]{FEE0D2}$0.6$ & \cellcolor[HTML]{FEE5D9}$0.7$\\
\hspace{.5em}\texttt{Walk} & \cellcolor[HTML]{B2E1D7}$1956$ & \cellcolor[HTML]{A8DED1}$4169$ & \cellcolor[HTML]{DEF2F4}$20$ & \cellcolor[HTML]{FCC9B4}$-25$ & \cellcolor[HTML]{FEE3D6}$-7$ & \cellcolor[HTML]{FEF1EA}$-2$ & \cellcolor[HTML]{E7F5F9}$7$ & \cellcolor[HTML]{FEEBE1}$0.8$ & \cellcolor[HTML]{FEEBE1}$0.8$ & \cellcolor[HTML]{D4EFEC}$2.1$ & \cellcolor[HTML]{E7F5F9}$1.1$ & \cellcolor[HTML]{D7F0EE}$1.9$ & \cellcolor[HTML]{D5EFED}$2.0$ & \cellcolor[HTML]{D9F0EF}$1.8$\\
\hspace{.5em}\texttt{Walk-B} & \cellcolor[HTML]{90D4C2}$22575$ & \cellcolor[HTML]{95D6C6}$15258$ & \cellcolor[HTML]{CBEBE5}$325$ & \cellcolor[HTML]{C6E9E3}$424$ & \cellcolor[HTML]{DEF2F4}$20$ & \cellcolor[HTML]{FEEBE1}$-4$ & \cellcolor[HTML]{FDCEBA}$-20$ & \cellcolor[HTML]{FEE6DB}$0.7$ & \cellcolor[HTML]{FDD5C3}$0.4$ & \cellcolor[HTML]{FEE8DE}$0.7$ & \cellcolor[HTML]{FEEFE8}$0.9$ & \cellcolor[HTML]{FEF3EE}$1.0$ & \cellcolor[HTML]{FEEDE4}$0.8$ & \cellcolor[HTML]{FEF1EB}$0.9$\\
\addlinespace
\rowcolor{gray!15}\textsc{Quadruped} \textit{(avg)} & $0$ & $-11$ & $-8$ & $4$ & $3$ & $23$ & $26$ & $3.3$ & $2.6$ & $2.5$ & $3.0$ & $2.9$ & $3.4$ & $3.5$\\
\hspace{.5em}\texttt{Jump} & \cellcolor[HTML]{F0F9FB}$0$ & \cellcolor[HTML]{FDCFBC}$-18$ & \cellcolor[HTML]{FDD3C0}$-15$ & \cellcolor[HTML]{EBF7FA}$4$ & \cellcolor[HTML]{FDD3C0}$-15$ & \cellcolor[HTML]{DDF2F3}$21$ & \cellcolor[HTML]{E0F3F5}$16$ & \cellcolor[HTML]{98D7C8}$3.7$ & \cellcolor[HTML]{C5E9E2}$2.6$ & \cellcolor[HTML]{C6E9E3}$2.6$ & \cellcolor[HTML]{B6E3D9}$3.0$ & \cellcolor[HTML]{C3E8E1}$2.7$ & \cellcolor[HTML]{A2DBCE}$3.4$ & \cellcolor[HTML]{A0DACD}$3.5$\\
\hspace{.5em}\texttt{Run} & \cellcolor[HTML]{F0F9FB}$0$ & \cellcolor[HTML]{FDD7C7}$-11$ & \cellcolor[HTML]{FDDACA}$-10$ & \cellcolor[HTML]{EFF8FB}$1$ & \cellcolor[HTML]{EDF8FA}$2$ & \cellcolor[HTML]{DEF2F4}$20$ & \cellcolor[HTML]{E0F3F5}$14$ & \cellcolor[HTML]{AADED2}$3.3$ & \cellcolor[HTML]{C2E8E0}$2.7$ & \cellcolor[HTML]{CBEBE5}$2.5$ & \cellcolor[HTML]{B6E3D9}$3.0$ & \cellcolor[HTML]{BBE5DC}$2.9$ & \cellcolor[HTML]{A8DED1}$3.3$ & \cellcolor[HTML]{ABDFD3}$3.2$\\
\hspace{.5em}\texttt{Stand} & \cellcolor[HTML]{F0F9FB}$0$ & \cellcolor[HTML]{FCCDB9}$-21$ & \cellcolor[HTML]{FDD3C0}$-15$ & \cellcolor[HTML]{FEEBE1}$-4$ & \cellcolor[HTML]{FEF3EE}$-1$ & \cellcolor[HTML]{DFF2F4}$17$ & \cellcolor[HTML]{E0F3F5}$15$ & \cellcolor[HTML]{AADED2}$3.3$ & \cellcolor[HTML]{CDECE7}$2.4$ & \cellcolor[HTML]{CDECE7}$2.4$ & \cellcolor[HTML]{BEE6DE}$2.8$ & \cellcolor[HTML]{C3E8E1}$2.7$ & \cellcolor[HTML]{ADDFD4}$3.2$ & \cellcolor[HTML]{ADDFD4}$3.2$\\
\hspace{.5em}\texttt{Walk} & \cellcolor[HTML]{EFF9FB}$1$ & \cellcolor[HTML]{E8F6F9}$6$ & \cellcolor[HTML]{E4F4F8}$9$ & \cellcolor[HTML]{DFF2F4}$18$ & \cellcolor[HTML]{DDF2F2}$25$ & \cellcolor[HTML]{DAF1F1}$34$ & \cellcolor[HTML]{D6EFEE}$59$ & \cellcolor[HTML]{B2E1D7}$3.1$ & \cellcolor[HTML]{C0E7DF}$2.8$ & \cellcolor[HTML]{CBEBE5}$2.5$ & \cellcolor[HTML]{ADDFD4}$3.2$ & \cellcolor[HTML]{AEE0D5}$3.2$ & \cellcolor[HTML]{96D7C7}$3.7$ & \cellcolor[HTML]{8ED3C1}$3.9$\\
\addlinespace
\rowcolor{gray!15}\textsc{Walker} \textit{(avg)} & $1399$ & $341$ & $393$ & $444$ & $447$ & $59$ & $72$ & $0.9$ & $1.1$ & $1.0$ & $1.1$ & $2.1$ & $2.3$ & $2.0$\\
\hspace{.5em}\texttt{Flip} & \cellcolor[HTML]{B8E4DA}$1299$ & \cellcolor[HTML]{C6E9E3}$451$ & \cellcolor[HTML]{C5E9E2}$492$ & \cellcolor[HTML]{C3E8E1}$556$ & \cellcolor[HTML]{BDE6DD}$894$ & \cellcolor[HTML]{D7F0EE}$54$ & \cellcolor[HTML]{D8F0EF}$52$ & \cellcolor[HTML]{FEEFE7}$0.9$ & \cellcolor[HTML]{E8F6F9}$1.0$ & \cellcolor[HTML]{E8F6F9}$1.0$ & \cellcolor[HTML]{E7F5F9}$1.1$ & \cellcolor[HTML]{CAEBE4}$2.5$ & \cellcolor[HTML]{C5E9E2}$2.6$ & \cellcolor[HTML]{D5EFED}$2.0$\\
\hspace{.5em}\texttt{Run} & \cellcolor[HTML]{C0E7DF}$682$ & \cellcolor[HTML]{CDECE7}$254$ & \cellcolor[HTML]{CEECE7}$217$ & \cellcolor[HTML]{CDECE7}$255$ & \cellcolor[HTML]{C5E9E2}$504$ & \cellcolor[HTML]{D5EFED}$73$ & \cellcolor[HTML]{D4EFEC}$92$ & \cellcolor[HTML]{FEECE3}$0.8$ & \cellcolor[HTML]{E7F5F9}$1.1$ & \cellcolor[HTML]{FEF1EB}$0.9$ & \cellcolor[HTML]{FEF4EF}$1.0$ & \cellcolor[HTML]{CEECE7}$2.4$ & \cellcolor[HTML]{C0E7DF}$2.7$ & \cellcolor[HTML]{CDECE7}$2.4$\\
\hspace{.5em}\texttt{Stand} & \cellcolor[HTML]{C8EAE3}$378$ & \cellcolor[HTML]{CFEDE8}$172$ & \cellcolor[HTML]{D4EFEC}$88$ & \cellcolor[HTML]{D3EEEB}$102$ & \cellcolor[HTML]{D0EDE9}$157$ & \cellcolor[HTML]{D3EEEB}$101$ & \cellcolor[HTML]{D4EFEC}$86$ & \cellcolor[HTML]{FEEBE2}$0.8$ & \cellcolor[HTML]{FEF1EA}$0.9$ & \cellcolor[HTML]{FEF0E9}$0.9$ & \cellcolor[HTML]{FEF1EB}$0.9$ & \cellcolor[HTML]{E1F3F6}$1.4$ & \cellcolor[HTML]{E0F3F5}$1.5$ & \cellcolor[HTML]{E4F4F8}$1.3$\\
\hspace{.5em}\texttt{Walk} & \cellcolor[HTML]{ABDFD3}$3237$ & \cellcolor[HTML]{C5E9E2}$488$ & \cellcolor[HTML]{BEE6DE}$774$ & \cellcolor[HTML]{BDE6DD}$863$ & \cellcolor[HTML]{CEECE7}$234$ & \cellcolor[HTML]{E8F6F9}$6$ & \cellcolor[HTML]{D6EFEE}$60$ & \cellcolor[HTML]{FEF4EF}$1.0$ & \cellcolor[HTML]{E4F4F8}$1.3$ & \cellcolor[HTML]{E7F5F9}$1.1$ & \cellcolor[HTML]{E5F5F9}$1.2$ & \cellcolor[HTML]{CFEDE8}$2.3$ & \cellcolor[HTML]{CEEDE8}$2.3$ & \cellcolor[HTML]{D2EEEA}$2.2$\\
\addlinespace
\bottomrule
\end{tabular}
  
\end{table}

\subsection{Beyond the Linear Span of the Laplacian Basis}

In this section, we ask two questions about LK's hierarchical structure: (1) does it enable policies that go beyond the linear span of the Laplacian basis? and (2) does it improve sample efficiency relative to flat agents? Having established that $k=20$ and $k=50$ are near-optimal for zero-shot performance, the following experiments constrain the basis to $k \in \{1,2,3,5,10\}$---the regime where the zero-shot solution alone is insufficient. This directly probes question (1): if LK improves in this regime, the gain cannot be attributed solely to a strong zero-shot solution; it must arise from downstream use of the behavior library. To probe question (2), we train a task-specific flat TD3 agent as a strong task-specific reference. We note that a constrained basis in DMC proxies the general setting: a basis insufficient in a simple domain is analogous to a much larger basis that remains insufficient in a more complex one. \looseness=-1

Table~\ref{tab:combined_results} (left) answers question (1), reporting the percentage improvement of LK over its zero-shot performance after 200K environment steps. In \textsc{Cheetah} and \textsc{Walker}, LK improves on most tasks across all basis sizes, with the largest gains at small $k$. The result is most striking at $k=1$, where the USFA represents only two options (the positive and negative directions of the single eigenvector) and no linear combinations are possible. Yet switching between just these two options yields improvements exceeding 100\% on several tasks, confirming that hierarchy does real work beyond what the basis alone affords. As $k$ increases, improvement magnitudes decrease but remain largely positive.  \textsc{Quadruped} is the exception: tasks here are already well-approximated by the leading eigenvector, so hierarchical learning degenerates to finding a single weight vector in $\mathbb{R}^k$---a difficult optimization with little to gain. We note that LK can, in principle, recover the zero-shot solution if rewarded offline transitions are available, guaranteeing performance no worse than the zero-shot baseline. The results here, however, are achieved without that assumption---LK learns without any privileged reward information.

Table~\ref{tab:combined_results} (right) answers question (2), reporting the ratio of LK's AUC to TD3's AUC over the first 200K steps; a ratio above 1 means LK accumulates more return than TD3 over the same budget. In \textsc{Cheetah} and \textsc{Walker}, the ratio exceeds 1 for $k \geq 3$ across most tasks, indicating that even a modestly expressive basis is sufficient for LK to be more sample efficient than the flat agent. The effect is strongest in \textsc{Quadruped}, where ratios reach at least $2.4\times$ across all basis sizes---here, the Laplacian basis already captures most task-relevant structure, giving the meta-policy a strong initialization to build on while TD3 starts from scratch. \looseness=-3

Taken together, both questions are answered affirmatively. LK goes beyond what the linear basis can express---even with a severely constrained basis, hierarchical composition improves over the zero-shot solution where scaling alone fails. And across basis sizes, LK accumulates returns faster than a flat agent learning from scratch  (learning curves in Appendix~\ref{sec:hier-comp-appendix})---all without the privileged reward information the zero-shot solution requires.

\section{Comparison Against Privileged Baseline}
~\label{sec:exp_alegre}

As a controlled case study, we compare LK to the Option Keyboard Basis \citep[OKB;][]{alegre2025constructing}, a method that constructs its option basis from environment-provided reward features spanning downstream tasks, with theoretical guarantees of recovering a basis sufficient for optimal control. This comparison directly asks: can LK achieve competitive performance without access to such privileged, handcrafted reward information?

\begin{wrapfigure}{r}{0.38\textwidth}
\centerline{\includegraphics[width=.375\columnwidth]{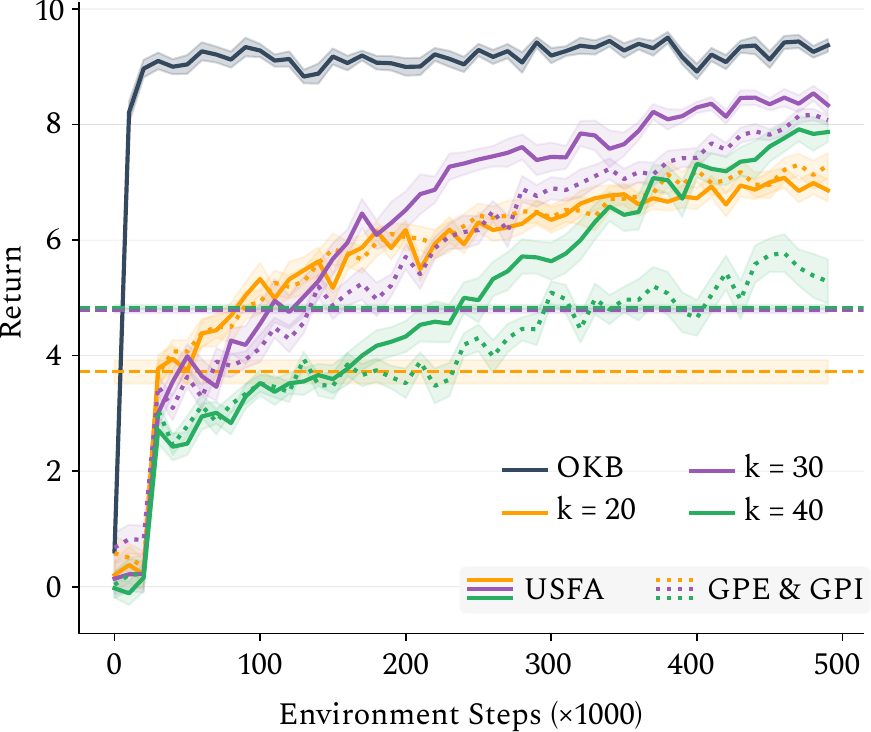}}
\caption{Evaluation returns for LK (varying $k$) and OKB, averaged over 30 runs (shaded: standard error). Dashed horizontal lines indicate the zero-shot performance.}
\label{fig:okb_comp}
\vspace{-5pt}
\end{wrapfigure}

We evaluate LK on the \textsc{Item-Collector} domain---the setting \cite{alegre2025constructing} used to validate their proposed method. It is a $10\times10$ toroidal gridworld where the agent must collect items in a fixed order. OKB requires explicit, handcrafted reward features to construct its option basis; LK, by contrast, requires no such privileged information and is pre-trained solely on a reward-free dataset collected via a uniform random walk. We consider two LK variants that differ in how the meta-policy composes options: \textit{LK-USFA} outputs a continuous weight vector $\mathbf{w} \in \mathbb{R}^k$ and executes the corresponding policy directly, while \textit{LK-GPE\&GPI} selects among a finite set of pre-computed eigenoptions via two successive steps of GPE \& GPI \citep{barreto2017successor}. This ablation isolates the contribution of continuous composition over discrete composition (additional details in Appendix~\ref{sec:meta-policy-param-appdx}).

Figure~\ref{fig:okb_comp} shows that despite having no access to reward features, LK-USFA achieves performance close to OKB, with the gap narrowing as training progresses. Notably, all LK variants improve over their respective zero-shot performance, confirming that hierarchy adds value beyond the linear basis. LK-GPE\&GPI lags behind LK-USFA at larger basis sizes, with the gap widening as $k$ increases---a consequence of compounding approximation error in the two-step pipeline. Together, the results show that strong downstream performance does not require reward-feature engineering: Laplacian eigenvectors form a scalable behavior basis, and continuous composition is the more effective meta-policy parameterization.

\section{Conclusion}

In summary, this work makes two contributions. Our primary contribution is the \emph{Laplacian Keyboard}, a hierarchical framework that composes Laplacian-induced behaviors via a learned meta-policy, allowing the agent to go beyond the linear span of the representation. 
This compositional structure is important in large environments, where no fixed finite basis should be expected to cover the full diversity of downstream tasks \citep{javed2024big}. Rather than treating the basis as a complete solution, LK treats it as a structured starting point---yielding improvements over zero-shot baselines while maintaining strong sample efficiency relative to standard RL agents. We complement this with an analysis of Laplacian eigenvectors as a reward basis, providing empirical validation and theoretical value-function approximation error bounds that characterize the expressivity of a finite Laplacian basis. \looseness=-1

Several directions follow naturally from this work. The \textit{Keyboard} framework is not specific to Laplacian representations and can be paired with any behavior basis. Methods such as HILP \citep{park2024hilp} or FB \citep{touati2022does} could serve as drop-in replacements, giving rise to a family of \textit{Keyboard} agents. A fully online formulation is another natural extension, integrating representation-driven option discovery \citep{machado2023temporal, klissarov2023deep} with LK to remove the dependence on an offline dataset. Two limitations are worth noting. Options are currently terminated after a fixed horizon rather than through principled termination conditions, which we expect to limit option quality. 
Additionally, while LK can in principle recover the zero-shot solution---and therefore should perform no worse---empirical results are mixed in a few cases. We attribute this gap primarily to the fixed-horizon termination condition, which may prevent the meta-policy from fully exploiting the behavior basis. Addressing this is an important direction for future work.

\bibliography{references.bib}
\bibliographystyle{unsrtnat}

\newpage
\appendix
\onecolumn

\section{Extended Related Work}
\label{sec:extended-related-work}

The Laplacian Keyboard bridges representation-based and behavior-based solution methods to address the \textit{zero-shot} RL problem. In this section, we review closely related work along both directions, extending the discussion beyond that provided in the main paper. \looseness=-1

\paragraph{Representation-based Methods:}
A common approach to zero-shot RL is to learn task-agnostic state representations that can be reused for downstream control. Examples include Hilbert Foundation Policies \citep[HILP;][]{park2024hilp}, Proto Successor Measures \citep[PSM;][]{agarwal2025proto}, Regularized Latent Dynamics Prediction \citep[RLDP;][]{jajoo2025regularized}, and TD-JEPA \citep{bagatella2025tdjepa}. While these methods differ in their objectives, they all aim to construct representations that support policy transfer without additional environment interaction.

HILP learns representations sufficient for expressing goal-reaching value functions and uses them to construct intrinsic reward functions for training a latent-conditioned policy. PSM learns a basis over state visitation distributions, enabling downstream behaviors to be synthesized as linear combinations of these bases and supporting zero-shot policy construction via reward-dependent recombination. RLDP and TD-JEPA learn predictive latent representations by modeling future states while regularizing against feature collapse; these representations are then paired with USFA to enable zero-shot transfer. A unified treatment of several representation-based methods is provided by \citet{agarwal2025unified}.

Closely related in spirit to LK are the LoLa and ReLa algorithms proposed by \citet{sikchi2025fast}, which also exploit pre-trained knowledge to improve downstream task performance. However, unlike LK, these methods are not hierarchical in nature and are primarily motivated by reducing approximation errors arising during pre-training and inference. As a result, they are better characterized as fine-tuning or correction mechanisms rather than frameworks for skill composition.

\paragraph{Behavior-based Methods:}

Set of Independent Policies \citep[SIP;][]{alver2022sip}, Set Max Policies \citep[SMP;][]{zahavy2021discovering}, and the Successor Feature Keyboard \citep[SFK;][]{carvalho2023combining} are few examples that focus on learning a reusable set of behaviors or policies to enable zero-shot control.

SIP assumes access to a set of independent reward features and learns a policy for each feature. At test time, GPE \& GPI are used to recover the optimal policy for any reward expressed as a linear combination of these features. SMP retains the linear reward assumption but optimizes for robustness by learning policies that maximize worst-case performance over admissible rewards; zero-shot transfer is again achieved using GPE \& GPI. SFK relaxes the assumption of known reward features and learns a policy basis by extracting shared structure across a set of training tasks.

\vspace{10pt}

A unifying backbone of these methods is the successor representation \citep[SR;][]{dayan1993improving} and its generalization, successor features \citep[SF;][]{barreto2017successor}. These frameworks decompose the value function into a dynamics-dependent component, capturing long-term state visitation, and a task-specific component defined by the reward. SR represents each state by its expected discounted future state occupancies under a policy, while SF extends this idea to feature spaces by modeling expected discounted feature activations. Under the assumption of linear reward functions, this decomposition allows value functions to be computed as inner products between SF and reward weights, naturally enabling zero-shot transfer via GPE \& GPI and USFA.

\newpage

\section{Laplacian Basis}

In this section, we provide the proof for Theorem \ref{theorem:the-one-main-paper}. We first provide the necessary background, followed by the proof. We reintroduce a few details for completeness. 

\subsection{Background}
~\label{sec:app_background}

The proof and theorem are partly inspired from the spectral graph theory literature. Hence we first introduce concepts in the language of graph theory before using them for RL. For notational convenience, this section indexes eigenvectors from $i=1$ (rather than $i=0$ as in the main text); the constant eigenvector $\ei_0$ corresponds to $\ei_1$ here.

\subsubsection{Spectral Theory}
\label{apdx:gft_bg}

\paragraph{Graph Laplacian:} 

Consider an undirected graph $G = (\mathscr{V}, \mathscr{E})$ with vertex set $\mathscr{V}$ and edge set $\mathscr{E}$. Each edge $(i, j) \in \mathscr{E}$ carries a non-negative weight $w_{ij} \ge 0$ quantifying the connection strength between vertices $i$ and $j$. These weights define the weighted adjacency matrix $\mathbf{W} \in \mathbb{R}^{|\mathscr{V}| \times |\mathscr{V}|}$, where $\mathbf{W}_{ij} = w_{ij}$ for $(i, j) \in \mathscr{E}$ and $\mathbf{W}_{ij} = 0$ otherwise. The vertex degrees $d_i = \sum_j w_{ij}$ form the degree matrix $\mathbf{D} = \mathrm{diag}(d_1, d_2, \dots, d_{|\mathscr{V}|})$. For undirected graphs, the adjacency matrix exhibits symmetry: $w_{ij} = w_{ji} \, \forall (i, j) \in \mathscr{E}$.

The \textit{graph Laplacian matrix} is defined as $\mathbf{L} \, \dot{=} \, \mathbf{D} - \mathbf{W}$, a real symmetric positive semi-definite matrix admitting the eigendecomposition $\mathbf{L} = \mathbf{E} \mathbf{\Lambda} \mathbf{E}^\top$. Here, $\mathbf{E} = [\ei_1, \ei_2, \dots, \ei_{|\mathscr{V}|}]$ contains orthonormal eigenvectors and $\mathbf{\Lambda} = \mathrm{diag}(\lambda_1, \lambda_2, \dots, \lambda_{|\mathscr{V}|})$ contains the corresponding non-negative eigenvalues ordered such that $0 = \lambda_1 \le \lambda_2 \le \cdots \le \lambda_{|\mathscr{V}|}$. These eigenvectors encode structural properties of the graph in an ordered fashion: eigenvectors associated with smaller eigenvalues (low frequencies) exhibit smooth variations across the graph, while those associated with larger eigenvalues (high frequencies) capture rapid oscillations between neighboring vertices. \looseness = -1

\paragraph{Graph Fourier Transform:} 
Any graph signal $f: \mathscr{V} \rightarrow \mathbb{R}$ defined over the vertices admits an exact decomposition as a linear combination of the Laplacian eigenvectors:
\begin{equation}
\mathbf{f} = \sum_{i=1}^{|\mathscr{V}|} \tilde{f}_i \ei_i,
\end{equation}
where the \textit{Graph Fourier coefficients} $\tilde{f}_i = \langle \mathbf{f}, \ei_i \rangle$ represent the projection of $\mathbf{f}$ onto the $i$-th eigenvector. The \textit{Graph Fourier Transform} (GFT) $\mathbf{\tilde{f}} = \mathbf{E}^\top \mathbf{f}$ maps signals from the vertex domain to the spectral domain, where $\mathbf{\tilde{f}} = [\tilde{f}_1, \tilde{f}_2, \dots, \tilde{f}_{|\mathscr{V}|}]^\top$ denotes the signal's spectral representation. This transformation serves as the graph analog of the classical Fourier transform, decomposing arbitrary graph signals into fundamental frequency components. Signals dominated by small coefficients $\tilde{f}_i$ for low $i$ vary smoothly across the graph topology, while those with significant high-index coefficients exhibit complex, irregular patterns that vary sharply between connected vertices. \looseness = -1

A fundamental property of the GFT is energy preservation across domains, formalized by Parseval's theorem \cite{shuman2016vertex}:

\begin{lemma}[Parseval's Theorem]
The energy of a signal in the vertex domain equals its energy in the spectral domain:
\begin{equation}
\sum_{i=1}^{|\mathscr{V}|} f(i)^2 = \sum_{i=1}^{|\mathscr{V}|} \tilde{f}_i^2.
\end{equation}
\label{lm:parseval}
\end{lemma}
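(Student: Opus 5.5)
Parseval's theorem (Lemma~\ref{lm:parseval}) is a finite-dimensional statement about orthonormal change of basis, so I will prove it directly from the orthonormality of the Laplacian eigenvectors established above.

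\textbf{Orthonormality of $\mathbf{E}$.} Since $\mathbf{L}=\mathbf{D}-\mathbf{W}$ is real symmetric, the spectral theorem provides an orthonormal eigenbasis $\ei_1,\dots,\ei_{|\mathscr{V}|}$. This holds even when eigenvalues are repeated, by choosing an orthonormal basis within each eigenspace. Hence the matrix $\mathbf{E}$ is orthogonal: $\mathbf{E}^\top\mathbf{E}=\mathbf{E}\mathbf{E}^\top=\mathbf{I}$. The second identity follows from the first because $\mathbf{E}$ is square.

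\textbf{Norm preservation.} Write the Graph Fourier Transform as $\tilde{\mathbf{f}}=\mathbf{E}^\top\mathbf{f}$. Then
\[
\sum_i \tilde f_i^2=\tilde{\mathbf{f}}^\top\tilde{\mathbf{f}}=\mathbf{f}^\top\mathbf{E}\mathbf{E}^\top\mathbf{f}=\mathbf{f}^\top\mathbf{f}=\sum_i f(i)^2 .
\]

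\textbf{Alternative via the expansion.} One can instead substitute $\mathbf{f}=\sum_i\tilde f_i\ei_i$ and expand $\langle\mathbf{f},\mathbf{f}\rangle$ as a double sum. The relations $\langle\ei_i,\ei_j\rangle=\delta_{ij}$ eliminate the cross terms, leaving $\sum_i\tilde f_i^2$.

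The only point needing care is the justification that $\mathbf{E}$ is square and orthogonal, so that $\mathbf{E}\mathbf{E}^\top=\mathbf{I}$. This rests on the completeness of the eigenbasis for symmetric matrices. I expect no genuine obstacle.
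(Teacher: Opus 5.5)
Your proof is correct. The paper gives no proof of this lemma and simply cites Shuman et al.\ (2016). Your argument is the standard one behind that citation: the spectral theorem gives a square orthogonal $\mathbf{E}$, so $\sum_i \tilde f_i^2=\mathbf{f}^\top\mathbf{E}\mathbf{E}^\top\mathbf{f}=\sum_i f(i)^2$. It uses nothing about the Laplacian beyond symmetry, so the identity holds for any orthonormal basis.
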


This property allows us to measure energy of the signal as a function of the Graph Fourier coefficients.

\paragraph{Truncated Spectral Approximation:} 
For large graphs where $|\mathscr{V}|$ is large, computing and storing the full basis becomes computationally prohibitive. This constraint requires restricting the representation to a \textit{truncated spectral basis} consisting of the first $k \ll |\mathscr{V}|$ eigenvectors corresponding to the smallest eigenvalues. The resulting approximation
\begin{equation}
\mathbf{\hat{f}}_k = \sum_{i=1}^{k} \tilde{f}_i \ei_i
\end{equation}
captures the signal's low-frequency components---variations that are smooth with respect to the graph structure. This dimensionality reduction induces an inductive bias toward smoother signals that vary gradually across connected vertices while filtering out high-frequency fluctuations. The truncation sacrifices exact reconstruction in favor of computational efficiency. \looseness = -1

The approximation error induced by truncation can be quantified through Parseval's theorem. Since the omitted eigenvectors correspond to the high-frequency components, the squared reconstruction error is given by
\begin{equation}
    \operatorname{error}(f,k) \dot{=} \sum_{i = k+1}^{|\mathscr{V}|} \tilde{f}_i^2,
\end{equation}
representing the energy concentrated in the discarded frequencies. This quantity measures the information loss incurred by restricting the reconstruction space to the subspace spanned by the first $k$ eigenvectors. As $k$ increases, the approximation error decreases, tightening the reconstruction quality.

\paragraph{Graph Signal Smoothness:} 
To formalize the notion of smoothness for signals defined over graphs, we rely on the \textit{graph total variation}, which quantifies local variations weighted by edge connectivity \citep{zhu2012approximating}.

\begin{definition}[Graph Total Variation]
For any graph signal $f: \mathscr{V} \to \mathbb{R}$, the graph total variation is defined as
\begin{equation}
    \|f\|_G \, \dot{=} \, \left[ \frac{1}{2}  \sum_{(i,j) \in \mathscr{E}} w_{ij} (f(i) - f(j))^2\right]^{\frac{1}{2}}.    
\end{equation}
\end{definition}

The graph total variation captures the differences between signal values at connected vertices, weighted by their edge strengths. Smaller values of $\|f\|_G$ indicate that the signal varies smoothly across connected vertices. The factor $\frac{1}{2}$ accounts for double-counting in undirected graphs; \citet{zhu2012approximating} omit this term by enumerating each edge exactly once. 

A signal $f$ exhibits \textit{bounded variation} if there exists a constant $0 < C \ll \lambda_{|\mathscr{V}|}$ satisfying
\begin{equation}
\|f\|_G^2 \le C \cdot \|f\|^2,
\label{eqn:bounded_variation}
\end{equation}
where $\|f\|$ denotes the standard $\ell_2$ norm of $f$ viewed as a vector over the vertices. This condition ensures that the signal exhibits controlled variability with respect to the underlying graph topology. The reconstruction error induced by truncation can be bounded using this smoothness measure,

\begin{lemma}[Reconstruction Bound \cite{zhu2012approximating}] 
For any signal $f$ with bounded variation, the squared reconstruction error using the first $k$ eigenvectors satisfies
\begin{equation}
    \operatorname{error}(f,k) \le \|f\|_G^2 \cdot \lambda_k^{-1}.
\end{equation}
\label{lm:error_bound}
\end{lemma}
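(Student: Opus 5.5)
The plan is to write the graph total variation as a quadratic form in the Laplacian, expand it in the eigenbasis, and then discard the low-frequency terms. The discarded terms are exactly the ones the truncated reconstruction already captures.

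\textbf{Step 1: quadratic-form identity.} I would first show that $\|f\|_G^2 = \mathbf{f}^\top \mathbf{L}\mathbf{f}$. Expanding the square and using the symmetry $w_{ij}=w_{ji}$ gives
\[
\tfrac12\sum_{i,j} w_{ij}\,(f(i)-f(j))^2 \;=\; \tfrac12\sum_{i,j} w_{ij}\bigl(f(i)^2+f(j)^2\bigr)-\sum_{i,j}w_{ij}f(i)f(j).
\]
The first term equals $\sum_i d_i f(i)^2$, so the right-hand side is $\mathbf{f}^\top(\mathbf{D}-\mathbf{W})\mathbf{f}$. This is the only place where the factor $\tfrac12$ in the definition of the graph total variation matters.

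\textbf{Step 2: spectral expansion.} Substitute $\mathbf{L}=\mathbf{E}\mathbf{\Lambda}\mathbf{E}^\top$ and $\tilde{\mathbf f}=\mathbf{E}^\top\mathbf f$. Orthonormality of the eigenvectors, the same fact that underlies Lemma~\ref{lm:parseval}, then gives
\[
\|f\|_G^2=\sum_{i=1}^{|\mathscr V|}\lambda_i\tilde f_i^2 .
\]

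\textbf{Step 3: drop the low-frequency terms.} All eigenvalues are non-negative because $\mathbf L$ is positive semi-definite, so we may discard the terms $i\le k$. The remaining eigenvalues are ordered, so
\[
\|f\|_G^2\;\ge\;\sum_{i=k+1}^{|\mathscr V|}\lambda_i\tilde f_i^2\;\ge\;\lambda_{k+1}\sum_{i>k}\tilde f_i^2\;\ge\;\lambda_k\,\operatorname{error}(f,k).
\]
The last inequality uses $\lambda_k\le\lambda_{k+1}$. Dividing by $\lambda_k$ gives the stated bound; the argument in fact yields the slightly sharper bound with $\lambda_{k+1}$.

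\textbf{Main obstacle.} There is no analytic difficulty; the delicate point is the division by $\lambda_k$, which requires $\lambda_k>0$. Since $\lambda_1=0$ (constant eigenvector), the statement is vacuous or meaningless for $k=1$. It is also meaningless whenever the graph is disconnected and $k$ does not exceed the number of connected components. I would therefore state explicitly that $k$ is taken large enough that $\lambda_k>0$, which holds for $k\ge2$ on a connected graph.

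The bounded-variation hypothesis is not needed for the inequality itself. Its role is interpretive: it guarantees $\|f\|_G^2\le C\|f\|^2$ with $C\ll\lambda_{|\mathscr V|}$, so that the right-hand side becomes a nontrivial fraction of $\|f\|^2$ once $\lambda_k$ exceeds $C$. I would remark on this rather than use it in the proof.
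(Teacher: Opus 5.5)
Your proof is correct. The paper gives no argument of its own for this lemma and simply imports it from \citet{zhu2012approximating}, where it follows by exactly your route: write $\|f\|_G^2=\mathbf{f}^\top\mathbf{L}\mathbf{f}=\sum_i\lambda_i\tilde f_i^2$ and then discard the first $k$ nonnegative terms. Your side remarks are also accurate. Summing over ordered pairs with the factor $\tfrac12$ is what makes the quadratic-form identity hold without a stray factor of $2$, and the bound needs $\lambda_k>0$, so it is vacuous for $k=1$. The same argument gives the sharper constant $\lambda_{k+1}^{-1}$, and the bounded-variation hypothesis plays no role in the inequality itself.
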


The bound reveals two key factors influencing approximation quality. (1) Signals that vary smoothly over the graph structure, reflected by smaller $\|f\|_G$, incur smaller errors. (2) Increasing the basis size improves reconstruction: as $k$ grows, the corresponding eigenvalue $\lambda_k$ increases, tightening the bound. Together, they characterize how signal smoothness and basis dimensionality jointly determine reconstruction accuracy. \looseness = -1

\subsubsection{Laplacian for RL}
The spectral framework introduced above extends naturally to RL.

% We revert to the main text's convention for the remainder of the appendix: eigenvectors are indexed from $i=0$, and the constant eigenvector $\mathbf{e}_0$ is excluded from the basis $\{\mathbf{e}_1, \ldots, \mathbf{e}_k\}$.

% \paragraph{Reversible MDPs:} 
% A Markov chain with state space $\mathscr{X}$ and transition matrix $\mathbf{P}$ is \textit{reversible} \citep{levin2017markov} if there exists a probability distribution $\pi$ on $\mathscr{X}$ satisfying the \textit{detailed balance equations}:
% \[
% \pi(x) P(x, y) = \pi(y) P(y, x) \quad \forall \, x, y \in \mathscr{X}.
% \]
% In the RL context, we call a policy $\pi$ reversible if the Markov chain it induces is reversible. Let $\pp(s, s') \, \dot{=} \, \sum_{a \in \mathscr{A}} \pi(a \mid s)\, p(s' \mid s, a)$ denote the policy-induced transition matrix. Then $\pi$ is reversible if the detailed balance condition holds: $\pp(s, s') = \pp(s', s) \, \forall \, s, s' \in \mathscr{S}$.

\paragraph{Graph Laplacian for MDPs:}
An MDP can be represented as an undirected graph $G = (\mathscr{S}, \mathscr{E})$, where states form the vertex set and edges encode feasible transitions. An edge $(s, s') \in \mathscr{E}$ exists whenever the transition probability under policy $\pi$ is nonzero, with weight
\[
w_{ss'} \, \dot{=} \, \pp(s, s') = \sum_{a \in \mathscr{A}} \pi(a \mid s)\, p(s' \mid s, a).
\]
The transition matrix $\pp$ serves as the adjacency matrix $\mathbf{W}$, and since each row of $\pp$ sums to one, the degree matrix reduces to the identity: $\mathbf{D} = \mathbf{I}$. The graph Laplacian is then defined as $\mathbf{L} \, \dot{=} \, \mathbf{I} - f_{\text{sym}}(\pp)$, where $f_{\text{sym}}(\pp) = \frac{\pp + \pp^\top}{2}$ serves two roles: it symmetrizes $\pp$ to yield an undirected graph, and ensures the resulting matrix is doubly stochastic. Together, these properties guarantee that $\mathbf{L}$ is positive semi-definite with eigenvalues $0 = \lambda_1 \leq \lambda_2 \leq \cdots \leq \lambda_{|\mathscr{S}|}$, and that its eigenvectors $\{\ei_i\}_{i=1}^{|\mathscr{S}|}$ form an orthonormal basis for functions over the state space.

\paragraph{Spectral Approximation of Reward Functions:} 
The primary graph signals of interest in RL are the reward function $r: \mathscr{S} \to \mathbb{R}$ and the value function $v: \mathscr{S} \to \mathbb{R}$. When these functions exhibit bounded variation with respect to the state space---meaning that states with high transition probability tend to have similar rewards or values---they can be efficiently approximated using a truncated Laplacian basis. In this work, we focus on the spectral representation of reward functions. Any reward function $r: \mathscr{S} \to \mathbb{R}$ admits the approximation
\begin{equation}
    \mathbf{\hat{r}}_k = \sum_{i=1}^{k} \langle \mathbf{r}, \ei_i \rangle \, \ei_i = \sum_{i=1}^{k} \tilde{r}_i \, \ei_i,
\end{equation}
where the first $k$ eigenvectors capture the low-frequency components of the reward signal. This truncated basis restricts the representable reward space to smooth functions that vary gradually across connected states, filtering out high-frequency reward structures that exhibit sharp variations in the state space. \looseness = -1

\subsection{Formal Proof}
\label{sec:proof}

With the necessary background established, we now bound the change in the optimal value function caused by using an approximated reward function.

\begin{theorem}[\ValueApproxName \, (restated)]
Let $r: \mathscr{S} \to \mathbb{R}$ be a reward function with bounded variation and $\mathbf{\hat{r}}_k$ its reconstruction using the first $k$ eigenvectors. Let $\mathbf{v}^*$ and $\mathbf{v}^*_k$ be their corresponding optimal value functions. Then,
\begin{equation}
    \|\mathbf{v}^* - \mathbf{v}^*_k\|_\infty \le \frac{\|r\|_G}{(1-\gamma)\sqrt{\lambda_k}},
\end{equation}
where $\lambda_k$ is the largest eigenvalue included in the truncated basis.

\end{theorem}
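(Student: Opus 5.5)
The argument splits into a reward-level estimate and a value-level propagation step. At the reward level, Lemma~\ref{lm:error_bound} (together with Parseval, Lemma~\ref{lm:parseval}) controls the reconstruction error in the $\ell_2$ norm, and we pass from $\ell_2$ to $\ell_\infty$. At the value level, the Bellman optimality operator transfers a sup-norm reward perturbation to the optimal values with amplification at most $(1-\gamma)^{-1}$.

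\textbf{Step 1 (reward error).} Write $\mathbf{r}-\mathbf{\hat{r}}_k=\sum_{i>k}\tilde r_i\ei_i$. By Parseval, $\|\mathbf{r}-\mathbf{\hat{r}}_k\|_2^2=\operatorname{error}(r,k)$. Lemma~\ref{lm:error_bound} applies because $r$ has bounded variation, and gives $\operatorname{error}(r,k)\le\|r\|_G^2/\lambda_k$.

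I would still sketch why the lemma holds, since this is where the eigenvalue indexing matters. One has $\|r\|_G^2=\mathbf{r}^\top\mathbf{L}\mathbf{r}=\sum_i\lambda_i\tilde r_i^2\ge\sum_{i>k}\lambda_i\tilde r_i^2$. The last sum is at least $\lambda_{k+1}\sum_{i>k}\tilde r_i^2\ge\lambda_k\,\operatorname{error}(r,k)$.

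This identity needs care with the symmetrized weights. With $f_{\text{sym}}(\pp)$ as adjacency and unit degrees, the quadratic form $\mathbf{r}^\top\mathbf{L}\mathbf{r}$ equals $\tfrac12\sum_{i,j}f_{\text{sym}}(\pp)_{ij}(r(i)-r(j))^2$. The graph norm in Eqn.~\ref{eqn:grap-norm} uses $\pp$ rather than its symmetrization, but it coincides with this expression because $(r(i)-r(j))^2$ is symmetric in $i,j$.

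Since $\|\cdot\|_\infty\le\|\cdot\|_2$, we conclude
\[
\|\mathbf{r}-\mathbf{\hat{r}}_k\|_\infty\le\frac{\|r\|_G}{\sqrt{\lambda_k}}.
\]
The statement requires $\lambda_k>0$, i.e.\ $k\ge 2$ in the appendix indexing or a connected graph, and I would note this implicitly.

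\textbf{Step 2 (value propagation).} Let $T$ and $T_k$ be the Bellman optimality operators for rewards $r$ and $\hat r_k$. Since rewards depend on the next state, $(Tv)(s)=\max_a\sum_{s'}p(s'|s,a)[r(s')+\gamma v(s')]$, and similarly for $T_k$. For any $v$, using $|\max f-\max g|\le\max|f-g|$, we get $\|Tv-T_kv\|_\infty\le\|\mathbf{r}-\mathbf{\hat{r}}_k\|_\infty=:\epsilon$. Then
\[
\|\mathbf{v}^*-\mathbf{v}^*_k\|_\infty=\|T\mathbf{v}^*-T_k\mathbf{v}^*_k\|_\infty\le\|T\mathbf{v}^*-T_k\mathbf{v}^*\|_\infty+\|T_k\mathbf{v}^*-T_k\mathbf{v}^*_k\|_\infty\le\epsilon+\gamma\|\mathbf{v}^*-\mathbf{v}^*_k\|_\infty.
\]
Rearranging gives $\|\mathbf{v}^*-\mathbf{v}^*_k\|_\infty\le\epsilon/(1-\gamma)$. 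Combining with Step 1 yields the claimed bound.

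\textbf{Main obstacle.} I expect the delicate point to be Step 1's spectral inequality and its index bookkeeping. The lemma is stated with $\lambda_k$, while the natural bound gives $\lambda_{k+1}\ge\lambda_k$, so the stated version is weaker and therefore still valid. There is also the shift between main-text indexing (from $0$) and appendix indexing (from $1$). I would handle both by working entirely in appendix indexing and simply invoking Lemma~\ref{lm:error_bound}. Step 2 is the standard simulation-lemma contraction argument.
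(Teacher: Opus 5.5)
Your proposal is correct and follows the paper's proof essentially step for step. For the reward error it uses Parseval, Lemma~\ref{lm:error_bound} and $\|\cdot\|_\infty\le\|\cdot\|_2$, and then a Bellman-optimality contraction argument gives the $(1-\gamma)^{-1}$ amplification. Your Rayleigh-quotient sketch of the lemma (which relies on the unit degrees of $f_{\text{sym}}(\pp)$ that the paper asserts) and your two-sided bound $|\max f-\max g|\le\max|f-g|$ are minor refinements rather than a different route. The second one cleanly justifies the sup-norm step, where the paper writes only the one-sided inequality.
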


\begin{proof}
We establish the bound in two stages: first, we bound the approximation error $\|\mathbf{r} - \mathbf{\hat{r}}_k\|_{\infty}$, then propagate this through the value function.

\textbf{Step 1: Bounding the Signal Approximation Error.}

Here, $\mathbf{r} - \mathbf{\hat{r}}_k$ is the residual reward function. By Parseval's theorem (Lemma \ref{lm:parseval}), the $\ell_2$ error is
\begin{equation}
    \|\mathbf{r} - \mathbf{\hat{r}}_k\|_2^2 = \sum_s (r(s) - \hat{r}_k(s))^2 = \sum_{i=k+1}^{|\mathscr{S}|-1} \tilde{r}_i^2,
\end{equation}

where $\tilde{r}_i$ is the $i$-th Graph Fourier coefficient. Applying Lemma \ref{lm:error_bound}, we obtain
\begin{equation}
    \|\mathbf{r} - \mathbf{\hat{r}}_k\|_2^2 \leq \frac{\|r\|_G^2}{\lambda_k}.
\end{equation}
Since the $\ell_{\infty}$ norm is bounded by the $\ell_2$ norm, i.e., $\|\mathbf{r} - \mathbf{\hat{r}}_k\|_{\infty}^2 \leq \|\mathbf{r} - \mathbf{\hat{r}}_k\|_2^2$, we have
\begin{equation}
    \|\mathbf{r} - \mathbf{\hat{r}}_k\|_{\infty} \leq \frac{\|r\|_G}{\sqrt{\lambda_k}} \coloneq \xi.
    \label{eq:approximation_error}
\end{equation}

\textbf{Step 2: Propagating Error Through the Value Function.}

Let $v^*$ and $v_k^*$ be the optimal value functions of $r$ and $\hat{r}_k$. By the Bellman optimality equation:
\begin{equation}
    v^*(s) = \max_{a \in \mathscr{A}} \sum_{s' \in \mathscr{S}} p(s'|s,a) \left[ r(s') + \gamma v^*(s') \right]
\end{equation}

To bound the difference $\|v^* - v_k^*\|_\infty$, we start with the point-wise difference:
\begin{align}
    v^*(s) - v_k^*(s) &= \max_a \left[ \sum_{s'} p(s'|s,a) (r(s') + \gamma v^*(s')) \right] - \max_a \left[ \sum_{s'} p(s'|s,a) (\hat{r}_k(s') + \gamma v_k^*(s')) \right] \notag \\
    &\leq \max_a \left[ \sum_{s'} p(s'|s,a) \left( (r(s') - \hat{r}_k(s')) + \gamma(v^*(s') - v_k^*(s')) \right) \right]
\end{align}
where we used the property $\max_a f(a) - \max_a g(a) \leq \max_a (f(a) - g(a))$. 

Taking the supremum over $s \in \mathscr{S}$ on both sides:
\begin{align}
    \|\mathbf{v}^* - \mathbf{v}_k^*\|_{\infty} &\leq \left\| \max_a \sum_{s'} p(s'|s,a) \left[ (r(s') - \hat{r}_k(s')) + \gamma(v^*(s') - v_k^*(s')) \right] \right\|_\infty \notag \\
    &\leq \max_{s,a} \sum_{s'} p(s'|s,a) \left| r(s') - \hat{r}_k(s') \right| + \gamma \max_{s,a} \sum_{s'} p(s'|s,a) \left| v^*(s') - v_k^*(s') \right|
\end{align}

Since $\sum_{s'} p(s'|s,a) = 1$, the weighted average is bounded by the maximum difference:
\begin{equation}
    \sum_{s'} p(s'|s,a) |r(s') - \hat{r}_k(s')| \leq \|\mathbf{r} - \mathbf{\hat{r}}_k\|_{\infty} \quad \text{and} \quad \sum_{s'} p(s'|s,a) |v^*(s') - v_k^*(s')| \leq \|\mathbf{v}^* - \mathbf{v}_k^*\|_{\infty}
\end{equation}

Substituting these back into our inequality, we obtain:
\begin{equation}
    \|\mathbf{v}^* - \mathbf{v}_k^*\|_{\infty} \leq \|\mathbf{r} - \mathbf{\hat{r}}_k\|_{\infty} + \gamma \|\mathbf{v}^* - \mathbf{v}_k^*\|_{\infty}
\end{equation}

Rearranging yields
\begin{equation}
    (1 - \gamma) \|\mathbf{v}^* - \mathbf{v}_k^*\|_{\infty} \leq \|\mathbf{r} - \mathbf{\hat{r}}_k\|_{\infty}.
\end{equation}

\begin{equation}
    \|\mathbf{v}^* - \mathbf{v}_k^*\|_{\infty} \leq \frac{\|\mathbf{r} - \mathbf{\hat{r}}_k\|_{\infty}}{(1 - \gamma)}.
    \label{eqn:tighter-bound}
\end{equation}

Note that when the reward function lies in the span of the first $k$ eigenvectors, exact reconstruction holds ($r = \hat{r}_k$), yielding zero error. Alternatively, this upper bound can be expressed in terms of the graph total variation. Substituting the bound from Equation~\ref{eq:approximation_error}:

\begin{equation}
    \|\mathbf{v}^* - \mathbf{v}_k^*\|_{\infty} \leq \frac{\xi}{1-\gamma} = \frac{\|r\|_G}{(1-\gamma)\sqrt{\lambda_k}}.
    \label{eqn:looser-bound}
\end{equation}

Finally, we note that the bound in Eqn. \ref{eqn:looser-bound} is looser, but it offers a more direct interpretation than the bound in Eqn. \ref{eqn:tighter-bound}. This concludes the proof.

\end{proof}

\clearpage

\subsection{Empirical Evaluation of Theorem~\ref{theorem:the-one-main-paper}}

Figure~\ref{fig:emperical-proof} illustrates an empirical evaluation in the \texttt{Four-Rooms} domain using four simple reward functions. We measure the approximation error of the resulting value function as a function of the basis size used to reconstruct the reward. The reward associated with any transition $(s,a,s')$ depends solely on the successor state $s'$, and episodes terminate upon reaching the designated bright red states.

The basis is formed by eigenvectors induced by a uniform random policy, and value functions are computed via value iteration. Across all settings, the proposed bound constitutes a strict upper bound on the observed value approximation error. As the basis size increases, the gap between the bound and the empirical error consistently decreases. Additionally, reward functions that vary smoothly over the state space exhibit substantially lower approximation error than noisier reward functions, highlighting the bias induced by using a truncated basis.

\begin{figure*}[t]
\centerline{\includegraphics[width=\textwidth]{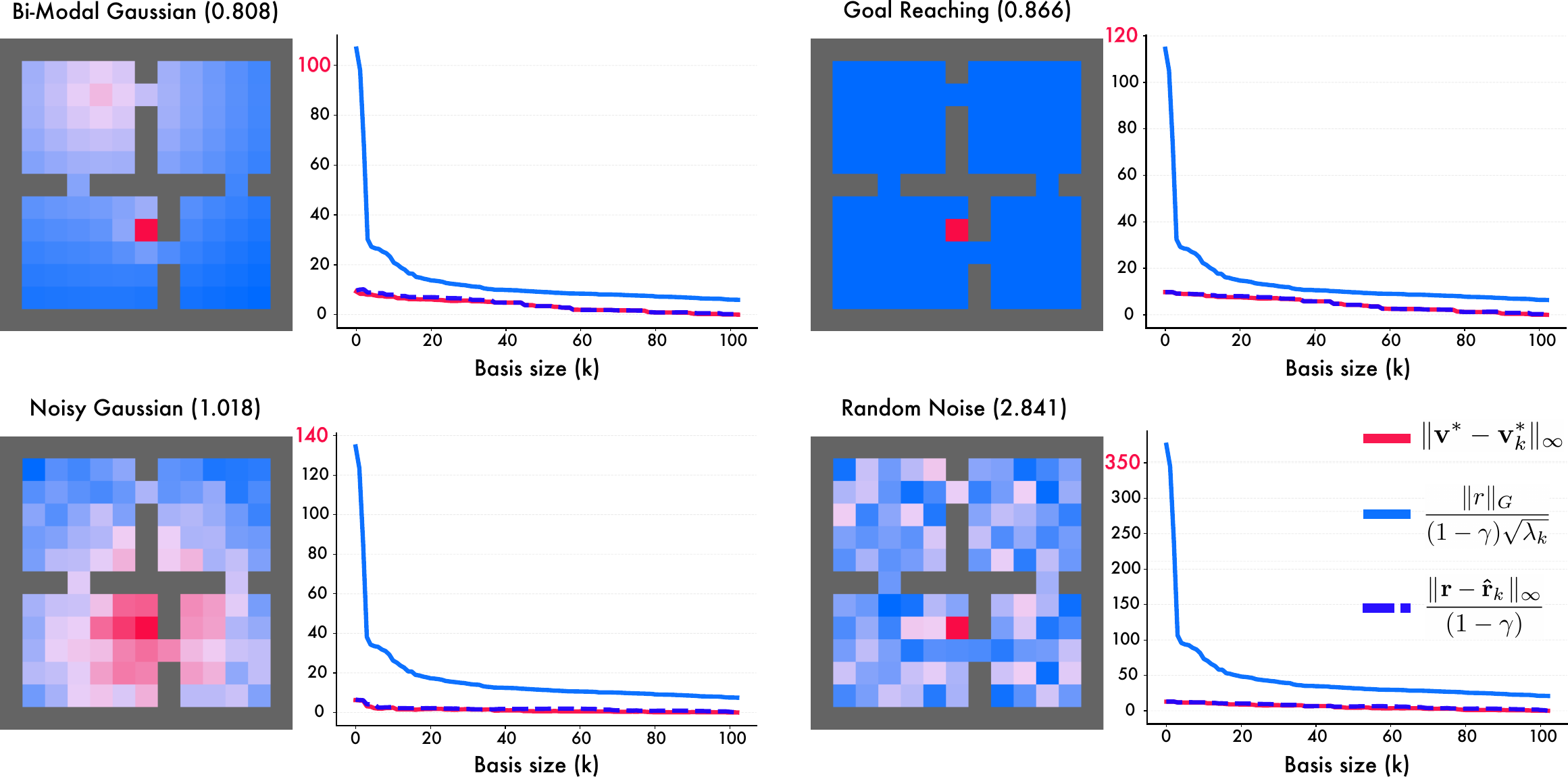}}
\caption{Reward reconstruction and induced value function error in \texttt{Four-Rooms}.
For four reward functions of increasing complexity (shown insets), we plot the value error and two upper bounds (Equation~\ref{eqn:tighter-bound} and ~\ref{eqn:looser-bound}) as a function of basis size $k$. The value in parenthesis corresponds to the reward function's graph norm. This environment has a total of 104 states.}
\label{fig:emperical-proof}
\end{figure*}

\clearpage

\section{Experimental Details}
\label{sec:addn_exp}

We provide more experimental details in this section for clarity and reproducibility.

\subsection{Environments}
\label{sec:environments}

\textbf{DeepMind Control} (DMC) Suite \citep{tassa2018deepmind} comprises a variety of continuous control domains; in this work, we focus on three specific domains.

\begin{itemize}

    \item \textsc{Cheetah} is a bipedal agent actuated through a 6-dimensional continuous action space corresponding to joint torques. The observation space is 17-dimensional and consists of joint positions and joint velocities. In the \texttt{Walk} and \texttt{Run} tasks, the reward is a linear tolerance function of the forward torso velocity, encouraging the agent to exceed a task-specific speed threshold. The \texttt{Walk-Backward} and \texttt{Run-Backward} tasks are defined analogously by reversing the direction of the velocity term, thereby incentivizing backward locomotion.

    \item \textsc{Quadruped} is a four-legged agent actuated through a 12-dimensional continuous action space corresponding to joint torques. The observation space is 78-dimensional and consists of egocentric joint states, torso motion, inertial measurements, and force--torque readings at the feet. In the \texttt{Stand} task, the reward is a linear tolerance function encouraging an upright torso. The \texttt{Jump} task additionally rewards the agent for raising its center of mass above a target height while maintaining an upright posture. In the \texttt{Walk} and \texttt{Run} tasks, the reward is the product of the uprightness term and a forward-velocity term, incentivizing the agent to move at a task-specific target speed while remaining upright.

    \item \textsc{Walker} is a planar bipedal agent actuated through a 6-dimensional continuous action space corresponding to joint torques. The observation space is 24-dimensional and consists of body orientations, torso height, and joint velocities. In the \texttt{Run} tasks, the reward is a linear tolerance function of the horizontal torso velocity, encouraging the agent to achieve a target forward or backward speed. In the \texttt{Flip} tasks, the reward is instead defined over the torso’s angular momentum magnitude, incentivizing rapid rotational motion independent of translational velocity.

\end{itemize}

Across all DMC domains, tasks share the same underlying dynamics and differ only in their reward functions. Episodes have a fixed horizon of 1000 timesteps without early termination. We measure the undiscounted return, corresponding to setting $\gamma = 1$, as the performance metric.

\begin{figure*}[t]
\centerline{\includegraphics[width=.8\textwidth]{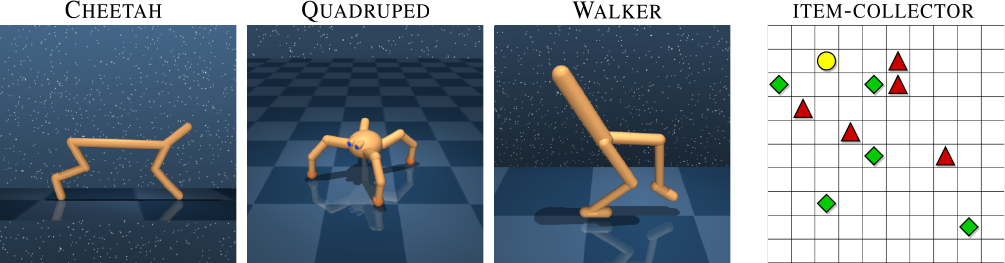}}
\caption{The 4 environments we use to empirically evaluate the Laplacian Keyboard.}
\label{fig:environments}
\end{figure*}

\vspace{20pt}
\textbf{\textsc{Item-Collector}} is a discrete grid-world environment originally introduced by \citet{barreto2019option} and later used in \citet{alegre2025constructing} (Figure~\ref{fig:environments}). The environment consists of a $10 \times 10$ toroidal grid and a discrete action space of four cardinal movements. At the beginning of each episode, five items of each of two distinct types are placed uniformly at random across the grid. The agent’s objective is to collect all items of one type before collecting those of the other. Episodes terminate after 50 timesteps and admit a maximum undiscounted return of 10. The privileged reward features are two-dimensional and indicate which item type, if any, is collected at each transition. Performance is evaluated using the undiscounted return, corresponding to a discount factor of $\gamma = 1$.

\newpage

\subsection{Training Details}
\label{sec:training-details-appdx}

\paragraph{Laplacian Encoder:}
The Laplacian Encoder is trained using ALLO, the contrastive objective shown in Equation~\ref{eqn:allo}:

\begin{equation}
\max_{\boldsymbol{\beta}} \min_{\mathbf{u} \in \mathbb{R}^{k|\mathscr{S}|}}
\underbrace{
  \sum_{i=1}^k \langle \mathbf{u}_i, \mathbf{L}\mathbf{u}_i \rangle
}_{\text{smoothness}}
+
\underbrace{
  \sum_{j=1}^k \sum_{k=1}^j
  \Big[
    \beta_{jk}
    \big(
      \langle \mathbf{u}_j, \llbracket \mathbf{u}_k \rrbracket \rangle
      - \delta_{jk}
    \big)
    +
    b
    \big(
      \langle \mathbf{u}_j, \llbracket \mathbf{u}_k \rrbracket \rangle
      - \delta_{jk}
    \big)^2
  \Big]
}_{\text{orthogonality constraints}}
\label{eqn:allo}
\end{equation}

Each training batch consists of a positive pair of states and a negative state. The positive pair is used to estimate the smoothness term, while the negative sample contributes to the orthogonality constraints. To construct a positive pair, we sample an episode and a reference state from that episode, and then sample a paired state according to a geometric distribution parameterized by $\gamma_{\text{allo}}$. The negative sample is drawn as another random state from the same episode. The Laplacian Encoder is trained via gradient descent and maps raw observations to a $k$-dimensional Laplacian representation.

\paragraph{Universal Successor Feature Approximators (USFA):}
USFA is trained as a value function, instantiated as the critic in a TD3 agent for continuous action spaces and as the Q-network in a Double DQN agent for discrete action spaces. We do not employ additional offline RL-specific techniques. To expose the USFA to a broad class of reward functions during pre-training, we sample weight vectors $\mathbf{w} \in \mathbb{R}^k$ following \citet{touati2022does}.
\begin{enumerate}
    \item Weight vectors sampled uniformly from the sphere of radius $\sqrt{k}$, corresponding to reward functions given by random linear combinations of the Laplacian eigenvectors.
    \item Weight vectors obtained by sampling a random state from the offline dataset and using its Laplacian representation as $\mathbf{w}$, corresponding to a goal-reaching reward with the sampled state as the goal.
\end{enumerate}
The actor takes the raw observation and the weight vector as input and outputs an action. The critic takes the raw observation, action, and weight vector as input and outputs the corresponding successor feature. Gradients from the USFA do not propagate into the Laplacian Encoder.

\paragraph{Zero-shot:}
After training the Laplacian encoder and the USFA, and given a downstream task, we randomly sample $N$ transitions $\{(s_i, a_i, s'_i)\}_{i=1}^N$ from the offline dataset and assign each transition its corresponding reward. The zero-shot weight vector is then estimated as
\[
\mathbf{w} \;=\; \frac{1}{N}\sum_{i=1}^N r(s'_i)\,\boldsymbol{\phi}(s'_i),
\]
which coincides with the least-squares solution under the assumption that the learned feature matrix is full rank. This weight vector parameterizes the USFA and induces the corresponding zero-shot policy. \looseness=-1

\label{para:zero-shot-assumption}
Computing the zero-shot weight vector requires access to rewarded transitions from the offline dataset---specifically, reward labels $r(s')$ for states uniformly sampled from $\mathcal{D}$. This is a non-trivial assumption: the agent cannot obtain such labels through environment interaction alone, as it would require querying rewards for arbitrary offline states. This privileged access is precisely what LK's hierarchical phase removes---the meta-policy learns to compose behaviors through online interaction, without any assumption on rewarded offline transitions. Recent work \citep{rupf2025optimistic} further explores how an agent can approach the \textit{ideal} zero-shot performance through interaction alone, without relying on this assumption.

\paragraph{Meta-Policy:}
The meta-policy is trained online using TD3. At each decision point, it takes the raw observation as input and outputs a weight vector $\mathbf{w} \in \mathbb{R}^k$, which parameterizes the USFA to instantiate an option. The instantiated option then executes for a fixed horizon $t_{\text{term}}$, during which environment rewards are accumulated. The meta-policy is trained to maximize the discounted return
$
\sum_{t=0}^{t_{\text{term}}} \gamma^{t} r_t
$.
Exploration in the weight space is achieved by adding Gaussian noise to the output weight vectors. During meta-policy training, both the Laplacian Encoder and the USFA are kept fixed, and gradients do not propagate through either module.

\subsection{Hyperparameter Selection:}

In this section, we describe the hyperparameter selection procedure for LK on the DMC suite. For \textsc{Item-Collector}, hyperparameters were derived by loosely adapting the settings used for DMC.

\paragraph{Pre-training:}
The majority of our hyperparameters are adopted from prior work, specifically \citet{gomezproper} for the Laplacian encoder training and \citet{touati2022does} for the USFA module training. 
During the pretraining phase, we observed improved training stability and final performance when using asymmetric step sizes for the USFA module: a smaller step size for the actor and a larger step size for the critic (SF estimator). We conducted a grid search over the following hyperparameters: gradient clipping coefficient $\lambda_{\text{usfa-c}} \in \{0.01, 0.001\}$, target network update coefficient $\tau_{\text{usfa-c}} \in \{0.01, 0.001\}$, policy update delay $d_{\text{usfa-a}} \in \{1, 3, 5\}$, policy noise standard deviation $\sigma_{\text{usfa-a}} \in \{0.0, 0.1, 0.2\}$. Here the subscripts usfa-a and usfa-c refers to the actor and the SF estimator modules respectively. The final pretraining hyperparameters were selected based on cumulative zero-shot performance across all 12 domain-task pairs on the APS dataset, with a basis size of $k=50$, averaged over 10 random seeds.

\paragraph{Downstream:} 
For the downstream adaptation phase, we fixed the pretraining hyperparameters determined above and conducted a separate grid search to optimize the meta-policy. Using a reduced basis size of $k=3$, we searched over: exploration noise $\sigma_{\text{lk}} \in \{0.01, 0.001\}$, batch size $B_{\text{lk}} \in \{16, 32, 64\}$, policy update delay $d_{\text{lk}} \in \{1, 3, 5\}$, target network update coefficient $\tau_{\text{lk}} \in \{0.01, 0.001\}$, and option duration $t_{\text{term}} \in \{1, 5, 10\}$. The optimal configuration was selected based on performance on \textsc{Cheetah} \texttt{Run} and \texttt{Run-B} across 10 seeds, then applied uniformly across all domains, tasks, and basis sizes. \looseness=-1
\begin{table}[t]
\centering
\scriptsize

\begin{minipage}[t]{0.48\textwidth}
\centering
\setlength{\tabcolsep}{6pt}
\renewcommand{\arraystretch}{1.2}
\caption{\small{Hyperparameters for DMC.}}
\label{tab:hyperparameters_left}
\begin{tabular}{lcc}
\toprule
\textbf{Hyperparameter} & \textbf{Symbol} & \textbf{Value} \\
\midrule

\rowcolor{gray!15}
\multicolumn{3}{l}{\textit{Laplacian Encoder (ALLO)}} \\

Total gradient steps & $N_{\text{allo}}$ & $10^6$ \\
Sampling discount factor & $\gamma_{\text{allo}}$ & 0.5 \\
Step size & $\alpha_{\text{allo}}$ & $10^{-4}$ \\

\rowcolor{gray!15}
\multicolumn{3}{l}{\textit{USFA Module (TD3-based)}} \\

Total gradient steps & $N_{\text{usfa-c}}$ & $10^6$ \\
Gradient clipping coefficient & $\lambda_{\text{usfa-c}}$ & 0.001 \\
Target network update coefficient & $\tau_{\text{usfa-c}}$ & 0.001 \\
Policy update delay & $d_{\text{usfa-a}}$ & 1 \\
Policy noise standard deviation & $\sigma_{\text{usfa-a}}$ & 0.0 \\
Actor step size & $\alpha_{\text{usfa-a}}$ & $10^{-4}$ \\
Critic step size & $\alpha_{\text{usfa-c}}$ & $10^{-3}$ \\
Discount factor & $\gamma_{\text{usfa}}$ & 0.98 \\

\rowcolor{gray!15}
\multicolumn{3}{l}{\textit{Meta-Agent (TD3-based)}} \\

Samples for zero-shot task inference & $N$ & $10^4$ \\
Total gradient steps & $N_{lk}$ & $2 \times 10^5$ \\
Batch size & $B_{\text{lk}}$ & 16 \\
Target network update coefficient & $\tau_{\text{lk}}$ & 0.01 \\
Policy update delay & $d_{\text{lk-a}}$ & 5 \\
Exploration noise standard deviation & $\epsilon_{\text{lk-a}}$ & 0.01 \\
Actor step size & $\alpha_{\text{lk-a}}$ & $10^{-4}$ \\
Critic step size & $\alpha_{\text{lk-c}}$ & $10^{-3}$ \\
Option horizon & $t_{\text{term}}$ & $5$ \\
Discount factor & $\gamma_{\text{lk}}$ & 0.98 \\
\bottomrule
\end{tabular}
\end{minipage}
\hfill
\begin{minipage}[t]{0.48\textwidth}
\centering
\setlength{\tabcolsep}{6pt}
\renewcommand{\arraystretch}{1.2}
\caption{\small{Hyperparameters for \textsc{Item-collector}}}
\label{tab:hyperparameters_right}
\begin{tabular}{lcc}
\toprule
\textbf{Hyperparameter} & \textbf{Symbol} & \textbf{Value} \\
\midrule

\rowcolor{gray!15}
\multicolumn{3}{l}{\textit{Laplacian Encoder (ALLO)}} \\

Total gradient steps & $N_{\text{allo}}$ & $5 \times 10^5$ \\
Sampling discount factor & $\gamma_{\text{allo}}$ & 0.1 \\
Step size & $\alpha_{\text{allo}}$ & $10^{-4}$ \\

\rowcolor{gray!15}
\multicolumn{3}{l}{\textit{USFA Module (DDQN-based)}} \\

Total gradient steps & $N_{\text{usfa}}$ & $10^6$ \\
Gradient clipping coefficient & $\lambda_{\text{usfa}}$ & 0.01 \\
Target network update coefficient & $\tau_{\text{usfa}}$ & 0.001 \\
Target network update frequency & $d_{\text{usfa}}$ & 1 \\
Step size & $\alpha_{\text{usfa}}$ & $10^{-4}$ \\
Discount factor & $\gamma_{\text{usfa}}$ & 0.95 \\
\\
\\

\rowcolor{gray!15}
\multicolumn{3}{l}{\textit{Meta-Agent (TD3-based)}} \\

Samples for zero-shot task inference & $N$ & $5 \times 10^4$ \\
Total gradient steps & $N_{lk}$ & $5 \times 10^5$ \\
Batch size & $B_{\text{lk}}$ & 32 \\
Target network update coefficient & $\tau_{\text{lk}}$ & 0.001 \\
Policy update delay & $d_{\text{lk-a}}$ & 10 \\
Exploration noise standard deviation & $\epsilon_{\text{lk-a}}$ & 0.1 \\
Actor step size & $\alpha_{\text{lk-a}}$ & $10^{-4}$ \\
Critic step size & $\alpha_{\text{lk-c}}$ & $10^{-4}$ \\
Option horizon & $t_{\text{term}}$ & $5$ \\
Discount factor & $\gamma_{\text{lk}}$ & 0.95 \\
\bottomrule
\end{tabular}
\end{minipage}

\end{table}

\paragraph{Baselines:} We adopt the baseline implementations and hyperparameter settings for flat TD3 agents from \citet{huang2022cleanrl} and \citet{yarats2022exorl}, and for FB from \citet{tirinzoni2025zero} and \citet{touati2022does} in DMC, as well as from \citet{alegre2025constructing} for OKB on the \textsc{Item-Collector} task. To ensure a fair comparison with LK, FB is trained for 1M gradient steps, with all other hyperparameters kept consistent with \citet{tirinzoni2025zero}.

\subsection{Computation Resources}

All experiments are conducted on NVIDIA L40S with 40GB VRAM.

\newpage

\begin{figure}
\centerline{\includegraphics[width=\columnwidth]{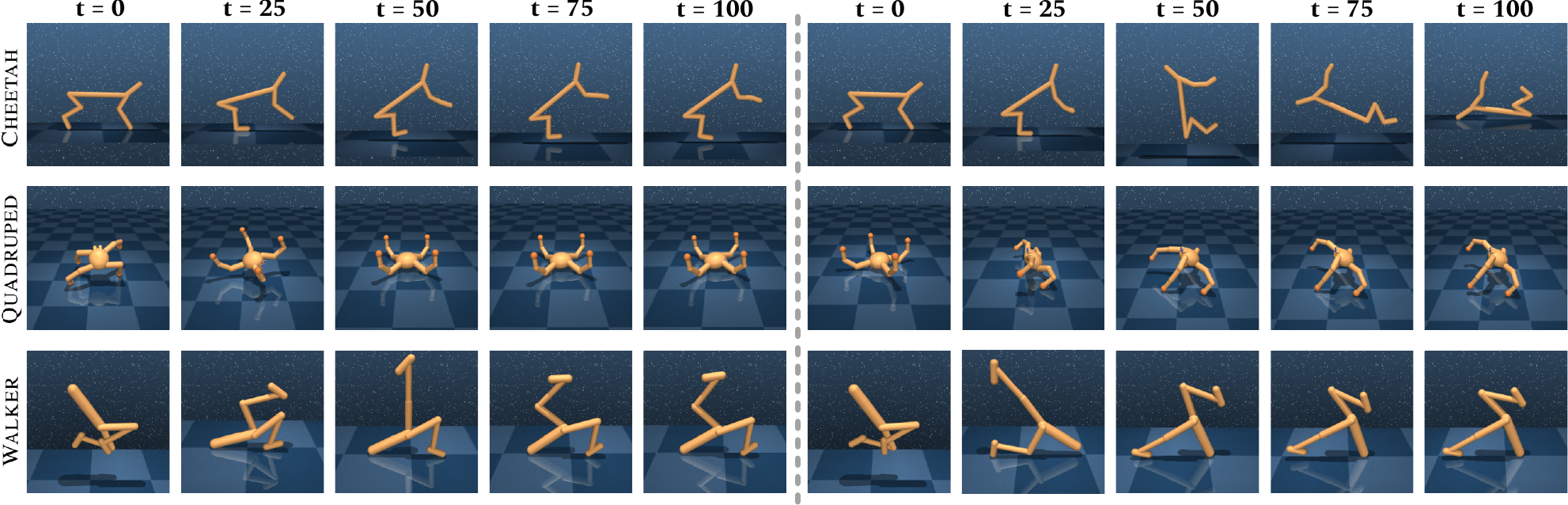}}
\caption{Instances from the Laplacian behavior basis obtained from the \texttt{APS} dataset. The left panel shows the behavior corresponding to $\mathbf{w}_{-\mathbf{e}{_1}}=[-1, 0, 0, \dots]$, while the right panel shows the behavior corresponding to $\mathbf{w}_{+\mathbf{e}{_1}}=[+1, 0, 0, \dots]$.}
\label{fig:behaviors}
\vspace{-10pt}
\end{figure}

\section{Additional Results}

\subsection{Laplacian Behavior Basis}

Figure~\ref{fig:behaviors} illustrates the behaviors induced by the first Laplacian eigenvector across the three domains, corresponding to the negative ($\mathbf{w}_{-\mathbf{e}_1}$) and positive ($\mathbf{w}_{+\mathbf{e}_1}$) directions. Although the specific behaviors differ across environments, the two directions consistently induce opposing outcomes: in \textsc{Cheetah}, forward hopping versus an inverted posture; in \textsc{Quadruped}, an inverted stance versus an upright posture; in \textsc{Walker}, a backward headstand versus a forward one. This oppositional structure is not unique to the first eigenvector---we observe it consistently across other directions and multiple runs.

Figure~\ref{fig:behaviors-appendix} extends this analysis to $\mathbf{w}_{\pm\mathbf{e}_2}$ and $\mathbf{w}_{\pm\mathbf{e}_3}$, revealing a broader repertoire of emergent behaviors. In \textsc{Cheetah}, these include backflipping, sitting on hindquarters, hopping while seated, and transitioning from a backflip into a headstand. \textsc{Quadruped} learns to lie on its side or back with distinct limb configurations, while \textsc{Walker} acquires headstands, multiple forms of splits, and lying flat. While the specific behaviors may vary across runs and datasets due to stochasticity in the unsupervised optimization, the approach consistently produces a diverse and temporally coherent behavioral basis. \looseness=-4

Crucially, these behaviors emerge from reward-free data alone---no hand-crafted features or explicit skill enumeration required. Prior option-based methods \citep{barreto2019option, alegre2025constructing} relied on carefully engineered reward features to construct a behavior basis, a process that demands domain knowledge and does not scale. Laplacian eigenvectors recover this structure automatically from environment dynamics.

\begin{figure}[b]
\centerline{\includegraphics[width=\textwidth]{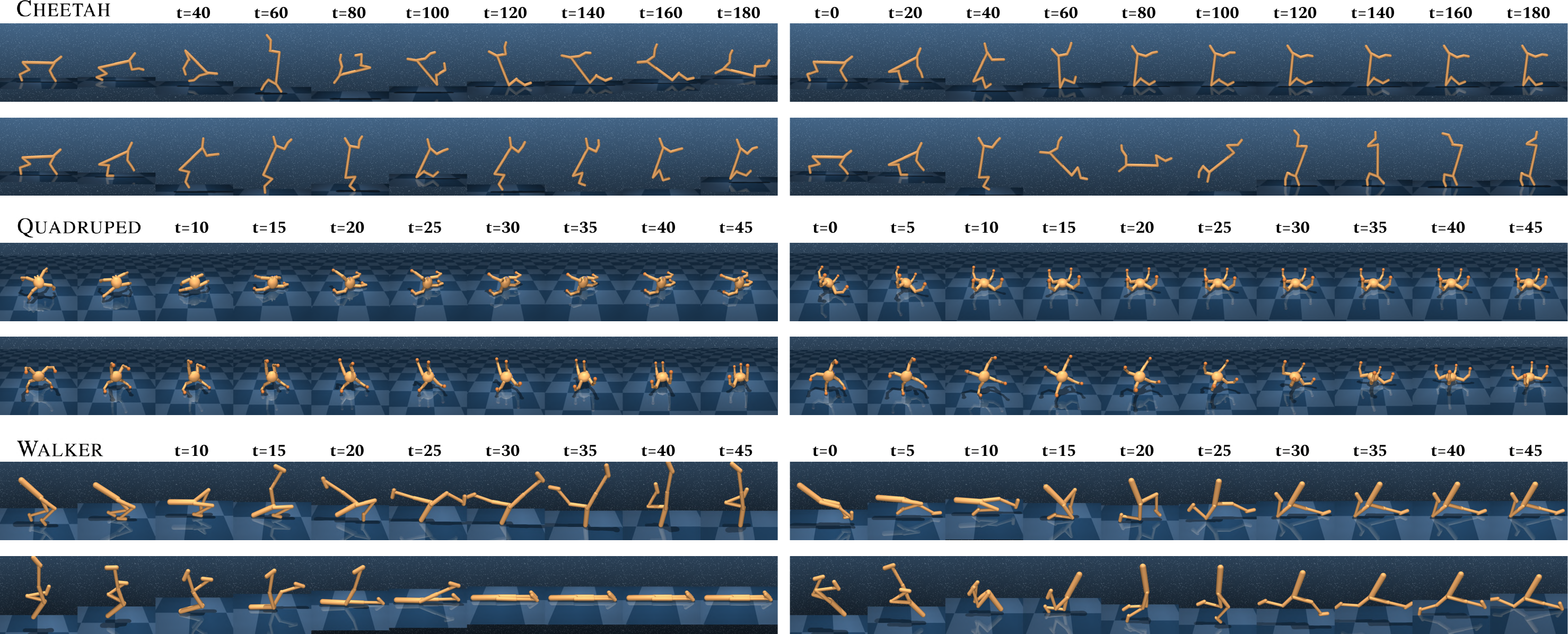}}
\caption{Behaviors learned during pre-training on the \texttt{APS} dataset. Within each domain, behaviors correspond (clockwise) to $\mathbf{w}_{-\mathbf{e}_2}$, $\mathbf{w}_{+\mathbf{e}_2}$, $\mathbf{w}_{+\mathbf{e}_3}$, and $\mathbf{w}_{-\mathbf{e}_3}$.}
\label{fig:behaviors-appendix}
\end{figure}

\newpage

\begin{table}
\centering
\caption{Zero-shot performance comparison of FB and Laplacian basis across datasets for $k=50$. Reported values are mean $\pm$ standard error, estimated from 30 independent runs.}
\label{tab:zero_shot_fb_lk_datasets}

\setlength{\tabcolsep}{10pt}
\renewcommand{\arraystretch}{1.2}

\footnotesize

\begin{tabular}{l r r r r r r}
\toprule
 & \multicolumn{2}{c}{\texttt{APS}} 
 & \multicolumn{2}{c}{\texttt{Proto}} 
 & \multicolumn{2}{c}{\texttt{RND}} \\
\cmidrule(lr){2-3} \cmidrule(lr){4-5} \cmidrule(lr){6-7}
\textbf{Task} 
& \multicolumn{1}{c}{FB} 
& \multicolumn{1}{c}{Laplacian} 
& \multicolumn{1}{c}{FB} 
& \multicolumn{1}{c}{Laplacian} 
& \multicolumn{1}{c}{FB} 
& \multicolumn{1}{c}{Laplacian} \\
\midrule

\rowcolor{gray!15}
\textsc{Cheetah} \textit{(avg)} & 622 ${\scriptstyle \pm 31}$ & 591 ${\scriptstyle \pm 31}$ & 588 ${\scriptstyle \pm 33}$ & 551 ${\scriptstyle \pm 33}$ & 447 ${\scriptstyle \pm 27}$ & 208 ${\scriptstyle \pm 19}$ \\
\hspace{.5em} \texttt{Run} & 329 ${\scriptstyle \pm 17}$ & 265 ${\scriptstyle \pm 13}$ & 248 ${\scriptstyle \pm 8}$ & 216 ${\scriptstyle \pm 14}$ & 169 ${\scriptstyle \pm 9}$ & 108 ${\scriptstyle \pm 13}$ \\
\hspace{.5em} \texttt{Run-B} & 281 ${\scriptstyle \pm 8}$ & 297 ${\scriptstyle \pm 13}$ & 217 ${\scriptstyle \pm 8}$ & 217 ${\scriptstyle \pm 7}$ & 191 ${\scriptstyle \pm 9}$ & 51 ${\scriptstyle \pm 5}$ \\
\hspace{.5em} \texttt{Walk} & 899 ${\scriptstyle \pm 34}$ & 848 ${\scriptstyle \pm 30}$ & 937 ${\scriptstyle \pm 15}$ & 833 ${\scriptstyle \pm 38}$ & 621 ${\scriptstyle \pm 27}$ & 446 ${\scriptstyle \pm 43}$ \\
\hspace{.5em} \texttt{Walk-B} & 981 ${\scriptstyle \pm 3}$ & 953 ${\scriptstyle \pm 24}$ & 948 ${\scriptstyle \pm 14}$ & 937 ${\scriptstyle \pm 14}$ & 807 ${\scriptstyle \pm 27}$ & 228 ${\scriptstyle \pm 24}$ \\
\addlinespace
\rowcolor{gray!15}
\textsc{Quadruped} \textit{(avg)} & 612 ${\scriptstyle \pm 13}$ & 597 ${\scriptstyle \pm 17}$ & 228 ${\scriptstyle \pm 11}$ & 268 ${\scriptstyle \pm 12}$ & 569 ${\scriptstyle \pm 14}$ & 662 ${\scriptstyle \pm 17}$ \\
\hspace{.5em} \texttt{Jump} & 625 ${\scriptstyle \pm 12}$ & 674 ${\scriptstyle \pm 22}$ & 221 ${\scriptstyle \pm 15}$ & 288 ${\scriptstyle \pm 20}$ & 602 ${\scriptstyle \pm 8}$ & 701 ${\scriptstyle \pm 13}$ \\
\hspace{.5em} \texttt{Run} & 428 ${\scriptstyle \pm 10}$ & 425 ${\scriptstyle \pm 10}$ & 162 ${\scriptstyle \pm 13}$ & 204 ${\scriptstyle \pm 13}$ & 361 ${\scriptstyle \pm 7}$ & 470 ${\scriptstyle \pm 8}$ \\
\hspace{.5em} \texttt{Stand} & 799 ${\scriptstyle \pm 15}$ & 817 ${\scriptstyle \pm 23}$ & 332 ${\scriptstyle \pm 28}$ & 385 ${\scriptstyle \pm 28}$ & 720 ${\scriptstyle \pm 14}$ & 911 ${\scriptstyle \pm 14}$ \\
\hspace{.5em} \texttt{Walk} & 595 ${\scriptstyle \pm 12}$ & 470 ${\scriptstyle \pm 12}$ & 195 ${\scriptstyle \pm 13}$ & 195 ${\scriptstyle \pm 15}$ & 593 ${\scriptstyle \pm 22}$ & 565 ${\scriptstyle \pm 26}$ \\
\addlinespace
\rowcolor{gray!15}
\textsc{Walker} \textit{(avg)} & 484 ${\scriptstyle \pm 25}$ & 588 ${\scriptstyle \pm 22}$ & 665 ${\scriptstyle \pm 23}$ & 713 ${\scriptstyle \pm 24}$ & 604 ${\scriptstyle \pm 16}$ & 444 ${\scriptstyle \pm 29}$ \\
\hspace{.5em} \texttt{Flip} & 211 ${\scriptstyle \pm 20}$ & 604 ${\scriptstyle \pm 21}$ & 476 ${\scriptstyle \pm 30}$ & 682 ${\scriptstyle \pm 11}$ & 548 ${\scriptstyle \pm 19}$ & 236 ${\scriptstyle \pm 38}$ \\
\hspace{.5em} \texttt{Run} & 266 ${\scriptstyle \pm 16}$ & 327 ${\scriptstyle \pm 21}$ & 396 ${\scriptstyle \pm 13}$ & 343 ${\scriptstyle \pm 7}$ & 405 ${\scriptstyle \pm 7}$ & 211 ${\scriptstyle \pm 24}$ \\
\hspace{.5em} \texttt{Stand} & 660 ${\scriptstyle \pm 30}$ & 589 ${\scriptstyle \pm 45}$ & 896 ${\scriptstyle \pm 11}$ & 884 ${\scriptstyle \pm 37}$ & 704 ${\scriptstyle \pm 28}$ & 433 ${\scriptstyle \pm 33}$ \\
\hspace{.5em} \texttt{Walk} & 798 ${\scriptstyle \pm 15}$ & 833 ${\scriptstyle \pm 30}$ & 891 ${\scriptstyle \pm 8}$ & 941 ${\scriptstyle \pm 4}$ & 759 ${\scriptstyle \pm 14}$ & 896 ${\scriptstyle \pm 16}$ \\
\addlinespace

\bottomrule
\end{tabular}
\end{table}

\subsection{Generality of the Laplacian Basis}
\label{sec:generality-appx}

Table~\ref{tab:zero_shot_fb_lk_datasets} reports zero-shot performance of FB and the Laplacian basis ($k=50$) for all three datasets. Neither method consistently dominates the other, though dataset choice does affect relative performance---for instance, the Laplacian basis outperforms FB on \textsc{Walker} with \texttt{APS}, while FB does better with \texttt{RND}. Overall, the two methods perform similarly across conditions, consistent with the sign test reported in Section~\ref{sec:linear-span-main}.

\begin{wrapfigure}{r}{0.55\textwidth}
\centerline{\includegraphics[width=.55\textwidth]{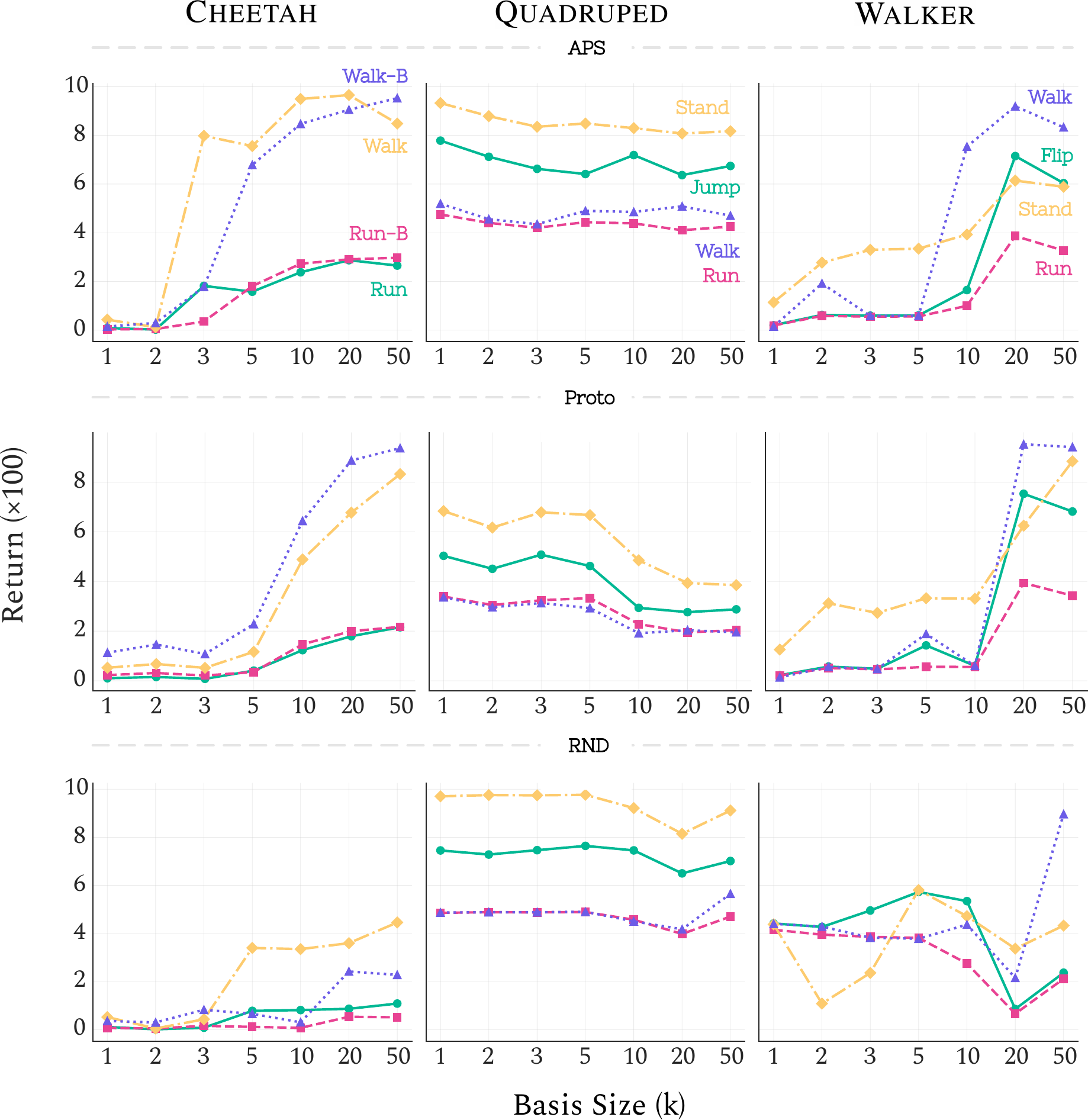}}
\caption{Zero-shot performance of the Laplacian basis as a function of basis size and dataset. Each row corresponds to a dataset, each curve to a task, and points indicate averages over 30 random seeds.}
\label{fig:lk-lineplot-appendix}
\end{wrapfigure}

Figure~\ref{fig:lk-lineplot-appendix} visualizes the zero-shot performance of the Laplacian basis as a function of the basis size for each dataset separately, with the corresponding numerical values reported in Table~\ref{tab:lk_zero_shot_full_table}.  The qualitative trend from the main paper holds broadly: performance improves with $k$ and saturates, with \textsc{Quadruped} remaining flat across all basis sizes and datasets---consistently confirming that $k=1$ captures the dominant task structure in this domain regardless of data collection strategy. Across \textsc{Cheetah} and \textsc{Walker}, dataset quality plays a meaningful role: \texttt{APS} yields the strongest performance and saturates earliest (around $k=20$), Notably, the relative ordering of tasks within each domain is largely preserved across datasets.

These results reveal a dataset bias: offline data quality directly shapes the expressivity of the learned basis, with \texttt{APS} consistently outperforming \texttt{Proto} and \texttt{RND} at smaller $k$. This is not specific to the Laplacian representation---FB exhibits the same trend (Table~\ref{tab:zero_shot_fb_lk_datasets})---but a general consequence of learning from fixed offline data.

\begin{table}
\centering
\caption{Zero-shot performance of the Laplacian basis for different sizes and datasets. Reported values are mean $\pm$ standard error, estimated from 30 independent runs.}
\label{tab:lk_zero_shot_full_table}

\setlength{\tabcolsep}{7pt}
\renewcommand{\arraystretch}{1.2}

\footnotesize

\begin{tabular}{l l r r r r r r r}
\toprule
 & & \multicolumn{7}{c}{\textbf{Basis Size $k$}} \\
\cmidrule(lr){3-9}
\textbf{Dataset} & \textbf{Task}
& \multicolumn{1}{c}{1}
& \multicolumn{1}{c}{2}
& \multicolumn{1}{c}{3}
& \multicolumn{1}{c}{5}
& \multicolumn{1}{c}{10}
& \multicolumn{1}{c}{20}
& \multicolumn{1}{c}{50} \\
\midrule

 \rowcolor{gray!15}  \cellcolor{white} \texttt{APS} &  \domainavg{cheetah} & 17 ${\scriptstyle \pm 2}$ & 12 ${\scriptstyle \pm 3}$ & 298 ${\scriptstyle \pm 28}$ & 443 ${\scriptstyle \pm 29}$ & 577 ${\scriptstyle \pm 33}$ & 612 ${\scriptstyle \pm 32}$ & 591 ${\scriptstyle \pm 31}$ \\
& \hspace{.5em}\texttt{Run} & 9 ${\scriptstyle \pm 2}$ & 2 ${\scriptstyle \pm 0}$ & 182 ${\scriptstyle \pm 10}$ & 158 ${\scriptstyle \pm 6}$ & 238 ${\scriptstyle \pm 9}$ & 287 ${\scriptstyle \pm 10}$ & 265 ${\scriptstyle \pm 13}$ \\
& \hspace{.5em}\texttt{Run-B} & 3 ${\scriptstyle \pm 1}$ & 5 ${\scriptstyle \pm 2}$ & 36 ${\scriptstyle \pm 2}$ & 181 ${\scriptstyle \pm 12}$ & 273 ${\scriptstyle \pm 21}$ & 291 ${\scriptstyle \pm 19}$ & 297 ${\scriptstyle \pm 13}$ \\
& \hspace{.5em}\texttt{Walk} & 43 ${\scriptstyle \pm 7}$ & 12 ${\scriptstyle \pm 1}$ & 798 ${\scriptstyle \pm 32}$ & 756 ${\scriptstyle \pm 24}$ & 949 ${\scriptstyle \pm 8}$ & 965 ${\scriptstyle \pm 5}$ & 848 ${\scriptstyle \pm 30}$ \\
& \hspace{.5em}\texttt{Walk-B} & 15 ${\scriptstyle \pm 4}$ & 29 ${\scriptstyle \pm 12}$ & 178 ${\scriptstyle \pm 13}$ & 679 ${\scriptstyle \pm 48}$ & 847 ${\scriptstyle \pm 49}$ & 905 ${\scriptstyle \pm 39}$ & 953 ${\scriptstyle \pm 24}$ \\
\addlinespace
 \rowcolor{gray!15}  \cellcolor{white} & \domainavg{quadruped} & 676 ${\scriptstyle \pm 17}$ & 622 ${\scriptstyle \pm 17}$ & 588 ${\scriptstyle \pm 17}$ & 606 ${\scriptstyle \pm 16}$ & 618 ${\scriptstyle \pm 16}$ & 591 ${\scriptstyle \pm 16}$ & 597 ${\scriptstyle \pm 17}$ \\
& \hspace{.5em}\texttt{Jump} & 778 ${\scriptstyle \pm 5}$ & 712 ${\scriptstyle \pm 8}$ & 662 ${\scriptstyle \pm 9}$ & 641 ${\scriptstyle \pm 15}$ & 719 ${\scriptstyle \pm 12}$ & 637 ${\scriptstyle \pm 21}$ & 674 ${\scriptstyle \pm 22}$ \\
& \hspace{.5em}\texttt{Run} & 475 ${\scriptstyle \pm 1}$ & 440 ${\scriptstyle \pm 4}$ & 421 ${\scriptstyle \pm 5}$ & 443 ${\scriptstyle \pm 7}$ & 439 ${\scriptstyle \pm 6}$ & 410 ${\scriptstyle \pm 9}$ & 425 ${\scriptstyle \pm 10}$ \\
& \hspace{.5em}\texttt{Stand} & 933 ${\scriptstyle \pm 6}$ & 879 ${\scriptstyle \pm 9}$ & 835 ${\scriptstyle \pm 9}$ & 849 ${\scriptstyle \pm 15}$ & 830 ${\scriptstyle \pm 21}$ & 807 ${\scriptstyle \pm 20}$ & 817 ${\scriptstyle \pm 23}$ \\
& \hspace{.5em}\texttt{Walk} & 519 ${\scriptstyle \pm 7}$ & 456 ${\scriptstyle \pm 12}$ & 435 ${\scriptstyle \pm 18}$ & 490 ${\scriptstyle \pm 16}$ & 486 ${\scriptstyle \pm 9}$ & 508 ${\scriptstyle \pm 5}$ & 470 ${\scriptstyle \pm 12}$ \\
\addlinespace
 \rowcolor{gray!15}  \cellcolor{white} & \domainavg{walker} & 42 ${\scriptstyle \pm 4}$ & 148 ${\scriptstyle \pm 13}$ & 126 ${\scriptstyle \pm 11}$ & 128 ${\scriptstyle \pm 11}$ & 353 ${\scriptstyle \pm 30}$ & 659 ${\scriptstyle \pm 24}$ & 588 ${\scriptstyle \pm 22}$ \\
& \hspace{.5em}\texttt{Flip} & 20 ${\scriptstyle \pm 0}$ & 63 ${\scriptstyle \pm 1}$ & 60 ${\scriptstyle \pm 0}$ & 61 ${\scriptstyle \pm 0}$ & 165 ${\scriptstyle \pm 35}$ & 715 ${\scriptstyle \pm 26}$ & 604 ${\scriptstyle \pm 21}$ \\
& \hspace{.5em}\texttt{Run} & 19 ${\scriptstyle \pm 0}$ & 59 ${\scriptstyle \pm 1}$ & 56 ${\scriptstyle \pm 0}$ & 56 ${\scriptstyle \pm 0}$ & 100 ${\scriptstyle \pm 17}$ & 387 ${\scriptstyle \pm 18}$ & 327 ${\scriptstyle \pm 21}$ \\
& \hspace{.5em}\texttt{Stand} & 114 ${\scriptstyle \pm 2}$ & 277 ${\scriptstyle \pm 18}$ & 330 ${\scriptstyle \pm 0}$ & 335 ${\scriptstyle \pm 1}$ & 393 ${\scriptstyle \pm 27}$ & 614 ${\scriptstyle \pm 56}$ & 589 ${\scriptstyle \pm 45}$ \\
& \hspace{.5em}\texttt{Walk} & 15 ${\scriptstyle \pm 1}$ & 192 ${\scriptstyle \pm 37}$ & 59 ${\scriptstyle \pm 0}$ & 60 ${\scriptstyle \pm 0}$ & 753 ${\scriptstyle \pm 61}$ & 919 ${\scriptstyle \pm 20}$ & 833 ${\scriptstyle \pm 30}$ \\
\addlinespace
\midrule
 \rowcolor{gray!15}  \cellcolor{white}\texttt{Proto} & \domainavg{cheetah} & 50 ${\scriptstyle \pm 6}$ & 65 ${\scriptstyle \pm 10}$ & 47 ${\scriptstyle \pm 7}$ & 105 ${\scriptstyle \pm 17}$ & 351 ${\scriptstyle \pm 31}$ & 486 ${\scriptstyle \pm 32}$ & 551 ${\scriptstyle \pm 33}$ \\
& \hspace{.5em}\texttt{Run} & 10 ${\scriptstyle \pm 1}$ & 15 ${\scriptstyle \pm 5}$ & 8 ${\scriptstyle \pm 1}$ & 41 ${\scriptstyle \pm 9}$ & 123 ${\scriptstyle \pm 16}$ & 180 ${\scriptstyle \pm 14}$ & 216 ${\scriptstyle \pm 14}$ \\
& \hspace{.5em}\texttt{Run-B} & 23 ${\scriptstyle \pm 3}$ & 31 ${\scriptstyle \pm 4}$ & 21 ${\scriptstyle \pm 4}$ & 35 ${\scriptstyle \pm 9}$ & 147 ${\scriptstyle \pm 15}$ & 200 ${\scriptstyle \pm 10}$ & 217 ${\scriptstyle \pm 7}$ \\
& \hspace{.5em}\texttt{Walk} & 52 ${\scriptstyle \pm 6}$ & 67 ${\scriptstyle \pm 25}$ & 52 ${\scriptstyle \pm 11}$ & 117 ${\scriptstyle \pm 30}$ & 488 ${\scriptstyle \pm 65}$ & 677 ${\scriptstyle \pm 61}$ & 833 ${\scriptstyle \pm 38}$ \\
& \hspace{.5em}\texttt{Walk-B} & 113 ${\scriptstyle \pm 16}$ & 146 ${\scriptstyle \pm 22}$ & 108 ${\scriptstyle \pm 20}$ & 228 ${\scriptstyle \pm 53}$ & 644 ${\scriptstyle \pm 63}$ & 888 ${\scriptstyle \pm 21}$ & 937 ${\scriptstyle \pm 14}$ \\
\addlinespace
 \rowcolor{gray!15}  \cellcolor{white} & \domainavg{quadruped} & 466 ${\scriptstyle \pm 16}$ & 417 ${\scriptstyle \pm 17}$ & 456 ${\scriptstyle \pm 18}$ & 439 ${\scriptstyle \pm 19}$ & 300 ${\scriptstyle \pm 15}$ & 267 ${\scriptstyle \pm 14}$ & 268 ${\scriptstyle \pm 12}$ \\
& \hspace{.5em}\texttt{Jump} & 503 ${\scriptstyle \pm 18}$ & 451 ${\scriptstyle \pm 21}$ & 508 ${\scriptstyle \pm 24}$ & 462 ${\scriptstyle \pm 30}$ & 294 ${\scriptstyle \pm 22}$ & 276 ${\scriptstyle \pm 23}$ & 288 ${\scriptstyle \pm 20}$ \\
& \hspace{.5em}\texttt{Run} & 339 ${\scriptstyle \pm 14}$ & 305 ${\scriptstyle \pm 18}$ & 324 ${\scriptstyle \pm 17}$ & 333 ${\scriptstyle \pm 18}$ & 228 ${\scriptstyle \pm 16}$ & 195 ${\scriptstyle \pm 15}$ & 204 ${\scriptstyle \pm 13}$ \\
& \hspace{.5em}\texttt{Stand} & 684 ${\scriptstyle \pm 27}$ & 617 ${\scriptstyle \pm 34}$ & 679 ${\scriptstyle \pm 33}$ & 668 ${\scriptstyle \pm 36}$ & 485 ${\scriptstyle \pm 33}$ & 394 ${\scriptstyle \pm 36}$ & 385 ${\scriptstyle \pm 28}$ \\
& \hspace{.5em}\texttt{Walk} & 336 ${\scriptstyle \pm 16}$ & 296 ${\scriptstyle \pm 20}$ & 313 ${\scriptstyle \pm 18}$ & 293 ${\scriptstyle \pm 23}$ & 192 ${\scriptstyle \pm 14}$ & 204 ${\scriptstyle \pm 18}$ & 195 ${\scriptstyle \pm 15}$ \\
\addlinespace
  \rowcolor{gray!15}  \cellcolor{white} & \domainavg{walker} & 46 ${\scriptstyle \pm 4}$ & 119 ${\scriptstyle \pm 10}$ & 104 ${\scriptstyle \pm 9}$ & 180 ${\scriptstyle \pm 19}$ & 127 ${\scriptstyle \pm 11}$ & 681 ${\scriptstyle \pm 24}$ & 713 ${\scriptstyle \pm 24}$ \\
& \hspace{.5em}\texttt{Flip} & 22 ${\scriptstyle \pm 1}$ & 57 ${\scriptstyle \pm 2}$ & 48 ${\scriptstyle \pm 2}$ & 142 ${\scriptstyle \pm 38}$ & 60 ${\scriptstyle \pm 0}$ & 753 ${\scriptstyle \pm 25}$ & 682 ${\scriptstyle \pm 11}$ \\
& \hspace{.5em}\texttt{Run} & 21 ${\scriptstyle \pm 1}$ & 52 ${\scriptstyle \pm 1}$ & 46 ${\scriptstyle \pm 2}$ & 56 ${\scriptstyle \pm 0}$ & 56 ${\scriptstyle \pm 0}$ & 394 ${\scriptstyle \pm 9}$ & 343 ${\scriptstyle \pm 7}$ \\
& \hspace{.5em}\texttt{Stand} & 126 ${\scriptstyle \pm 6}$ & 312 ${\scriptstyle \pm 9}$ & 273 ${\scriptstyle \pm 12}$ & 333 ${\scriptstyle \pm 1}$ & 330 ${\scriptstyle \pm 1}$ & 625 ${\scriptstyle \pm 54}$ & 884 ${\scriptstyle \pm 37}$ \\
& \hspace{.5em}\texttt{Walk} & 14 ${\scriptstyle \pm 1}$ & 56 ${\scriptstyle \pm 2}$ & 47 ${\scriptstyle \pm 2}$ & 189 ${\scriptstyle \pm 55}$ & 60 ${\scriptstyle \pm 0}$ & 953 ${\scriptstyle \pm 2}$ & 941 ${\scriptstyle \pm 4}$ \\
\addlinespace
\midrule
 \rowcolor{gray!15}  \cellcolor{white} \texttt{RND} & \domainavg{cheetah} & 26 ${\scriptstyle \pm 3}$ & 10 ${\scriptstyle \pm 1}$ & 37 ${\scriptstyle \pm 5}$ & 123 ${\scriptstyle \pm 15}$ & 113 ${\scriptstyle \pm 13}$ & 185 ${\scriptstyle \pm 13}$ & 208 ${\scriptstyle \pm 19}$ \\
& \hspace{.5em}\texttt{Run} & 11 ${\scriptstyle \pm 2}$ & 1 ${\scriptstyle \pm 0}$ & 8 ${\scriptstyle \pm 2}$ & 78 ${\scriptstyle \pm 8}$ & 81 ${\scriptstyle \pm 6}$ & 86 ${\scriptstyle \pm 3}$ & 108 ${\scriptstyle \pm 13}$ \\
& \hspace{.5em}\texttt{Run-B} & 7 ${\scriptstyle \pm 1}$ & 4 ${\scriptstyle \pm 1}$ & 15 ${\scriptstyle \pm 3}$ & 11 ${\scriptstyle \pm 2}$ & 7 ${\scriptstyle \pm 1}$ & 53 ${\scriptstyle \pm 4}$ & 51 ${\scriptstyle \pm 5}$ \\
& \hspace{.5em}\texttt{Walk} & 52 ${\scriptstyle \pm 9}$ & 4 ${\scriptstyle \pm 1}$ & 43 ${\scriptstyle \pm 10}$ & 340 ${\scriptstyle \pm 32}$ & 335 ${\scriptstyle \pm 21}$ & 359 ${\scriptstyle \pm 14}$ & 446 ${\scriptstyle \pm 43}$ \\
& \hspace{.5em}\texttt{Walk-B} & 36 ${\scriptstyle \pm 5}$ & 29 ${\scriptstyle \pm 4}$ & 82 ${\scriptstyle \pm 15}$ & 65 ${\scriptstyle \pm 13}$ & 30 ${\scriptstyle \pm 6}$ & 242 ${\scriptstyle \pm 22}$ & 228 ${\scriptstyle \pm 24}$ \\
\addlinespace
 \rowcolor{gray!15}  \cellcolor{white} & \domainavg{quadruped} & 672 ${\scriptstyle \pm 19}$ & 670 ${\scriptstyle \pm 19}$ & 674 ${\scriptstyle \pm 19}$ & 680 ${\scriptstyle \pm 19}$ & 644 ${\scriptstyle \pm 19}$ & 570 ${\scriptstyle \pm 19}$ & 662 ${\scriptstyle \pm 17}$ \\
& \hspace{.5em}\texttt{Jump} & 745 ${\scriptstyle \pm 11}$ & 728 ${\scriptstyle \pm 7}$ & 746 ${\scriptstyle \pm 9}$ & 764 ${\scriptstyle \pm 6}$ & 746 ${\scriptstyle \pm 12}$ & 650 ${\scriptstyle \pm 22}$ & 701 ${\scriptstyle \pm 13}$ \\
& \hspace{.5em}\texttt{Run} & 486 ${\scriptstyle \pm 2}$ & 488 ${\scriptstyle \pm 1}$ & 488 ${\scriptstyle \pm 1}$ & 489 ${\scriptstyle \pm 0}$ & 457 ${\scriptstyle \pm 10}$ & 398 ${\scriptstyle \pm 14}$ & 470 ${\scriptstyle \pm 8}$ \\
& \hspace{.5em}\texttt{Stand} & 970 ${\scriptstyle \pm 3}$ & 976 ${\scriptstyle \pm 1}$ & 975 ${\scriptstyle \pm 1}$ & 977 ${\scriptstyle \pm 1}$ & 922 ${\scriptstyle \pm 18}$ & 815 ${\scriptstyle \pm 28}$ & 911 ${\scriptstyle \pm 14}$ \\
& \hspace{.5em}\texttt{Walk} & 487 ${\scriptstyle \pm 3}$ & 488 ${\scriptstyle \pm 2}$ & 488 ${\scriptstyle \pm 3}$ & 490 ${\scriptstyle \pm 2}$ & 450 ${\scriptstyle \pm 10}$ & 417 ${\scriptstyle \pm 18}$ & 565 ${\scriptstyle \pm 26}$ \\
\addlinespace
 \rowcolor{gray!15}  \cellcolor{white} & \domainavg{walker} & 433 ${\scriptstyle \pm 2}$ & 340 ${\scriptstyle \pm 12}$ & 375 ${\scriptstyle \pm 12}$ & 478 ${\scriptstyle \pm 9}$ & 429 ${\scriptstyle \pm 14}$ & 176 ${\scriptstyle \pm 17}$ & 444 ${\scriptstyle \pm 29}$ \\
& \hspace{.5em}\texttt{Flip} & 441 ${\scriptstyle \pm 2}$ & 427 ${\scriptstyle \pm 2}$ & 495 ${\scriptstyle \pm 10}$ & 573 ${\scriptstyle \pm 2}$ & 534 ${\scriptstyle \pm 17}$ & 85 ${\scriptstyle \pm 6}$ & 236 ${\scriptstyle \pm 38}$ \\
& \hspace{.5em}\texttt{Run} & 415 ${\scriptstyle \pm 3}$ & 395 ${\scriptstyle \pm 3}$ & 386 ${\scriptstyle \pm 4}$ & 381 ${\scriptstyle \pm 3}$ & 275 ${\scriptstyle \pm 14}$ & 66 ${\scriptstyle \pm 7}$ & 211 ${\scriptstyle \pm 24}$ \\
& \hspace{.5em}\texttt{Stand} & 438 ${\scriptstyle \pm 4}$ & 108 ${\scriptstyle \pm 1}$ & 236 ${\scriptstyle \pm 31}$ & 581 ${\scriptstyle \pm 2}$ & 472 ${\scriptstyle \pm 35}$ & 337 ${\scriptstyle \pm 3}$ & 433 ${\scriptstyle \pm 33}$ \\
& \hspace{.5em}\texttt{Walk} & 440 ${\scriptstyle \pm 2}$ & 429 ${\scriptstyle \pm 2}$ & 382 ${\scriptstyle \pm 7}$ & 378 ${\scriptstyle \pm 8}$ & 437 ${\scriptstyle \pm 11}$ & 215 ${\scriptstyle \pm 53}$ & 896 ${\scriptstyle \pm 16}$ \\
\addlinespace

\bottomrule

\end{tabular}
\end{table}

\clearpage
\subsection{Hierarchical Composition}
\label{sec:hier-comp-appendix}

\begin{table}[t]
  \centering
    \caption{Relative improvement of LK over the zero-shot performance (\textit{left}) and ratio of area under the evaluation curve (AUC) over the first 200k environment steps (\textit{right}), across basis sizes $k$. Results for the \texttt{Proto} and \texttt{RND} datasets, each averaged over 30 runs.}
  \setlength{\tabcolsep}{5pt}
  \scriptsize
  \label{tab:combined_results_proto_rnd_appdx}
  \vspace{3pt}

\begin{tabular}{lrrrrrrr@{\hspace{3.5em}}rrrrrrr}
\toprule
\textbf{Task} & \multicolumn{7}{c}{\textbf{Improv.\ over zero-shot (\%)}} & \multicolumn{7}{c}{\textbf{Downstream AUC ratio}}\\
\cmidrule(lr){2-8}\cmidrule(l){9-15}
\textbf{} & 1 & 2 & 3 & 5 & 10 & 20 & 50 & 1 & 2 & 3 & 5 & 10 & 20 & 50\\
\midrule

\addlinespace
\multicolumn{15}{c}{\small{\texttt{Proto}}} \\
\addlinespace

\rowcolor{gray!15}\textsc{Cheetah} \textit{(avg)} & $1987$ & $8217$ & $7672$ & $21718$ & $931$ & $148$ & $39$ & $0.5$ & $0.6$ & $0.8$ & $1.0$ & $1.3$ & $1.5$ & $1.5$\\
\hspace{.5em}\texttt{Run} & \cellcolor[HTML]{BAE4DB}$2137$ & \cellcolor[HTML]{A0DACD}$15471$ & \cellcolor[HTML]{A2DBCE}$14704$ & \cellcolor[HTML]{8ED3C1}$63998$ & \cellcolor[HTML]{CBEBE5}$531$ & \cellcolor[HTML]{D4EFEC}$141$ & \cellcolor[HTML]{D6EFED}$99$ & \cellcolor[HTML]{FEE7DC}$0.7$ & \cellcolor[HTML]{FEE9E0}$0.7$ & \cellcolor[HTML]{E7F5F9}$1.1$ & \cellcolor[HTML]{FEF4EF}$1.0$ & \cellcolor[HTML]{D0EDE9}$1.7$ & \cellcolor[HTML]{C0E7DF}$2.0$ & \cellcolor[HTML]{BEE6DE}$2.0$\\
\hspace{.5em}\texttt{Run-B} & \cellcolor[HTML]{BAE4DB}$2137$ & \cellcolor[HTML]{B8E4DA}$2517$ & \cellcolor[HTML]{B2E1D7}$3799$ & \cellcolor[HTML]{AADED2}$7625$ & \cellcolor[HTML]{BBE5DC}$1804$ & \cellcolor[HTML]{DCF1F2}$38$ & \cellcolor[HTML]{DEF2F4}$26$ & \cellcolor[HTML]{FCC9B4}$0.3$ & \cellcolor[HTML]{FDD1BF}$0.4$ & \cellcolor[HTML]{FDD3C0}$0.4$ & \cellcolor[HTML]{FEE7DC}$0.7$ & \cellcolor[HTML]{FEE9DF}$0.7$ & \cellcolor[HTML]{FEEFE7}$0.9$ & \cellcolor[HTML]{FEF1EB}$0.9$\\
\hspace{.5em}\texttt{Walk} & \cellcolor[HTML]{BDE6DD}$1626$ & \cellcolor[HTML]{A2DBCE}$13815$ & \cellcolor[HTML]{ABDFD3}$6496$ & \cellcolor[HTML]{A8DED1}$9069$ & \cellcolor[HTML]{CAEBE4}$571$ & \cellcolor[HTML]{CDECE7}$403$ & \cellcolor[HTML]{DDF2F3}$29$ & \cellcolor[HTML]{FEEBE1}$0.8$ & \cellcolor[HTML]{E8F6F9}$1.0$ & \cellcolor[HTML]{E4F4F8}$1.2$ & \cellcolor[HTML]{E2F4F7}$1.2$ & \cellcolor[HTML]{CEEDE8}$1.7$ & \cellcolor[HTML]{C0E7DF}$2.0$ & \cellcolor[HTML]{BDE6DD}$2.0$\\
\hspace{.5em}\texttt{Walk-B} & \cellcolor[HTML]{BAE4DB}$2048$ & \cellcolor[HTML]{C2E8E0}$1066$ & \cellcolor[HTML]{ADDFD4}$5690$ & \cellcolor[HTML]{ABDFD3}$6180$ & \cellcolor[HTML]{C6E9E3}$818$ & \cellcolor[HTML]{E4F4F8}$10$ & \cellcolor[HTML]{EDF8FA}$2$ & \cellcolor[HTML]{FDD5C3}$0.4$ & \cellcolor[HTML]{FDD7C7}$0.4$ & \cellcolor[HTML]{FEE0D2}$0.5$ & \cellcolor[HTML]{FFF5F0}$1.0$ & \cellcolor[HTML]{FEF4EF}$1.0$ & \cellcolor[HTML]{E2F4F7}$1.2$ & \cellcolor[HTML]{E3F4F7}$1.2$\\
\addlinespace
\rowcolor{gray!15}\textsc{Quadruped} \textit{(avg)} & $-10$ & $-22$ & $-5$ & $-28$ & $1$ & $42$ & $-2$ & $2.0$ & $1.2$ & $1.7$ & $0.9$ & $0.9$ & $1.1$ & $0.7$\\
\hspace{.5em}\texttt{Jump} & \cellcolor[HTML]{FEE2D5}$-9$ & \cellcolor[HTML]{FDDACA}$-14$ & \cellcolor[HTML]{FDD6C5}$-19$ & \cellcolor[HTML]{FCCCB7}$-40$ & \cellcolor[HTML]{EFF8FB}$1$ & \cellcolor[HTML]{DCF1F2}$37$ & \cellcolor[HTML]{FEEAE0}$-5$ & \cellcolor[HTML]{B6E3D9}$2.1$ & \cellcolor[HTML]{DCF1F2}$1.4$ & \cellcolor[HTML]{CEEDE8}$1.7$ & \cellcolor[HTML]{FEE6DB}$0.7$ & \cellcolor[HTML]{FEEDE4}$0.8$ & \cellcolor[HTML]{E3F4F7}$1.2$ & \cellcolor[HTML]{FEE9E0}$0.8$\\
\hspace{.5em}\texttt{Run} & \cellcolor[HTML]{FDDACA}$-15$ & \cellcolor[HTML]{FDD6C5}$-19$ & \cellcolor[HTML]{FEE5D9}$-7$ & \cellcolor[HTML]{FCC9B4}$-44$ & \cellcolor[HTML]{FEE0D2}$-10$ & \cellcolor[HTML]{DDF2F2}$33$ & \cellcolor[HTML]{FEF3ED}$-1$ & \cellcolor[HTML]{C5E9E2}$1.9$ & \cellcolor[HTML]{E0F3F5}$1.3$ & \cellcolor[HTML]{D4EFEC}$1.6$ & \cellcolor[HTML]{FEECE3}$0.8$ & \cellcolor[HTML]{FEF4EF}$1.0$ & \cellcolor[HTML]{E4F4F8}$1.2$ & \cellcolor[HTML]{FEE9DF}$0.7$\\
\hspace{.5em}\texttt{Stand} & \cellcolor[HTML]{EDF8FA}$2$ & \cellcolor[HTML]{FDD1BF}$-26$ & \cellcolor[HTML]{FEE7DC}$-7$ & \cellcolor[HTML]{FDD1BF}$-26$ & \cellcolor[HTML]{FDDED0}$-11$ & \cellcolor[HTML]{DCF1F2}$37$ & \cellcolor[HTML]{E2F4F7}$14$ & \cellcolor[HTML]{A8DED1}$2.3$ & \cellcolor[HTML]{E3F4F7}$1.2$ & \cellcolor[HTML]{CDECE7}$1.8$ & \cellcolor[HTML]{E5F5F9}$1.1$ & \cellcolor[HTML]{E7F5F9}$1.1$ & \cellcolor[HTML]{E2F4F7}$1.2$ & \cellcolor[HTML]{FEF3EE}$1.0$\\
\hspace{.5em}\texttt{Walk} & \cellcolor[HTML]{FDD6C5}$-19$ & \cellcolor[HTML]{FDD0BD}$-28$ & \cellcolor[HTML]{E3F4F7}$12$ & \cellcolor[HTML]{FEEFE7}$-3$ & \cellcolor[HTML]{DFF2F4}$24$ & \cellcolor[HTML]{D9F0EF}$60$ & \cellcolor[HTML]{FDDACA}$-14$ & \cellcolor[HTML]{D4EFEC}$1.6$ & \cellcolor[HTML]{FEF3ED}$0.9$ & \cellcolor[HTML]{D2EEEB}$1.6$ & \cellcolor[HTML]{FEEFE7}$0.9$ & \cellcolor[HTML]{FEEBE1}$0.8$ & \cellcolor[HTML]{E7F6F9}$1.0$ & \cellcolor[HTML]{FDDBCB}$0.5$\\
\addlinespace
\rowcolor{gray!15}\textsc{Walker} \textit{(avg)} & $1264$ & $388$ & $549$ & $623$ & $799$ & $41$ & $9$ & $0.8$ & $0.9$ & $1.0$ & $1.8$ & $1.8$ & $2.2$ & $1.8$\\
\hspace{.5em}\texttt{Flip} & \cellcolor[HTML]{C2E8E0}$1168$ & \cellcolor[HTML]{CCECE6}$467$ & \cellcolor[HTML]{C8EAE3}$690$ & \cellcolor[HTML]{C6E9E3}$790$ & \cellcolor[HTML]{C3E8E1}$997$ & \cellcolor[HTML]{DAF1F1}$49$ & \cellcolor[HTML]{E4F4F8}$10$ & \cellcolor[HTML]{FEEDE4}$0.8$ & \cellcolor[HTML]{FEF1EA}$0.9$ & \cellcolor[HTML]{E8F6F9}$1.0$ & \cellcolor[HTML]{C5E9E2}$1.9$ & \cellcolor[HTML]{C5E9E2}$1.9$ & \cellcolor[HTML]{90D4C2}$2.6$ & \cellcolor[HTML]{BEE6DE}$2.0$\\
\hspace{.5em}\texttt{Run} & \cellcolor[HTML]{CAEBE4}$583$ & \cellcolor[HTML]{D1EDEA}$223$ & \cellcolor[HTML]{CEEDE8}$309$ & \cellcolor[HTML]{CBEBE5}$534$ & \cellcolor[HTML]{C8EAE3}$660$ & \cellcolor[HTML]{DFF2F4}$23$ & \cellcolor[HTML]{E0F3F5}$19$ & \cellcolor[HTML]{FEE8DE}$0.7$ & \cellcolor[HTML]{FEEFE8}$0.9$ & \cellcolor[HTML]{FEF3ED}$0.9$ & \cellcolor[HTML]{C2E8E0}$1.9$ & \cellcolor[HTML]{B2E1D7}$2.1$ & \cellcolor[HTML]{8ED3C1}$2.6$ & \cellcolor[HTML]{B8E4DA}$2.1$\\
\hspace{.5em}\texttt{Stand} & \cellcolor[HTML]{CEEDE8}$324$ & \cellcolor[HTML]{D6EFEE}$85$ & \cellcolor[HTML]{D4EFEC}$141$ & \cellcolor[HTML]{D3EEEB}$157$ & \cellcolor[HTML]{D2EEEB}$172$ & \cellcolor[HTML]{D6EFEE}$94$ & \cellcolor[HTML]{E3F4F7}$13$ & \cellcolor[HTML]{FEE9E0}$0.7$ & \cellcolor[HTML]{FEEDE4}$0.8$ & \cellcolor[HTML]{FEEFE8}$0.9$ & \cellcolor[HTML]{E1F3F6}$1.2$ & \cellcolor[HTML]{DDF2F3}$1.3$ & \cellcolor[HTML]{DBF1F1}$1.4$ & \cellcolor[HTML]{E1F3F6}$1.2$\\
\hspace{.5em}\texttt{Walk} & \cellcolor[HTML]{B5E3D9}$2979$ & \cellcolor[HTML]{C6E9E3}$778$ & \cellcolor[HTML]{C2E8E0}$1056$ & \cellcolor[HTML]{C3E8E1}$1009$ & \cellcolor[HTML]{BEE6DE}$1365$ & \cellcolor[HTML]{FEF3EE}$-1$ & \cellcolor[HTML]{FEEBE2}$-4$ & \cellcolor[HTML]{FEEFE7}$0.9$ & \cellcolor[HTML]{E6F5F9}$1.1$ & \cellcolor[HTML]{E3F4F7}$1.2$ & \cellcolor[HTML]{BEE6DE}$2.0$ & \cellcolor[HTML]{BDE6DD}$2.0$ & \cellcolor[HTML]{A8DED1}$2.3$ & \cellcolor[HTML]{BDE6DD}$2.0$\\
\addlinespace

\addlinespace
\midrule
\addlinespace
\multicolumn{15}{c}{\small{\texttt{RND}}} \\
\addlinespace

\rowcolor{gray!15}\textsc{Cheetah} \textit{(avg)} & $4500$ & $18658$ & $11940$ & $909$ & $1137$ & $111$ & $335$ & $0.5$ & $0.4$ & $0.5$ & $0.7$ & $0.6$ & $0.7$ & $0.6$\\
\hspace{.5em}\texttt{Run} & \cellcolor[HTML]{ADDFD4}$3985$ & \cellcolor[HTML]{8ED3C1}$37205$ & \cellcolor[HTML]{9AD8C9}$14968$ & \cellcolor[HTML]{D0EDE9}$182$ & \cellcolor[HTML]{D6EFED}$77$ & \cellcolor[HTML]{D6EFED}$78$ & \cellcolor[HTML]{C0E7DF}$907$ & \cellcolor[HTML]{FDDED0}$0.5$ & \cellcolor[HTML]{FEE3D6}$0.6$ & \cellcolor[HTML]{FEE5D9}$0.6$ & \cellcolor[HTML]{FEF3EE}$1.0$ & \cellcolor[HTML]{FEEEE6}$0.8$ & \cellcolor[HTML]{FEF1EA}$0.9$ & \cellcolor[HTML]{FFF5F0}$1.0$\\
\hspace{.5em}\texttt{Run-B} & \cellcolor[HTML]{AADED2}$4743$ & \cellcolor[HTML]{AADED2}$4936$ & \cellcolor[HTML]{B6E3D9}$1800$ & \cellcolor[HTML]{B5E3D9}$2049$ & \cellcolor[HTML]{B3E2D8}$2331$ & \cellcolor[HTML]{D2EEEA}$148$ & \cellcolor[HTML]{D2EEEB}$137$ & \cellcolor[HTML]{FDD1BF}$0.3$ & \cellcolor[HTML]{FCC9B4}$0.2$ & \cellcolor[HTML]{FCCCB7}$0.2$ & \cellcolor[HTML]{FDCEBA}$0.2$ & \cellcolor[HTML]{FCCDB9}$0.2$ & \cellcolor[HTML]{FDCEBA}$0.2$ & \cellcolor[HTML]{FCCCB7}$0.2$\\
\hspace{.5em}\texttt{Walk} & \cellcolor[HTML]{ABDFD3}$4443$ & \cellcolor[HTML]{92D4C3}$29938$ & \cellcolor[HTML]{92D4C3}$29536$ & \cellcolor[HTML]{D4EFEC}$103$ & \cellcolor[HTML]{D6EFEE}$69$ & \cellcolor[HTML]{DAF1F1}$41$ & \cellcolor[HTML]{D0EDE9}$177$ & \cellcolor[HTML]{FEE4D8}$0.6$ & \cellcolor[HTML]{FEEBE1}$0.8$ & \cellcolor[HTML]{FEEBE1}$0.8$ & \cellcolor[HTML]{E7F6F9}$1.1$ & \cellcolor[HTML]{E6F5F9}$1.1$ & \cellcolor[HTML]{E7F6F9}$1.1$ & \cellcolor[HTML]{E7F5F9}$1.1$\\
\hspace{.5em}\texttt{Walk-B} & \cellcolor[HTML]{AADED2}$4829$ & \cellcolor[HTML]{B2E1D7}$2556$ & \cellcolor[HTML]{BAE4DB}$1457$ & \cellcolor[HTML]{BBE5DC}$1301$ & \cellcolor[HTML]{B5E3D9}$2073$ & \cellcolor[HTML]{D0EDE9}$179$ & \cellcolor[HTML]{D3EEEB}$117$ & \cellcolor[HTML]{FEE0D2}$0.5$ & \cellcolor[HTML]{FDD4C2}$0.3$ & \cellcolor[HTML]{FDD3C0}$0.3$ & \cellcolor[HTML]{FDD8C8}$0.4$ & \cellcolor[HTML]{FDD8C8}$0.4$ & \cellcolor[HTML]{FDDDCE}$0.4$ & \cellcolor[HTML]{FDD5C3}$0.3$\\
\addlinespace
\rowcolor{gray!15}\textsc{Quadruped} \textit{(avg)} & $-2$ & $-3$ & $-14$ & $-22$ & $-32$ & $-35$ & $-41$ & $3.2$ & $3.1$ & $2.8$ & $2.4$ & $2.0$ & $1.6$ & $1.7$\\
\hspace{.5em}\texttt{Jump} & \cellcolor[HTML]{FEF1EB}$-2$ & \cellcolor[HTML]{FEEBE2}$-5$ & \cellcolor[HTML]{FEE5D9}$-8$ & \cellcolor[HTML]{FDD7C7}$-18$ & \cellcolor[HTML]{FCCCB7}$-39$ & \cellcolor[HTML]{FCCAB6}$-41$ & \cellcolor[HTML]{FCCCB7}$-36$ & \cellcolor[HTML]{8ED3C1}$3.4$ & \cellcolor[HTML]{98D7C8}$3.2$ & \cellcolor[HTML]{9DD9CB}$3.1$ & \cellcolor[HTML]{B3E2D8}$2.7$ & \cellcolor[HTML]{D1EDEA}$2.0$ & \cellcolor[HTML]{DAF1F1}$1.6$ & \cellcolor[HTML]{D5EFED}$1.9$\\
\hspace{.5em}\texttt{Run} & \cellcolor[HTML]{EFF8FB}$1$ & \cellcolor[HTML]{FEF0E9}$-2$ & \cellcolor[HTML]{FDDBCB}$-14$ & \cellcolor[HTML]{FDD4C2}$-21$ & \cellcolor[HTML]{FDCFBC}$-31$ & \cellcolor[HTML]{FDCFBC}$-30$ & \cellcolor[HTML]{FCC9B4}$-43$ & \cellcolor[HTML]{93D5C5}$3.4$ & \cellcolor[HTML]{98D7C8}$3.3$ & \cellcolor[HTML]{B0E1D6}$2.8$ & \cellcolor[HTML]{C0E7DF}$2.5$ & \cellcolor[HTML]{D2EEEA}$2.0$ & \cellcolor[HTML]{D7F0EE}$1.8$ & \cellcolor[HTML]{D9F0EF}$1.7$\\
\hspace{.5em}\texttt{Stand} & \cellcolor[HTML]{EFF8FB}$1$ & \cellcolor[HTML]{FEEFE7}$-3$ & \cellcolor[HTML]{FDDCCD}$-13$ & \cellcolor[HTML]{FDD1BF}$-26$ & \cellcolor[HTML]{FDD0BD}$-28$ & \cellcolor[HTML]{FCCDB9}$-34$ & \cellcolor[HTML]{FCCAB6}$-41$ & \cellcolor[HTML]{90D4C2}$3.4$ & \cellcolor[HTML]{9AD8C9}$3.2$ & \cellcolor[HTML]{ABDFD3}$2.9$ & \cellcolor[HTML]{C2E8E0}$2.4$ & \cellcolor[HTML]{CBEBE5}$2.2$ & \cellcolor[HTML]{D9F0EF}$1.7$ & \cellcolor[HTML]{D7F0EE}$1.8$\\
\hspace{.5em}\texttt{Walk} & \cellcolor[HTML]{FEE6DB}$-7$ & \cellcolor[HTML]{FEF0E9}$-2$ & \cellcolor[HTML]{FDD5C3}$-20$ & \cellcolor[HTML]{FDD1BF}$-25$ & \cellcolor[HTML]{FDCEBA}$-32$ & \cellcolor[HTML]{FCCDB9}$-36$ & \cellcolor[HTML]{FCC9B4}$-43$ & \cellcolor[HTML]{B5E3D9}$2.7$ & \cellcolor[HTML]{B2E1D7}$2.7$ & \cellcolor[HTML]{CBEBE5}$2.2$ & \cellcolor[HTML]{CFEDE8}$2.1$ & \cellcolor[HTML]{D9F0EF}$1.7$ & \cellcolor[HTML]{DFF2F4}$1.4$ & \cellcolor[HTML]{DAF1F1}$1.6$\\
\addlinespace
\rowcolor{gray!15}\textsc{Walker} \textit{(avg)} & $-1$ & $88$ & $65$ & $13$ & $46$ & $560$ & $224$ & $1.4$ & $1.4$ & $1.3$ & $1.4$ & $1.5$ & $1.8$ & $1.9$\\
\hspace{.5em}\texttt{Flip} & \cellcolor[HTML]{EFF9FB}$0$ & \cellcolor[HTML]{EEF8FB}$1$ & \cellcolor[HTML]{E8F6F9}$6$ & \cellcolor[HTML]{FEF1EA}$-2$ & \cellcolor[HTML]{E0F3F5}$14$ & \cellcolor[HTML]{C3E8E1}$660$ & \cellcolor[HTML]{C6E9E3}$536$ & \cellcolor[HTML]{E0F3F5}$1.4$ & \cellcolor[HTML]{E0F3F5}$1.4$ & \cellcolor[HTML]{DBF1F1}$1.6$ & \cellcolor[HTML]{D8F0EF}$1.7$ & \cellcolor[HTML]{D9F1F0}$1.7$ & \cellcolor[HTML]{D8F0EF}$1.7$ & \cellcolor[HTML]{D1EDEA}$2.0$\\
\hspace{.5em}\texttt{Run} & \cellcolor[HTML]{EFF9FB}$0$ & \cellcolor[HTML]{EFF9FB}$0$ & \cellcolor[HTML]{FDD4C2}$-23$ & \cellcolor[HTML]{FDDCCD}$-12$ & \cellcolor[HTML]{E0F3F5}$15$ & \cellcolor[HTML]{C6E9E3}$551$ & \cellcolor[HTML]{CFEDE8}$225$ & \cellcolor[HTML]{C3E8E1}$2.4$ & \cellcolor[HTML]{C6E9E3}$2.3$ & \cellcolor[HTML]{D9F0EF}$1.7$ & \cellcolor[HTML]{D6EFED}$1.8$ & \cellcolor[HTML]{DCF1F2}$1.6$ & \cellcolor[HTML]{CFEDE8}$2.1$ & \cellcolor[HTML]{CDECE7}$2.2$\\
\hspace{.5em}\texttt{Stand} & \cellcolor[HTML]{FEEAE0}$-5$ & \cellcolor[HTML]{CCECE6}$351$ & \cellcolor[HTML]{CEEDE8}$258$ & \cellcolor[HTML]{DFF2F4}$20$ & \cellcolor[HTML]{D4EFEC}$105$ & \cellcolor[HTML]{D1EDEA}$164$ & \cellcolor[HTML]{D2EEEB}$135$ & \cellcolor[HTML]{FEE6DB}$0.6$ & \cellcolor[HTML]{FEE9E0}$0.7$ & \cellcolor[HTML]{FEECE3}$0.8$ & \cellcolor[HTML]{E8F6F9}$1.0$ & \cellcolor[HTML]{E4F4F8}$1.3$ & \cellcolor[HTML]{E3F4F7}$1.3$ & \cellcolor[HTML]{E2F4F7}$1.3$\\
\hspace{.5em}\texttt{Walk} & \cellcolor[HTML]{EFF9FB}$0$ & \cellcolor[HTML]{EFF9FB}$1$ & \cellcolor[HTML]{DFF2F4}$19$ & \cellcolor[HTML]{D9F1F0}$48$ & \cellcolor[HTML]{D9F0EF}$51$ & \cellcolor[HTML]{C0E7DF}$864$ & \cellcolor[HTML]{EFF9FB}$0$ & \cellcolor[HTML]{E7F5F9}$1.1$ & \cellcolor[HTML]{E7F5F9}$1.1$ & \cellcolor[HTML]{E7F5F9}$1.1$ & \cellcolor[HTML]{E4F4F8}$1.2$ & \cellcolor[HTML]{DFF2F4}$1.5$ & \cellcolor[HTML]{CFEDE8}$2.1$ & \cellcolor[HTML]{CFEDE8}$2.1$\\
\addlinespace
\bottomrule
\end{tabular}
  
\end{table}

Table~\ref{tab:combined_results_proto_rnd_appdx} reports the percentage improvement of LK over its zero-shot performance (left) and sample efficiency gains over the flat TD3 (right) on the \texttt{Proto} and \texttt{RND} datasets. The qualitative conclusions from the main paper hold across both datasets, though the magnitude of gains varies.

In \textsc{Cheetah} and \textsc{Walker}, hierarchical composition remains most beneficial at small basis sizes, with improvements exceeding 1000\% on several tasks at $k=1$ and diminishing returns as $k$ increases. The \texttt{Proto} dataset tends to yield larger raw improvements than \texttt{RND}. An exception is \textsc{Walker} under \texttt{RND}, where improvements at small $k$ are near zero, analogous to the pattern observed in \textsc{Quadruped}. In \textsc{Quadruped}, LK fails to consistently improve over the zero-shot solution across both datasets, with the effect most pronounced under \texttt{RND}, where improvements are negative across settings.

The learning curves across all three datasets (\texttt{APS}: Figure~\ref{fig:lk-curve-aps}, \texttt{Proto}: Figure~\ref{fig:lk-curve-proto}, \texttt{RND}: Figure~\ref{fig:lk-curve-rnd}) reveal a consistent pattern: LK converges to a stable policy rapidly, often within a fraction of the interaction budget of the flat agent. The key takeaway is that \textit{without access to rewarded offline transitions, LK improves over the zero-shot solution and stabilizes quickly}. When the zero-shot initialization is already strong---as in \textsc{Quadruped} across all datasets---this fast convergence translates directly into strong AUC ratios (Table~\ref{tab:combined_results_proto_rnd_appdx}, right). In \textsc{Cheetah} under \texttt{RND}, LK improves over the zero-shot solution but does not consistently match the flat agent's cumulative returns, suggesting that dataset coverage limits how much structure the Laplacian basis can extract. This points to a broader conclusion: the offline dataset shapes the expressivity of the learned basis, and with it, the ceiling on what hierarchical composition can achieve.

\begin{figure}
\centerline{\includegraphics[width=\textwidth]{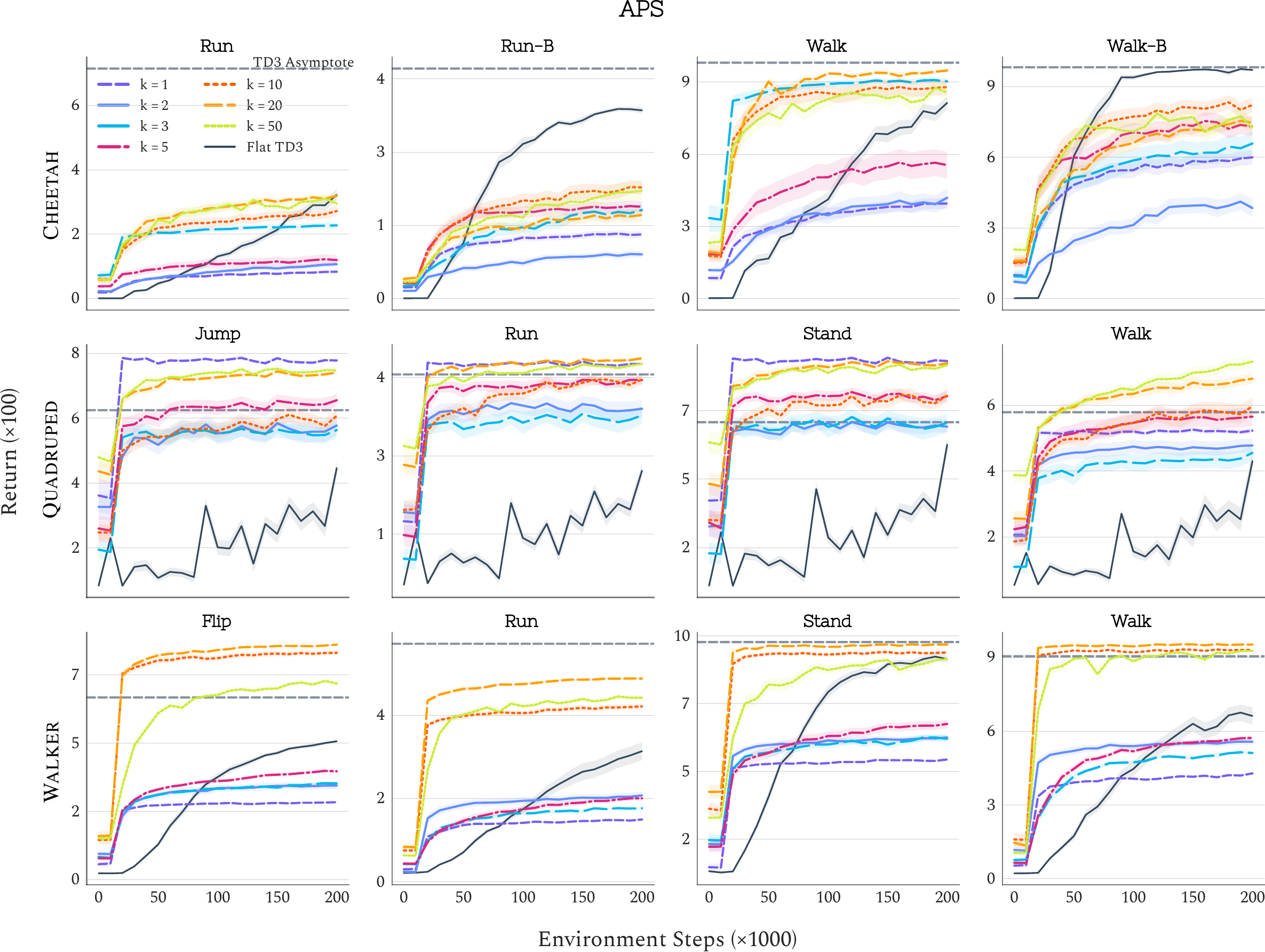}}
\caption{Evaluation curves for LK pre-trained on the \texttt{APS} dataset with varying basis sizes, and for a flat TD3 agent. Shaded region indicates the standard error estimated over 30 independent runs.}
\label{fig:lk-curve-aps}
\vspace{-5pt}
\end{figure}

\begin{figure}
\centerline{\includegraphics[width=\textwidth]{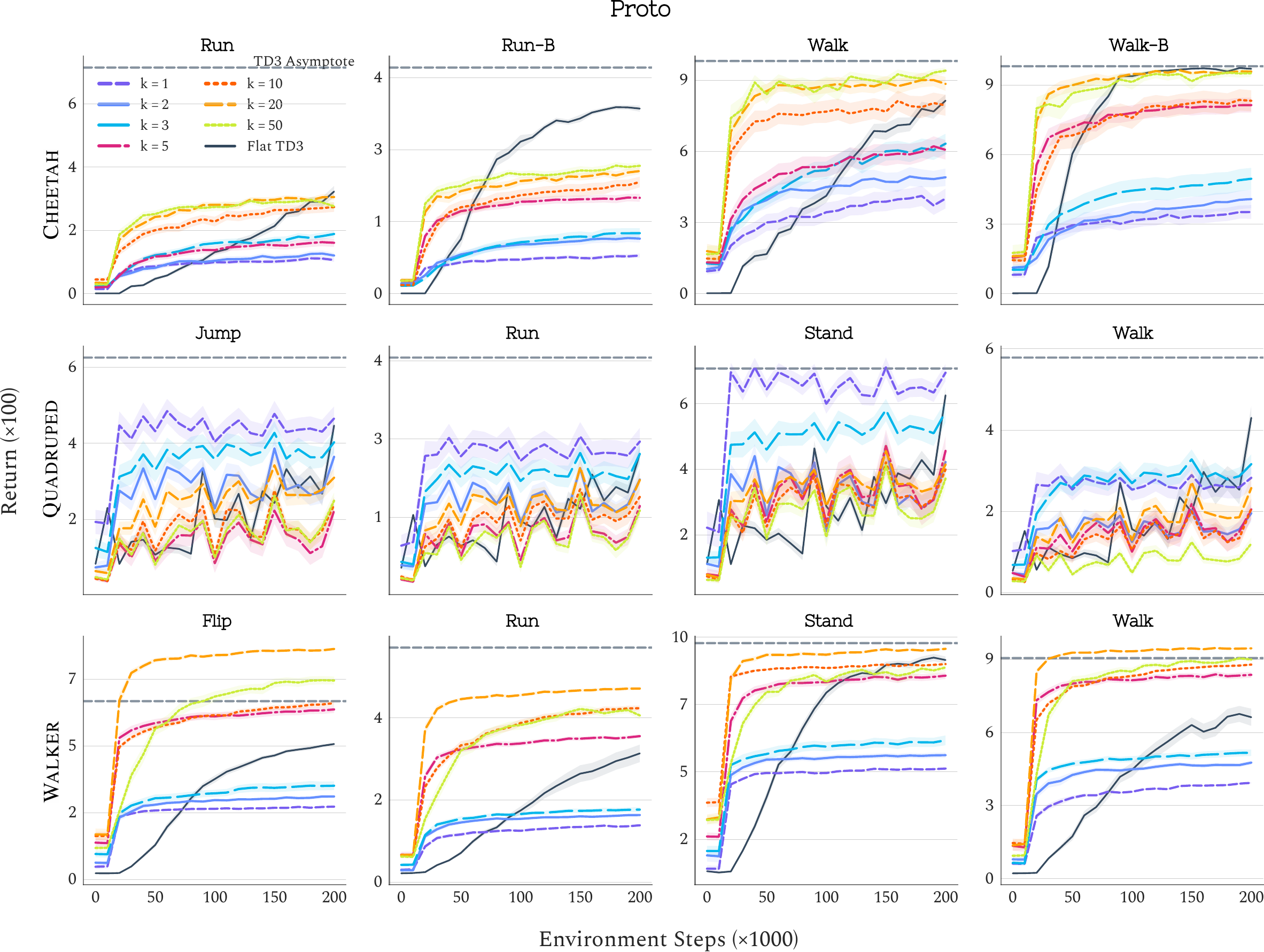}}
\caption{Evaluation curves for LK pre-trained on the \texttt{Proto} dataset with varying basis sizes, and for a flat TD3 agent. Shaded region indicates the standard error estimated over 30 independent runs.}
\label{fig:lk-curve-proto}
\vspace{-10pt}
\end{figure}

\begin{figure}
\centerline{\includegraphics[width=\textwidth]{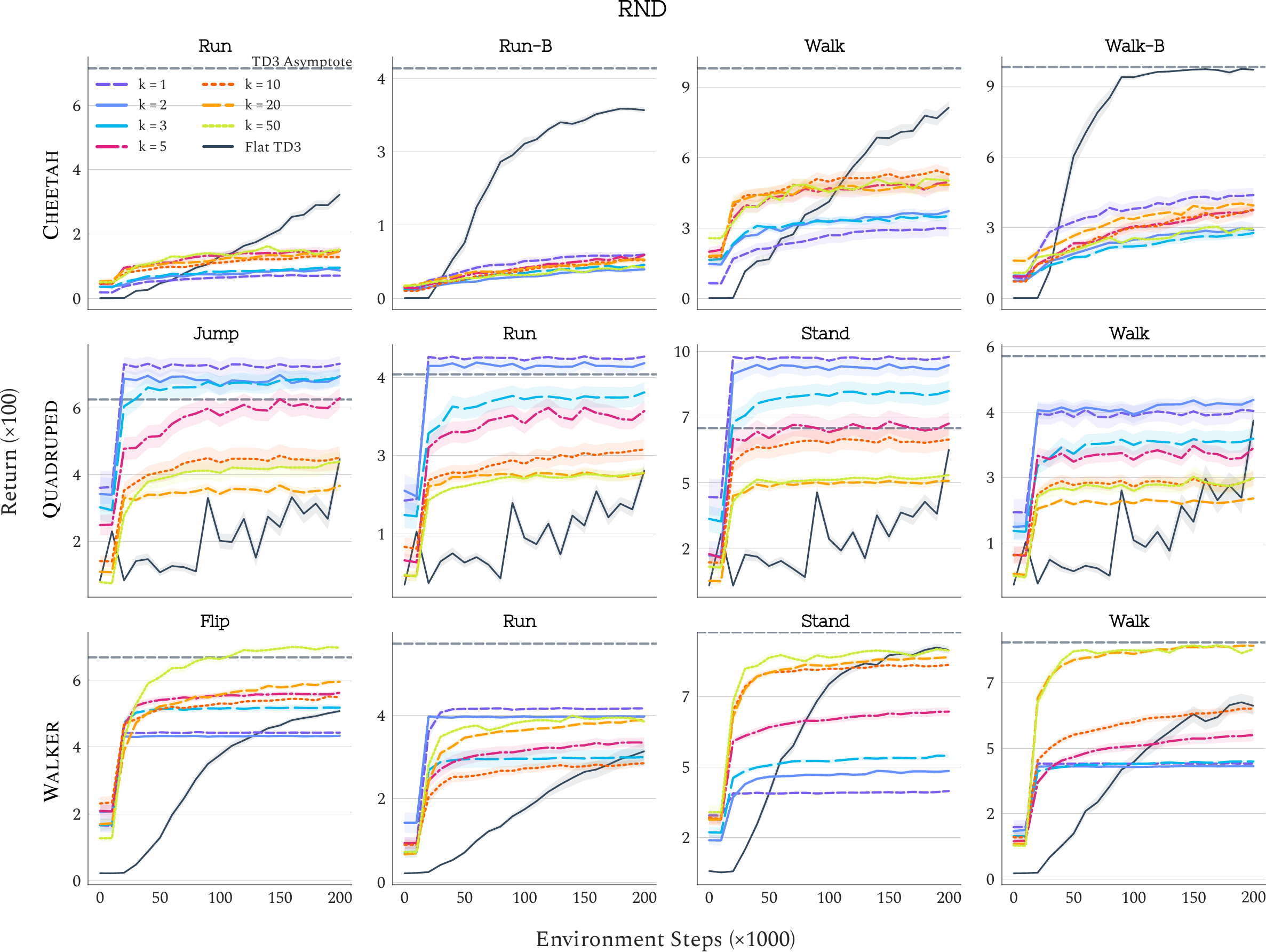}}
\caption{Evaluation curves for LK pre-trained on the \texttt{RND} dataset with varying basis sizes, and for a flat TD3 agent. Shaded region indicates the standard error estimated over 30 independent runs.}
\label{fig:lk-curve-rnd}
\end{figure}

\clearpage

\begin{figure}
\centerline{\includegraphics[width=\textwidth]{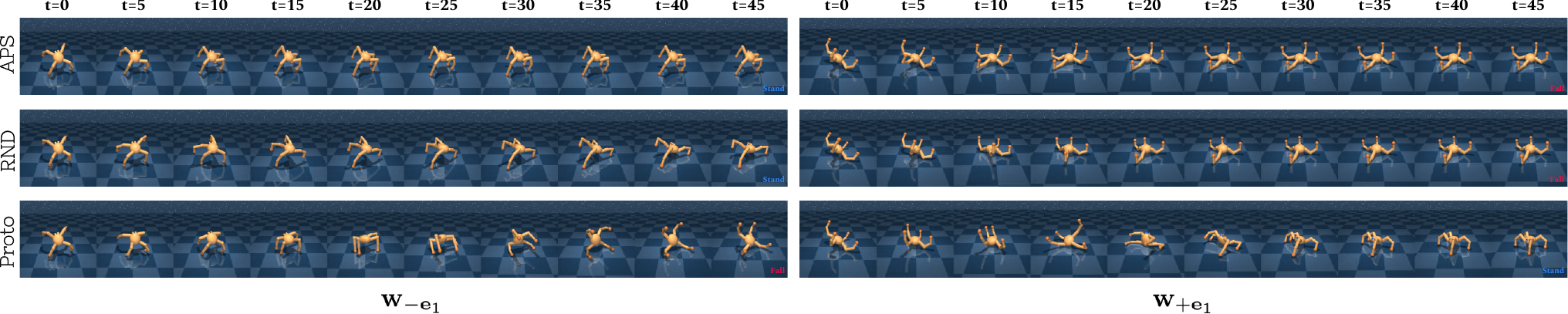}}
\caption{Behaviors induced by $\mathbf{w}_{\mathbf{-e_{1}}}$ and $\mathbf{w}_{\mathbf{+e_{1}}}$ in the \textsc{Quadruped} domain in all three datasets.}
\label{fig:quadruped-more}
\end{figure}

\section{The Curious Case of \textsc{Quadruped}}
\label{sec:quadruped}

In this section, we analyze the behaviors induced by the weight vectors $\mathbf{w}_{-\mathbf{e}_1}$ and $\mathbf{w}_{+\mathbf{e}_1}$ in \textsc{Quadruped} to characterize their role in task performance. Across all three datasets and independent runs, these vectors consistently induce two dominant behaviors: maintaining an upright posture or collapsing onto the back (see Figure~\ref{fig:quadruped-more}). From arbitrary initial configurations, conditioning on $\mathbf{w}_{-\mathbf{e}_1}$ or $\mathbf{w}_{+\mathbf{e}_1}$ modulates the torso’s elevation relative to the ground, effectively controlling upright versus collapsed postures. \looseness=-1

Coincidentally, all four \textsc{Quadruped} tasks include an uprightness term that measures alignment between the torso’s vertical axis and the global vertical axis. The \texttt{Stand} task is defined solely by this term and encourages recovery from arbitrary initial orientations. The \texttt{Jump} task modulates the uprightness term with a height-dependent factor that increases with center-of-mass elevation and saturates at a target height. The \texttt{Walk} and \texttt{Run} tasks further scale the uprightness term by a forward-velocity factor that saturates at task-specific target speeds, with \texttt{Walk} favoring lower-speed locomotion and \texttt{Run} favoring higher-speed locomotion.

This shared uprightness structure aligns closely with the behaviors induced by $\mathbf{w}_{-\mathbf{e}_1}$ and $\mathbf{w}_{+\mathbf{e}_1}$, providing a direct explanation for the strong performance observed when using only the leading eigenvector as the basis. Figure~\ref{fig:quadruped-weight-trend} illustrates this effect by showing the estimated zero-shot weight vectors across tasks and basis sizes on the \texttt{APS} dataset. Across all conditions, the coefficient corresponding to the first basis component consistently dominates, indicating that it captures the primary reward-aligned direction. Additional eigenvectors contribute comparatively little and primarily introduce representational overhead, which correlates with the observed degradation in zero-shot performance as the basis size increases.

Consistent with this interpretation, Figure~\ref{fig:lk-lineplot-appendix} shows that average performance decreases as task rewards deviate from a pure uprightness objective, following the ordering \texttt{Stand} $>$ \texttt{Jump} $>$ \texttt{Walk} $\approx$ \texttt{Run}. Tasks that introduce additional height- or velocity-dependent modulation require structure beyond the dominant basis direction, reducing the effectiveness of a minimal representation.

\begin{figure*}
\centerline{\includegraphics[width=\textwidth]{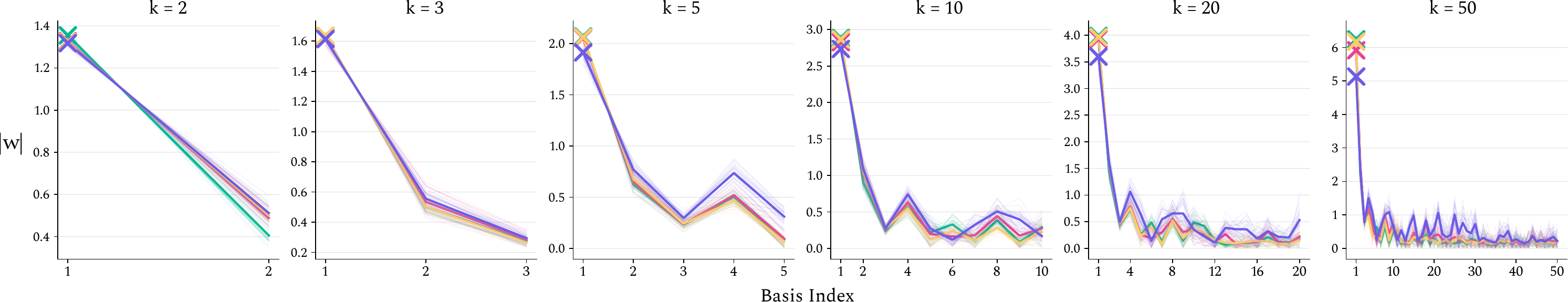}}
\caption{Absolute zero-shot weight magnitudes for all tasks, grouped by basis size. Each subplot shows all task-specific weights with low opacity, together with the mean absolute weight plotted as a thick line. Note that the entries in the $x$-axis are different indices for the weight magnitudes, there is no notion of time progression or size within a single panel; we connect the datapoints with a line to ease visualization. The first basis coefficient is indicated by a cross.}
\label{fig:quadruped-weight-trend}
\end{figure*}

\newpage

\section{Effect of Option Termination Horizon}
\label{sec:termination-appx}

Options in LK are terminated after a fixed horizon $t_{\text{term}}$, a design choice we acknowledge as a limitation of the current framework. We perform a small study to understand the sensitivity of performance to this hyperparameter. Table~\ref{tab:tterm} reports the area under the learning curve (AUC) for LK on \textsc{Cheetah} with the \texttt{APS} dataset ($k=50$, 10 seeds) across $t_{\text{term}} \in \{5, 10, 15\}$.

\begin{table}[h]
    \centering
    \caption{AUC for LK on \textsc{Cheetah} (\texttt{APS}, $k=50$) across termination horizons $t_{\text{term}}$.}
    \label{tab:tterm}
    \setlength{\tabcolsep}{6pt}
    \small
    \begin{tabular}{l S[table-format=5.0] S[table-format=5.0] S[table-format=5.0] S[table-format=5.0]}
        \toprule
        $t_{\text{term}}$ & {\texttt{Run}} & {\texttt{Run-B}} & {\texttt{Walk}} & {\texttt{Walk-B}} \\
        \midrule
        5  & 5256 & 3249 & 15690 & 11309 \\
        10 & 5412 & 3103 & 14223 & 13742 \\
        15 & 5967 & 2888 & 17018 & 12345 \\
        \bottomrule
    \end{tabular}
\end{table}

No single value of $t_{\text{term}}$ dominates across all tasks, yet all three settings achieve strong performance, indicating that LK is not highly sensitive to this choice. That said, fixed-horizon termination is a simplification: principled termination conditions---learned or state-dependent---could better align option boundaries with task structure and likely improve performance further, which we leave for future work. \looseness=-1

\section{Meta-Policy Parameterization: USFA vs GPE\&GPI}
\label{sec:meta-policy-param-appdx}

We detail the two meta-policy variants introduced in Section~\ref{sec:exp_alegre}. \textit{LK-USFA} uses a continuous parameterization: the meta-policy outputs a weight vector $\mathbf{w} \in \mathbb{R}^k$, and the USFA directly executes the policy $\pi_{\mathbf{w}}$ optimal for the induced reward $r_{\mathbf{w}}(s) = \mathbf{w}^\top \bm{\phi}(s)$. \textit{LK-GPE\&GPI} uses a discrete parameterization: given a library of $2k$ basis policies induced by $\{\mathbf{e}_1, \ldots, \mathbf{e}_k, -\mathbf{e}_1, \ldots, -\mathbf{e}_k\}$, the meta-policy applies Generalized Policy Evaluation (GPE) and Generalized Policy Improvement (GPI) to select an action:
\begin{equation}
    \pi_{\text{GPI}}(s) = \argmax_a \max_{i=1}^{2k}\; \bm{\psi}^{\pi_i}(s, a)^\top \mathbf{w},
\end{equation}
where $\bm{\psi}^{\pi_i}$ are the SFs of the $i$-th basis policy. The positive and negative basis vectors together ensure that for any task direction $\mathbf{w}$, the basis contains a policy aligned with each signed component. \looseness=-1

The continuous variant can in principle express any direction in $\mathbb{R}^k$ and is theoretically guaranteed to recover the optimal policy for the induced reward. The discrete variant must approximate this through two successive steps---policy evaluation under each basis policy, followed by improvement via maximization---accumulating error at each stage. Most prior work \citep{barreto2019option, alegre2025constructing, carvalho2023combining} adopts the discrete formulation; this ablation tests whether the continuous alternative offers a meaningful advantage. Note that discrete composition is not straightforward in continuous action spaces, as it requires an $\argmax$ over actions---which is why we restrict this comparison to the discrete-action \textsc{Item-Collector} domain.

Figure~\ref{fig:okb_comp} supports this analysis. At $k=20$, both formulations perform similarly; however, the gap widens as $k$ increases to $30$ and $40$. We attribute this to the compounding effect of representation error: larger basis dimensions make the learning problem harder, yielding less accurate SF estimates. These inaccuracies are then amplified across the two steps of GPE\&GPI, whereas the continuous formulation absorbs them into a single direct optimization.

\newpage

\section{Laplacian Keyboard: Architecture and Algorithm}
\label{sec:lk-arch_algo_appdx}

We present Python-style pseudocode for training the Laplacian encoder, the USFA, and the meta-policy. Low-level implementation details are omitted for clarity. \looseness=-1

\vspace{5pt}

\begin{lstlisting}[
    language=Python,
    basicstyle=\ttfamily\small,
    backgroundcolor=\color{codebg},
    commentstyle=\color{commentcolor}\itshape,
    keywordstyle=\color{keywordcolor}\bfseries,
    stringstyle=\color{stringcolor},
    numbers=left,
    numberstyle=\tiny\color{commentcolor},
    numbersep=8pt,
    frame=single,
    framexleftmargin=15pt,
    xleftmargin=40pt,
    linewidth=0.85\textwidth,
    tabsize=4,
    breaklines=true,
    showstringspaces=false,
    morekeywords={self, def, return}
]
def train_laplacian_encoder(self):

    # sample contrastive state pairs
    obs, pos_obs, neg_obs = self.sample_laplacian_batch()

    # encode states
    phi      = self.encoder(obs)
    phi_pos  = self.encoder(pos_obs)
    phi_neg  = self.encoder(neg_obs)

    # graph-smoothness loss
    graph_loss = mean((phi - phi_pos) ** 2)

    # orthogonality-enforcing loss
    error = compute_ortho_error(phi, phi_neg)

    dual_loss = sum(self.dual_vars.detach() * error)
    barrier_loss = sum(self.barrier_coefs.detach() * error.pow(2))

    # optimize encoder
    loss = graph_loss + dual_loss + barrier_loss
    self.encoder_optimizer.step()
\end{lstlisting}

\vspace{5pt}

\begin{lstlisting}[
    language=Python,
    basicstyle=\ttfamily\small,
    backgroundcolor=\color{codebg},
    commentstyle=\color{commentcolor}\itshape,
    keywordstyle=\color{keywordcolor}\bfseries,
    stringstyle=\color{stringcolor},
    numbers=left,
    numberstyle=\tiny\color{commentcolor},
    numbersep=8pt,
    frame=single,
    framexleftmargin=15pt,
    xleftmargin=40pt,
    linewidth=0.85\textwidth,
    tabsize=4,
    breaklines=true,
    showstringspaces=false,
    morekeywords={self, def, return}
]
def train_usfa(self):

    # sample transition batch
    obs, action, next_obs, done = self.sample_usfa_batch()
    w = self.sample_task_weight()

    # compute Laplacian features
    phi_next = self.encoder(next_obs)

    # sample next action
    next_action = self.actor(next_obs, w).sample()

    # compute target successor features
    psi_target = phi_next + (1 - done) * self.gamma * self.target_psi(next_obs, next_action, w)

    # predict successor features
    psi_pred = self.psi(obs, action, w)

    # TD loss
    psi_loss = mse(psi_pred - psi_target.detach())
    self.psi_optimizer.step()

    # optionally update actor and target networks
    if update_actor:
        actor_loss = -(psi_pred * w).sum(-1).mean()
        self.actor_optimizer.step()
        self.update_target_networks()

\end{lstlisting}

\newpage

\begin{lstlisting}[
    language=Python,
    basicstyle=\ttfamily\small,
    backgroundcolor=\color{codebg},
    commentstyle=\color{commentcolor}\itshape,
    keywordstyle=\color{keywordcolor}\bfseries,
    stringstyle=\color{stringcolor},
    numbers=left,
    numberstyle=\tiny\color{commentcolor},
    numbersep=8pt,
    frame=single,
    framexleftmargin=15pt,
    xleftmargin=40pt,
    linewidth=0.85\textwidth,
    tabsize=4,
    breaklines=true,
    showstringspaces=false,
    morekeywords={self, def, return}
]
def train_downstream(self):
    obs = self.env.reset()
    done = False
    option_return = 0
    option_length = 0
    option_obs = obs

    # sample initial option
    w = self.meta_actor(obs)

    while not done:

        # act using USFA-conditioned policy
        action = self.actor(obs, w).sample()
        next_obs, reward, done = self.env.step(action)

        # accumulate option return
        option_return += (self.gamma ** option_length) * reward
        option_length += 1

        # terminate option
        if option_length >= self.max_option_length or done:
            self.replay_buffer.store(option_obs, w, next_obs, option_return, self.gamma**option_length, done)
            w = self.meta_actor(next_obs)
            option_return = 0
            option_length = 0
            option_obs = next_obs

        obs = next_obs

        # update meta-policy
        if update_meta:
            batch = self.replay_buffer.sample()
            self.update_critic(batch)
            self.update_actor(batch)
            self.update_target_networks()
\end{lstlisting}

\begin{figure}
\centerline{\includegraphics[width=.75\columnwidth]{lk_detail.pdf}}
\caption{The Laplacian Keyboard Framework. During pre-training, the agent first learns graph Laplacian eigenvectors through the Laplacian Encoder \circled{1}, then uses these as state representations to learn a continuous library of options via a USFA \circled{2}. In the downstream phase, a meta-policy \circled{3} learns to stitch these base policies, enabling rapid and sample-efficient adaptation to new tasks. Numbers indicate the learning sequence of each module.}
\label{fig:lk_detail_appdx}
\end{figure}

\end{document}

%% file: math.tex
\usepackage{amsmath,amsfonts,bm,amsthm}
\usepackage{ stmaryrd }   % For stop-gradient brackets

\def\eqref#1{equation~\ref{#1}}
\def\1{\bm{1}}

\DeclareMathAlphabet{\mathsfit}{\encodingdefault}{\sfdefault}{m}{sl}
\SetMathAlphabet{\mathsfit}{bold}{\encodingdefault}{\sfdefault}{bx}{n}

\DeclareMathOperator*{\argmax}{arg\,max}

\renewcommand{\vec}[1]{\mathbf{#1}}
\newcommand{\mat}[1]{\mathbf{#1}}

\newcommand{\pp}{\mat{P}_{\!\pi}}

\newcommand{\ei}{\vec{e}}

\newcommand{\bp}{\bm{\phi}}